
\documentclass{article}

\usepackage{microtype}
\usepackage{graphicx}
\usepackage{subfigure}
\usepackage{booktabs} 

\usepackage{hyperref}



\usepackage[accepted]{icml2018}

\usepackage{amsthm}
\usepackage{amsmath}
\usepackage{amssymb}
\usepackage{bm}
\usepackage{bbm}
\usepackage{bbold}
\usepackage{mathtools}
\usepackage{enumitem}
\usepackage{breqn}
\usepackage{titletoc}
\usepackage{macros}
\usepackage{multirow}


\def\mmax{SA\log_2 \( \frac{8T}{SA} \)}

\newcommand{\MainResult}[3]{
#1
If \ubevs is run on an $H$-horizon MDP with $S$ states and $A$ actions where the successor states $s'$ is sampled from a fixed distribution $\mu$ then with probability at least $1-\delta$ the regret is bounded by the minimum between:
\begin{equation}
#2
 \underbrace{\tilde O \(\sqrt{SAT} + \frac{S^2AH^2\sqrt{H}}{\sqrt{\mu_{min}}} + \frac{SAH^2}{\mu_{min}}\)}_{\text{CMAB Analysis}}
\end{equation}
and
\begin{equation}
#3
 \underbrace{\tilde O \( H\sqrt{SAT} + S^2AH^{2} + S\sqrt{S}AH^3\)}_{\text{MDP Analysis}}
\end{equation}
jointly for all timesteps $T$.
}

\newcommand{\propbandit}[1]{
\begin{proposition}
#1
Assume that in every state there exists an action $a^*(s)$ that achieves the maximum instantaneous reward $r^* = \max_{s,a}r(s,a)$, i.e., $r^* = \bar r(s,a^*(s)) \; \forall s$ and that the MDP has a finite maximum mean hitting time $T_{hit}$ under any policy. For any initial state $(s)$ and any $T \geq 1$, with probability at least $1-\delta-o(\delta)$ the regret of \ucrl on such MDP is bounded by: 
\begin{equation*}
\tilde O\( \sqrt{SAT} + \sqrt{ST_{hit}}(SA)^2 T_{hit}^{M^{\pi^*}} + DSA\).
\end{equation*}
\end{proposition}
}

\newcommand{\propbanditzeroreward}[1]{
\begin{proposition}
#1
Assume that in every state there exists an action $a^*(s)$ that achieves the maximum reward $r^* = \max_{s,a}r(s,a)$, i.e., $r^* = \bar r(s,a^*(s)), \; \forall s$ and suppose that \ucrl uses the true expected rewards in its internal computations. For any initial state $(s)$ and any $T \geq 1$ the regret of \ucrl on such MDP is exactly zero jointly for all timesteps.
\end{proposition}
}

\newcommand{\propcontextualbandit}[1]{
\begin{proposition}
#1
Assume that \ucrl is run on an MDP where $P(s' \mid s,a) = \mu(s'), \; \forall s,a,s'$, i.e., the successors are sampled from a fixed underlying distribution. 
For any initial state $s \in S$ and any $T \geq 1$, with probability $1-\delta-o(\delta)$ the regret of \ucrl is bounded by 
\begin{equation*}
\tilde O\( S\sqrt{AT} + DS^3A^2\sqrt{T_{hit}} \).
\end{equation*}
\end{proposition}
}

\newcommand{\ubev}[0]{\textsc{Ubev} }
\newcommand{\ubevs}[0]{\textsc{Ubev-S} }
\newcommand{\ucrl}[0]{\textsc{Ucrl2} }
\newcommand{\sqrtucrlPsbar}[1]{\ensuremath\sqrt{\frac{14S\log(\frac{2At_k}{\delta})}{\max\{1,N_k(#1,\pi_k (#1))\}}}}
\newcommand{\sqrtucrlRsbar}[1]{\ensuremath\sqrt{\frac{7\log(\frac{2At_k}{\delta})}{2\max\{1,N_k(#1,\pi_k(#1))\}}}}

\def\sumall{\ensuremath{\sum_{ \substack{k \in G \\t \in [H] \\ (s,a)}}}}
\def\MC{\ensuremath{\mathcal M_{C } }}

\newcommand{\magicphik}[1]{\phi \(\overline s_{#1},\pi_k(\overline s_{#1},t)\)}

\newcommand{\phiplus}[0]{4\sqrt{S}H^2 \max_{(s',t')} \phi(s',\pi_k(s',t'))}

\newcommand{\pseudoregret}[1]{\ensuremath{\Delta}}

\icmltitlerunning{Identifying Bandit Structure in MDPs}

\allowdisplaybreaks[4]

\begin{document}

\twocolumn[
\icmltitle{Problem Dependent Reinforcement Learning Bounds Which Can Identify Bandit Structure in MDPs}



\icmlsetsymbol{equal}{*}

\begin{icmlauthorlist}
\icmlauthor{Andrea Zanette}{stanford}
\icmlauthor{Emma Brunskill}{stanford}
\end{icmlauthorlist}

\icmlaffiliation{stanford}{Stanford University, Stanford, California}
\icmlcorrespondingauthor{Andrea Zanette}{zanette@stanford.edu}

\icmlkeywords{Reinforcement Learning, Bandits, PAC, Regret}

\vskip 0.3in
]



\printAffiliationsAndNotice{}  

\begin{abstract}
In order to make good decision under uncertainty an agent must learn from observations. To do so, two of the most common frameworks are Contextual Bandits and Markov Decision Processes (MDPs). 
In this paper, we study whether there exist algorithms for the more general framework (MDP) which automatically provide the best performance bounds for the specific problem at hand without user intervention and without modifying the algorithm. In particular, it is found that a very minor variant of a recently proposed reinforcement learning algorithm for MDPs already matches the best possible regret bound $\tilde O (\sqrt{SAT})$ in the dominant term if deployed on a tabular Contextual Bandit problem despite the agent being agnostic to such setting.
\end{abstract}
\section{Introduction}
For reinforcement learning (RL) to realize its huge potential benefit, we must create reinforcement learning algorithms that do not require extensive expertise and problem-dependent fine-tuning to achieve high performance in a particular domain of interest. Much exciting research is advancing this vision, such as alleviating the need for feature engineering using deep neural networks, and making it easier to specify the desired behavior through inverse reinforcement learning and reward 
design \cite{Mnih2013,Apprenticeship}. Here instead we consider the theoretical aspects of a key but understudied issue: what decision process framework to use, and how that choice impacts the resulting performance. 

In reinforcement learning (learning to make good decisions under uncertainty), there are three common frameworks that allow learning from observations: multi-armed bandits (MABs) and contextual MABs, Markov decision processes (MDPs) and partially observable MDPs (POMDPs). Bandits assume that the actions taken do not impact the next state, MDPs assume actions impact the next state but the state is a sufficient statistic of prior history, and POMDPs assume that the true Markov state is latent, and in general the next state can depend on the full history of prior actions and observations. It is known that these three decision process frameworks differ significantly in computational complexity and statistical efficiency. 
In particular, when the decision process model is unknown and an agent must perform reinforcement learning, existing theoretical bounds illustrate that the best results possible in bandits, contextual bandits, MDPs and POMDPs may significantly differ. 
For example there exist \textit{upper bounds} on the regret of algorithms for 
discrete state and action contextual bandits which scale as $\tilde O(\sqrt{SAT})$ (see \cite{BC12}) and \textit{lower bounds} on the regret of algorithms for episodic discrete state and action MDPs which scale as $\Omega(\sqrt{HSAT})$ \cite{OV16}, here indicating there is a gap of at least a factor of $\sqrt{H}$ between the regret possible in the two settings. 
Such work suggests that to obtain good performance, it is of significant interest to have algorithms that either implicitly or explicitly
use the simplest setting (of bandits, MDPs, POMDPs) that captures the domain of interest 
during reinforcement learning, 

As (outside of simulated domains) the true decision process properties are unknown, choosing whether to model a problem using the bandit, MDP or POMDP frameworks is typically far from trivial. A software engineer working on a product recommendation engine may not know whether the product recommendations have a significant impact on the customers' later states and preferences, such that the engineer should model the problem as a MDP instead of a bandit in order to be able to use a reinforcement learning algorithm to learn a policy that best maximizes revenue. This may result in requiring prohibitive amounts of interaction data to learn a good decision policy. Ideally an engineer should be able to write down a problem in a very general way and be confident that the algorithm will inherit the best performance of the underlying domain and problem. 

Here we work to create RL algorithms with strong setting / framework dependent bounds. Our hope is to create reinforcement learning methods that perform as well as the underlying process allows but without the algorithm user having to specify in advance the process framework (bandit / MDP / POMDP) which is often unknown. In doing so we hope to alleviate the burden on the users, allowing them to inherit the benefits of more complex policies if the situation allows, without performance being harmed if the true process is simpler than the one specified.

Precisely here we consider the challenge of creating MDP algorithms that can inherit the best properties of tabular contextual bandits if the RL algorithm is operating in such setting. 
Our aim is similar in motivation to problem dependent theoretical analyses, that seek to provide tighter performance bounds by including an explicit dependence on some property of the domain, such as the mixing rate \cite{AO06}, or the difference in rewards or optimal state-action values \cite{Auer02,AG12,EMM06}.  However, existing problem dependent research has not yet enabled strong process-dependent learning bounds (e.g. bounds that depend on whether the domain is a MDP or a bandit). 
Prior problem dependent results are limited for our setting of interest because they typically make restrictive assumptions on the subset of Markov decision processes for which they hold (e.g., highly mixing for \cite{AO06}), require the user to explicitly provide domain properties \cite{Bartlett09} or the provided bound does not yield strong guarantees when the MDP algorithm is deployed on a simpler bandit process \cite{Maillard14}. A work with more similar intentions to ours is \cite{Bubeck2012} where the authors propose an algorithm whose regret is optimal both for adversarial rewards and for stochastic rewards; by contrast here we consider a change in the learning framework (MDPs vs Bandits).

 Perhaps the most closely related work is the recently introduced contextual decision process research \cite{JKAL17}. 
The authors provide probably approximately correct (PAC) results for generic CDPs as a function of their 
Bellman rank; however their resulting bounds for tabular MDPs and CMABs do not provide the best or near-best PAC bounds (both have a worse dependence on the horizon). 
In contrast our work considers an algorithm for which we can achieve a near-optimal performance on MDPs and the best regret upper bound in the dominant terms for tabular contextual bandits. 

In other words, we can use an MDP RL algorithm and if the real world is a bandit, the MDP RL algorithm automatically scales in performance about as well as a near-optimal algorithm that was designed \textit{specifically} for bandit problems. Precisely, a small variant of \ubev \cite{Dann17} yields a $\sqrt{SAT}$ regret term if the MDP it is acting in is actually a tabular contextual bandit regardless of the prescribed MDP horizon $H$. Prior work in provably efficient RL algorithms \cite{Jaksch10,Dann15,Azar17} provide regret or PAC guarantees which depend on the MDP horizon $H$ or diameter $D$ for episodic and infinite-horizon MDPs, respectively. $H$ is the MDP horizon and is specified to the algorithm. Therefore these analyses do not imply that ``$H$'' can be removed if the $H$-horizon MDP is actually generated from a CMAB problem.

The key insight of our analysis is to show that due to the bandit structure, the optimistic value function converges to the optimal value function fast enough that the regret bound terms due to the MDP framework contribute only to lower order terms with a logarithmic time dependence.  In the rest of the paper, we first outline the setting, introduce the algorithm, and then provide our theoretical results and proofs before discussing future directions. 


\begin{algorithm*}[t!]
   \caption{\ubevs for Stationary Episodic MDPs}
   \label{main:ubevs}
\begin{algorithmic}[1]
   \STATE \textbf{Input}: failure tolerance $\delta \in (0, 1]$
   \STATE $n(s,a) = l(s,a) = m(s',s,a) = 0 \;\; \forall s',s,a\in \mathcal S \times \mathcal S \times \mathcal A; \quad \tilde V_{H+1}(s) = 0 \;\; \forall s \in \mathcal S; \quad   \phi^+ = 0$
   \FOR{$k=1,2,\dots$}
      \FOR{$t=H,H-1,\dots,1$}
     	 \FOR{$s \in \mathcal S$}
	 \FOR{$a \in \mathcal A$}
   \STATE $\phi = \sqrt{\frac{2\ln\ln(\max\{e,n(s,a)\}) + \ln(27HSA/\delta)}{n(s,a)}}$ \\
   \STATE $\hat r = \frac{l(s,a)}{n(s,a)}, \hat V_{next} = \frac{m(\cdot,s,a)^\top \tilde V_{t+1}}{n(s,a)}$
   \STATE $Q(a) = \min \{1,\hat r + \phi \} + \min \{ \max_s \tilde V_{t+1}(s),\hat V_{next} + \min \{ (H-t),  (\rng \tilde V_{t+1}+\phi^+) 
  \}\phi  \}$
  \ENDFOR
   \STATE $\pi_k(s,t) = \argmax_a Q(a); \quad \tilde V_t(s) = Q(\pi_k(s,t)); \quad \phi^+ = \max\{4\sqrt{S}H^2\phi(s,\pi_k(s,t)),\phi^+ \} $
   \ENDFOR
   \ENDFOR
   \STATE
   $s_1 \sim p_0$
   \FOR{t=1,\dots H}
   \STATE $a_t = \pi_k(s_t,t);\quad r_t \sim p_R(s_t,a_t);\quad s_{t+1}\sim p_P(s_t,a_t)$
   \STATE $n(s_t,a_t)++;\quad m(s_{t+1},s_t,a_t)++; \quad l(s_t,a_t) += r_t$
   \ENDFOR
   \ENDFOR
\end{algorithmic}
\end{algorithm*}

\section{Notation and Setup}
A finite horizon MDP is defined by a tuple $\mathcal M = \langle\ \mathcal S,\mathcal A,p,r, H \rangle\ $, where $\mathcal S$ is the state space, $\mathcal A$ is the action space, $p: \mathcal S \times \mathcal A \times \mathcal S \rightarrow \R$ is the transition function where  $p (s'\mid s, a)$ is the probability of transitioning to state $s'$ after taking action $a$ in state $s$. The mean reward function $r : \mathcal{S} \times \mathcal{A} \rightarrow  \R \in [0,1] $ is the average instantaneous reward collected upon playing action $a$ in state $s$, denoted by $r(s,a)$. 
The agent interacts with the environment in a sequence of episodes $k \in [1,\ldots,K]$, each of a horizon of $H$ time steps before resetting. 
As the optimal policy in finite-horizon domains is generally time-step-dependent, on each episode the agent selects a $\pi_k$ which maps states $s$ and timesteps $t$ to actions. 
A policy $\pi_k$ induces a value function for every state $s$ and timestep $t\in[H]$ defined as $V_t^{\pi_k}(s_t) = \E \sum_{i = t}^H r(s_i,\pi_k(s_i,i))$
which is the expected return until the end of the episode (the expectation is over the states $s_i$ encountered in the MDP). We denote the optimal policy with $\pi^*$ and its value function as $V^*_t(s)$ and define the range of a vector $V$: $\rng V \stackrel{def}{=} \max_s V(s) - \min_s V(s)$.

There are multiple formal measures of RL algorithm performance. We focus on regret, which is frequently used in RL and very widely used in bandit research. 
Let the 
regret of the algorithm up to episode $K$ from any sequence of starting states $s_{1k},s_{2k},\dots$ be:
\begin{align}
\text{Regret}(K) \stackrel{def}{=} \sum_k V_1^*(s_{1k}) -  V_1^{\pi_k}(s_{1k}).
\end{align}
Since the policies depend on the history of observations, the regret is a random variable. Here we focus on a high probability bound on the regret. 

We use the $\tilde O(\cdot )$ notation to indicate a quantity that depends on $(\cdot)$ up to a $\polylog$ expression of a quantity at most polynomial in $S,A,T,K,H,\frac{1}{\delta}$.  
We use the $\lesssim, \gtrsim, \simeq$ notation to mean $\leq, \geq, =$, respectively, up to a numerical constant.

\section{Mapping Contextual Bandits to MDPs}
Tabular contextual multi-armed bandits are a generalization of the multiarmed bandit problem. They prescribe a set of contexts or states and the expected reward of an action depends on the state and action, $r(s,a)$. They can be alternatively viewed as a simplification of MDPs in which the next state is independent of the prior state and action. Let \MC{} be an episodic MDP with horizon $H$ which is actually a contextual bandit problem: the transition probability is identical $p(s'|s,a) = \mu(s')$ for all states and actions, where $\mu$ is a fixed stationary distribution over states. Note that when doing RL in a \MC{} the agent does not know the transition model and therefore does not know the MDP can be viewed as a contextual bandit.

\section{UBEV for Stationary MDPs}
In this section we introduce the \ubevs algorithm 
which is a slight variant of \ubev \cite{Dann17}, a recent PAC algorithm designed for episodic non-stationary MDPs. 
Here we focus on a regret analysis due to its popularity in the bandit literature. 


A large fraction of the literature for episodic MDPs considers stationary environments. If the MDP is truly stationary (i.e., with time-independent rewards and transition dynamics) then this assumption can be leveraged to produce $\sqrt{H}$-tighter regret bounds. For the purpose of our analysis on CMABs the rationale for removing the non-stationarity from \ubev is the following: if the MDP is transient the agent cannot ``assume'' that the same state $s$ gives identical expected rewards $r(s,a)$ if visited at different timesteps, say $t_1$ and $t_2$. As a consequence, it would treat the same ``context'' $s$ visited at $t_1$ and $t_2$ as different entities. 
We therefore adapt \ubev to handle stationary MDPs and modify the exploration bonus slightly. 
This second change preserves the original bounds in the MDP setting and enables us to obtain stronger bounds in the bandit setting.
We call the resulting algorithm \ubevs (Algorithm \ref{main:ubevs}). Lines $4$ through $13$ refers to the planning step and lines $14$ through $18$ to the execution of the chosen policy in the MDP. 
\ubevs is a minor variant of \ubev and it can be analyzed in the same way as the original \ubev to obtain a regret bound whose leading order term is $\tilde O (H\sqrt{SAT})$ on a generic (albeit stationary) MDP\footnote{Notice the difference in notation. Here $T$ is the time elapsed; in \cite{Dann17} it is the number of episodes elapsed. The two differ by a factor of $H$.}. We outline such analysis in the appendix (in section \ref{ubevs:StationaryResult}). 
The main difference from \ubev in \cite{Dann17} and \textsc{ubev-S} here is the stated stationarity of the MDP. 
In stationary MDPs the transition dynamics $p(s'|s,a)$ and rewards $r(s,a)$ are assumed to be time-independent for a fixed $(s,a)$ pair.  This allows data aggregation for the same state-action pair $(s,a)$ from different  timesteps $t$ in order to estimate the rewards and system dynamics, as  
seen in lines $2,7,8,17$. 
As a result, \ubevs is more efficient on stationary environments because it does not need to estimate $r$ and $p$ for different timesteps but it will not handle transient MDPs as \ubev. This ultimately leads to a saving of $\sqrt{H}$ in the leading order regret term if the MDPs is time-invariant. 

The other minor change is to make the exploration bonus (Algorithm \ref{main:ubevs} Line 9)  depend on the range of the optimistic value function $(\rng {\tilde V_{t+1}^{}}) \phi(s,a)$ (defined in Algorithm \ref{main:ubevs}) of the successor states. In contrast \ubev used a fixed overestimate $(H-t)\phi(s,a)$. A bonus dependent on the actual $\tilde V_{t+1}^{\pi_k}$ is the typical approach used in similar works (e.g. \cite{Jaksch10,Dann15,Azar17}). The rationale here is that if $\rng \tilde V_{t+1}$ is very small then the agent is not ``too uncertain'' about that transition, hence the exploration bonus should be smaller. Although this does not improve the MDP  regret bound (which only considers a worst-case scenario), better practical performance should be expected and it will have important benefits for our bandit analysis. For the exploration bonus to be valid we require that optimism be guaranteed on any MDP. We ensure this by adding a correction term $\phi^+$ which varies in different $(s,a)$ pairs and is an estimate of the uncertainty of $\rng{\tilde V_{t+1}^{}}$. The correction term $\phi^+$ is continuously updated in line $11$ of Algorithm \ref{main:ubevs} so that $\phi^+$   keeps track of the largest bonus / confidence interval which is related to the least visited $(s,a)$ pair (in subsequent states) under the agent's policy. In the appendix (section \ref{ubevs:OptimismAppendix}) we carefully justify why this choice guarantees optimism on any MDP. This change does not affect the  regret bound for stationary MDPs since our exploration bonus is still upper  bounded by $H \phi(s,a)$ (this is the upper bound used to obtain the result on MDPs).

\section{Theoretical Result}
In this section we present the main result of the paper, which is an upper bound on the regret of \ubevs on \MC{}.
\begin{theorem}
\MainResult{\label{main:MainResult}}{\label{main:CMABresult}}{\label{main:AnyMDPresult}}
\end{theorem}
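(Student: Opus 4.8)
\emph{Proof plan.} Since the bound is the minimum of two expressions it suffices to establish each one separately on a single high-probability event. The ``MDP Analysis'' bound $\tilde O(H\sqrt{SAT} + S^2AH^2 + S\sqrt{S}AH^3)$ is simply the regret guarantee of \ubevs on an arbitrary \emph{stationary} MDP: as noted in the text and carried out in the appendix (Section~\ref{ubevs:StationaryResult}), this follows by re-running the \ubev analysis of \cite{Dann17} with the stationary exploration bonus, and a contextual bandit \MC{} is a stationary MDP, so the bound applies verbatim. All the novelty is in the ``CMAB Analysis'' bound, which the rest of the plan addresses.

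The starting point is the per-episode regret decomposition of \cite{Dann17} (Lemma E.8), which writes $(V_1^* - V_1^{\pi_k})(s_0)$ as a $w_{tk}$-weighted sum over $t\in[H]$ and visited $(s,a)$ of four families of terms: (i) the reward error $|(\tilde r_k - r)(s,a,t)|$, (ii) the optimism--transition error $|(\tilde P_k - \hat P_k)(s,a,t)\tilde V_{t+1}^{\pi_k}|$, (iii) the value--transition error $|(\hat P_k - P)(s,a,t)V_{t+1}^*|$, and (iv) a cross term $|(\hat P_k - P)(s,a,t)(V_{t+1}^* - \tilde V_{t+1}^{\pi_k})|$. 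I would sum this over $k$ directly, rather than route it through the PAC-to-regret conversion used for \ubev, which would cost an avoidable $\sqrt{H}$ in the leading term. Terms (i): because the kernel is the fixed $\mu$, the optimal policy is greedy with respect to $r(s,a)$ at every $(s,t)$, so the reward bonuses behave exactly as the regret of a tabular contextual bandit; with $\phi(s,a)\lesssim\sqrt{\text{polylog}/n_k(s,a)}$ and Cauchy--Schwarz over counts, $\sum_{k,t,(s,a)}w_{tk}(s,a)\phi(s,a)\lesssim\sqrt{SA\sum_{s,a}n_K(s,a)}\cdot\text{polylog}=\tilde O(\sqrt{SAT})$, the leading term. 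Terms (iv) and the ``easy part'' of (iii) are bounded as in \cite{Dann17}, except that here the variance budget is $O(H)$ rather than $O(H^3)$: in a \MC{} one has $V_{t+1}^*(s)=\max_a r(s,a) + \mathbb{E}_{s'\sim\mu}V_{t+2}^*(s')$, so the future term is state-independent, $\rng V_{t+1}^*\le 1$, and $\sum_t\mathrm{Var}_{s'\sim\mu}(V_{t+1}^*(s'))\lesssim H$. This collapses the would-be $H\sqrt{SAT}$ contributions of (iii)--(iv) into $\tilde O(\sqrt{SAT})$ plus lower-order terms.

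The genuinely new work is terms (ii), which carry the inflated bonus $(\rng\tilde V_{t+1}^{\pi_k} + \phi^{+})\phi(s,a)$ --- inflated because a priori $\rng\tilde V_{t+1}^{\pi_k}$ and the optimism correction $\phi^{+}=4\sqrt{S}H^2\max_{(s',t')}\phi(s',\pi_k(s',t'))$ can be of order $H$, which is precisely where the $H\sqrt{SAT}$ of the MDP bound originates. My plan is to split episodes into a \emph{burn-in} phase and the rest. Using the optimism guarantee of \ubevs (Appendix, Section~\ref{ubevs:OptimismAppendix}) together with the concentration of $\hat r_k$ and of the empirical successor distribution, and the recursive structure of the Bellman backup, I would show that once every $(s,a)$ has been visited at least a $\text{polylog}$-times-$\mathrm{poly}(S,H)$ number of times --- enough that $\phi^{+}\lesssim 1$ --- the optimistic values converge to $V_t^*$ layer by layer and hence $\rng\tilde V_{t+1}^{\pi_k}=O(1)$, at which point (ii) is down to $O(\phi(s,a))$ per visit and sums like (i). It then remains to (a) count the burn-in episodes and (b) bound the inflated bonuses accumulated during them. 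For (a): the bottleneck is the state with $\mu(s)=\mu_{min}$, which is the current state only an $H\mu_{min}$ fraction of an episode in expectation, so pushing all of its action-counts above the threshold costs $\tilde O(SA\,\mathrm{poly}(H)/\mu_{min})$ episodes, each contributing at most $H$ to the regret, giving the $\tilde O(SAH^2/\mu_{min})$ term. For (b): bounding $\sum_{k\in\mathrm{burn\text{-}in}}\sum_{t,(s,a)}w_{tk}(s,a)\cdot H\cdot\phi(s,a)$ by Cauchy--Schwarz against the burn-in length yields the $\tilde O\!\big(S^2AH^2\sqrt{H}/\sqrt{\mu_{min}}\big)$ term. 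Combining the four contributions gives the CMAB bound; taking the minimum with the MDP bound, and noting that the single event on which all the (anytime, law-of-iterated-logarithm) confidence inequalities hold is fixed and independent of $T$, yields a bound valid jointly for all $T$ (with the trivial $\mathrm{Regret}\le T$ covering the vacuous small-$T$ regime).

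I expect the main obstacle to be the burn-in analysis for terms (ii): controlling the convergence $\tilde V_t^{\pi_k}\to V_t^*$ \emph{simultaneously across all $H$ layers} while $\phi^{+}$ is itself still shrinking, and doing so without circularity --- the counts that make $\phi^{+}$ small depend on a policy that is guaranteed to explore precisely \emph{because} the bonuses are still large. Quantifying the coupling between which states get visited (a function of the unknown $\mu$ and of the current, possibly still-exploring policy) and the rate at which the optimistic range contracts is the delicate point; once that is pinned down, counting the burn-in length and summing the bonuses is routine.
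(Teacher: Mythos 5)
Your skeleton (optimism, the four-term decomposition summed directly over episodes, $\rng V^*_t \le 1$ for the estimation term, a $\mu_{min}$-rate visitation argument for the lower-order terms, plus the non-good/burn-in episodes charged separately as in Lemma \ref{ubevs:RegretNonGoodEpi}) matches the paper. The genuine gap is in your treatment of the optimism--transition term (ii), which is exactly the term whose control is the paper's main contribution. You propose a hard phase split: wait until every relevant count exceeds a $\mathrm{poly}(S,H)$ threshold so that $\phi^+\lesssim 1$, then argue a layer-by-layer convergence $\tilde V_t^{\pi_k}\to V_t^*$ to conclude $\rng\tilde V_{t+1}^{\pi_k}=O(1)$ afterwards. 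You yourself flag that this simultaneous-layer convergence, intertwined with the still-shrinking $\phi^+$, is unresolved --- and as stated it is the crux, not a routine detail: nothing in the algorithm forces $\tilde V_t^{\pi_k}$ to converge to $V_t^*$ (optimistic values need not converge in general, cf.\ the remark in the main text), so the post-burn-in claim has no proof route in your plan. Moreover the accounting around the split is off: making $\phi^+\lesssim 1$ requires minimum counts of order $SH^4$, hence a burn-in of order $SAH^4/\mu_{min}$ episodes (after a pigeonhole over which $(s,a)$ pair is currently minimal), which is not compatible with charging the burn-in as the $\tilde O(SAH^2/\mu_{min})$ term; that term in the paper comes from the much weaker ``non-good episode'' condition (counts below their expectation), not from a $\phi^+\lesssim 1$ threshold.

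The missing idea is Lemma \ref{main:RngVBound}: on \MC{} one never needs $\tilde V_t^{\pi_k}$ to converge to $V_t^*$ at all. Writing the backup at the argmax and argmin states of $\tilde V_t^{\pi_k}$ and subtracting, the unknown future term cancels up to $(\hat p_k(\overline s_{tk},\cdot)-\hat p_k(\underline s_{tk},\cdot))^\top\tilde V_{t+1}^{\pi_k}$, and since \emph{both} empirical kernels concentrate around the \emph{same} $\mu$, H\"older with the crude bound $\|\tilde V_{t+1}^{\pi_k}\|_\infty\le H$ gives $\rng\tilde V_t^{\pi_k}\le 1+\tilde O\bigl(H\sqrt{S}/\sqrt{\min_{(s',t')}n_k(s',\pi_k(s',t'))}\bigr)$ at \emph{every} episode, with no phase split and no circularity (on \MC{} state visitation is policy-independent anyway; the only policy dependence is which action's counter matters, handled by the pigeonhole of Lemmas \ref{lem:wn} and \ref{ubevs:onebyn}). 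Plugging this anytime bound into term (ii) and Cauchy--Schwarz gives directly the $\tilde O(\sqrt{SAT})$ leading term plus the $\tilde O(S^2AH^2\sqrt{H}/\sqrt{\mu_{min}})$ lower-order term (Lemma \ref{ubevs:tildevterm}); the cross term (iv) is then handled by H\"older and the $\phi^+$ overestimate (Lemma \ref{ubevs:lowterm}) rather than by the variance argument you sketch, which you would otherwise also need to make precise. In short: your plan is repaired, and considerably simplified, by replacing the burn-in/convergence step with this one-step range bound; without it, the central claim of your argument remains unproven.
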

Notice that equation \ref{main:CMABresult} is obtained by the analysis that we discuss in this main paper 
while equation \ref{main:AnyMDPresult} is the regret bound that \ubevs would achieve in \emph{any} episodic stationary MDP (detailed the appendix). Since \MC{} is an MDP, the tighter bound applies. 

The significance of this result is that the leading order term matches the lower bound $\Omega(\sqrt{SAT})$ previously established for tabular contextual bandit problems. The lower order terms of Equation \ref{main:CMABresult} depend upon $\mu_{min} \stackrel{def}{=} \min_s \mu(s)$, which is the lowest probability of visiting any given context. 

Put differently, for $T$ sufficiently large and not too small $\mu_{min}$, the leading order term dominates and the bound matches the lower bound for contextual bandits up to $\polylog(\cdot)$ factor. Problems where a large $T$ is most critical for the regret are those where the optimal actions are barely distinguishable from the suboptimal ones. Our result shows that in this case there is little penalty for using a more general approach like \ubevs which is designed for MDPs and is unaware of the problem structure. By the time the agent has identified which actions have maximum instantaneous reward the structure of the underlying problem is already clear to the agent.
The key insight to obtain the result of theorem \ref{main:MainResult} is to examine the rate at which the optimistic value function $\tilde V_t^{\pi_k}$ converges to the true one $V_t^*$. While such convergence does not necessarily occur in a generic MDP, the highly mixing nature of contextual bandits ensures that enough information is collected in every context / state that convergence of the value function does occur for all states. The rate of convergence is high enough that the ``price'' for using an MDP algorithm on CMABs gets transferred to lower order terms without any $T$ dependence.

\section{Analysis on \MC{}}
We begin our analysis by looking at the main source of regret for \ubevs when deployed on a generic MDP. We do this to identify the leading order term contributing to the regret. Next, we provide a tighter analysis of such term when the process is a CMAB.

Optimistic RL agents work by computing with high probability an optimistic value function $\tilde V_1^{\pi_k}(s_0)$ for any starting state $s_0$. This overestimates the true optimal value function $V_1^{*}(s_0)$ and allows to estimate the regret of an agent by evaluating the same policy on two different MDPs which get closer and closer to each other as more data is collected:
\begin{align*}
& \text{Regret}(K) \stackrel{def}{=} \sum_k V_1^*(s_0) -  V_1^{\pi_k}(s_0) \\
& \stackrel{Opt.}{\leq} \sum_k \tilde V_1^{\pi_k}(s_0) -  V_1^{\pi_k}(s_0) \\
& \stackrel{}{=} \underbrace{ \sum_{k\leq K}\sum_{t \in [H]}\sum_{s,a} w_{tk}(s,a) \( \tilde r_k(s,a) - r(s,a) \)}_{\tilde O \(\sqrt{SAT} \)} \\
& + \underbrace{\sum_{k\leq K}\sum_{t \in [H]}\sum_{s,a} w_{tk}(s,a) \( \tilde p_k(s,a) - p(s,a) \)^\top \tilde V_{t+1}^{\pi_k}}_{\tilde O \(H\sqrt{SAT} \)}
\numberthis{\label{main:RegretDecomposition}}
\end{align*}
In the above expression the last equality follows from a standard decomposition, see for example lemma E.15 in \cite{Dann17}. We indicated with $\tilde p_k(s,a)$ the optimistic transition probability vector implicitly computed by \ubevs along with the optimistic value function $\tilde V_t^{\pi_k}$. Here $w_{tk}(s,a)$ is the  probability of visiting state $s$ and taking action $a$ there at timestep $t$ of the $k$-th episodes. Finally,  ${\tilde r_k(s,a)}$ is the instantaneous optimistic reward collected upon taking action $a$ in state $s$. 

Below each term we have reported the regret that $\ubevs$ would obtain on a generic MDP. 
Estimating the rewards alone implies a regret contribution of order $\tilde O (\sqrt{SAT})$, which is what a (near) optimal CMAB algorithm achieves. Thus, to obtain a tighter bound on \MC{} we need to address the regret due to the transition dynamics which is of order $\tilde O (H\sqrt{SAT})$ for \ubevs{} on a generic MDP. A careful examination of the proof for that regret bound of that term reveals that $H$ appears because it is a deterministic upper bound on the range of $ \tilde V_t^{\pi_k}$ and $V_t^{*}$. The optimistic value function is a random variable, but under the assumption that $r(s,a) \in [0,1]$ the agent maintains an optimistic estimate of such reward with the same constraint $\tilde r(s,a) \in [0,1]$, leading to $\rng \tilde V_t^{\pi_k} \leq H$ when the rewards are summed over $H$ timesteps; likewise $V_t^{*} \leq H$. As we show next, \MC{} is characterized by  $\rng V_t^* \leq 1$, which means there is not a big advantage for being in one context (i.e., state) versus another. This happens because the agent's current mistake only affects the instantaneous reward; the agent can never make ``costly mistakes'' that lead it to a sequence of contexts / states with low payoff as a result of that mistake as may happen on a generic MDP. Unfortunately this consideration need not be true in the ``optimistic'' MDP that the agent computes, that is, it is not true that $\rng \tilde{V}^{\pi_k} \leq 1$.  However, we can relate $\rng \tilde V_t^{\pi_k}$ to $\rng V_t^*$ and show that $\rng \tilde V_t^{\pi_k}$ is of order $1$ plus a quantity that shrinks fast enough so that the regret contribution due to uncertain system dynamics is of the same order as the rewards plus a term that does not depend on $\sqrt{T}$.

\textbf{Remark}: the convergence of the optimistic value function to the true one is not a property generally enjoyed by these algorithms, see for example \cite{Bartlett09} for an extensive discussion for \ucrl-style approaches in the infinite horizon case. However, said convergence does occur here due to the highly mixing nature of the contextual bandit problem.

\subsection{Range of the True Value Function}
On \MC{} a policy that greedily maximizes the instantaneous reward is optimal. Let $\overline s_t \stackrel{def}{=} \argmax V^*_t(s)$ and $\underline s_t \stackrel{def}{=} \argmin V^*_t(s)$ and recall that the transition dynamics $P(s,a) = \mu$ depends nor on the action $a$ nor on the current state $s$:  
  \begin{equation}
  \label{main:VStarBackup}
    \begin{cases}
      V^*_t(\overline s_t) = \max _a \( r( \overline s_t,a) + \mu^\top V^*_{t+1} \) \\
      V^*_t(\underline s_t) = \max _a \( r( \underline s_t,a) + \mu^\top V^*_{t+1} \)
    \end{cases}
  \end{equation}
Since the rewards are bounded $r(\cdot,\cdot) \in [0,1]$ subtracting the two equations in \ref{main:VStarBackup} yields:
\begin{equation}
\label{main:RngVStar} 
\rng V^*_t \stackrel{}{=}  
\max_a r( \overline s_t,a) -  \max _a r( \underline s_t,a) \leq 1.
\end{equation}

\subsection{Range of the Optimistic Value Function}
Now we relate $\rng \tilde V_t^{\pi_k} $ to $\rng V_t^{*} $ by a quantity that is naturally shrinking.
Our reasoning assumes that we are outside the failure event so that confidence intervals hold (confidence intervals are essentially the same as \ubev and are discussed in the appendix in section \ref{ubevs:FailureEventAppendix}). We use the notation $n_k(s,a)$ to indicate the number of visit to the $(s,a)$ pair at the beginning of the $k$-th episode.
\begin{lemma}
\label{main:RngVBound}
If \ubevs is run on \MC{} then outside of the failure event it holds that:
\begin{equation}
\label{main:EqRngVBound}
\rng \tilde V^{\pi_k}_t \leq 1  + \tilde O \( \frac{H\sqrt{S}}{\sqrt{\min_{(s',t')} n_k(s',\pi_k(s',t'))}} \).
\end{equation}
\end{lemma}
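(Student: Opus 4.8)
The plan is to argue entirely outside the failure event and to compare the optimistic value at the best and worst states of stage $t$ directly through the one-step update in Line~9 of Algorithm~\ref{main:ubevs}. Set $\overline s = \argmax_s \tilde V_t^{\pi_k}(s)$, $\underline s = \argmin_s \tilde V_t^{\pi_k}(s)$, and let $\bar a = \pi_k(\overline s,t)$, $\underline a = \pi_k(\underline s,t)$ be the actions the planner selects there, so that $\tilde V_t^{\pi_k}(\overline s)=Q(\bar a)$ and $\tilde V_t^{\pi_k}(\underline s)=Q(\underline a)$ with $Q$ the quantity of Line~9 evaluated at $s=\overline s$, resp.\ $s=\underline s$. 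The structural point that makes everything work is that both pairs $(\overline s,\bar a)$ and $(\underline s,\underline a)$ have the form $\bigl(s',\pi_k(s',t)\bigr)$, so by definition of $n_{\min}:=\min_{(s',t')}n_k\bigl(s',\pi_k(s',t')\bigr)$ we have $n_k(\overline s,\bar a)\ge n_{\min}$ and $n_k(\underline s,\underline a)\ge n_{\min}$. (If $n_{\min}=0$ the claimed bound is vacuous, so assume $n_{\min}\ge 1$.) Hence, outside the failure event, every confidence radius evaluated at these two pairs is controlled: the reward radius and the bonus $\phi(\cdot)$ are $\tilde O\bigl(1/\sqrt{n_{\min}}\bigr)$, and the transition-estimation error satisfies $\|\hat p_k(\cdot)-\mu\|_1=\tilde O\bigl(\sqrt{S/n_{\min}}\bigr)$, using the \ubevs confidence bounds of Section~\ref{ubevs:FailureEventAppendix}.

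Next I would sandwich the two values using the form of $Q$ in Line~9. For the upper bound, dropping the two outer minimizations gives $\tilde V_t^{\pi_k}(\overline s)=Q(\bar a)\le \bigl(\hat r_k(\overline s,\bar a)+\phi(\overline s,\bar a)\bigr)+\bigl(\hat V_{next}(\overline s,\bar a)+\beta\,\phi(\overline s,\bar a)\bigr)$, where $\beta=\min\{H-t,\ \rng\tilde V_{t+1}^{\pi_k}+\phi^+\}\le H$, so the correction term $\phi^+$ never surfaces in the final bound. For the lower bound, the empirical reward lies in $[0,1]$ so $\min\{1,\hat r_k(\underline s,\underline a)+\phi\}\ge \hat r_k(\underline s,\underline a)$, and $\hat V_{next}(\underline s,\underline a)=\hat p_k(\underline s,\underline a)^\top\tilde V_{t+1}^{\pi_k}\le \max_{s'}\tilde V_{t+1}^{\pi_k}(s')$ while $\beta\phi\ge 0$, so the second minimization is $\ge \hat V_{next}(\underline s,\underline a)$; thus $\tilde V_t^{\pi_k}(\underline s)=Q(\underline a)\ge \hat r_k(\underline s,\underline a)+\hat V_{next}(\underline s,\underline a)$. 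Subtracting, $\rng\tilde V_t^{\pi_k}\le \bigl(\hat r_k(\overline s,\bar a)-\hat r_k(\underline s,\underline a)\bigr)+\bigl(\hat V_{next}(\overline s,\bar a)-\hat V_{next}(\underline s,\underline a)\bigr)+(1+\beta)\,\phi(\overline s,\bar a)$.

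It remains to bound the three pieces. The reward difference is at most $1$ because both empirical rewards are in $[0,1]$; this is the empirical counterpart of $\rng V_t^*\le 1$ from Equation~\ref{main:RngVStar}, and it is the sole origin of the leading ``$1$'' in Equation~\ref{main:EqRngVBound}. The last term is $(1+\beta)\phi(\overline s,\bar a)\le (1+H)\,\tilde O\bigl(1/\sqrt{n_{\min}}\bigr)=\tilde O\bigl(H/\sqrt{n_{\min}}\bigr)$. The middle term is where the contextual-bandit structure enters: since $P(\cdot\mid s,a)=\mu$ for all $s,a$, write $\hat V_{next}(\overline s,\bar a)-\hat V_{next}(\underline s,\underline a)=(\hat p_k(\overline s,\bar a)-\mu)^\top\tilde V_{t+1}^{\pi_k}-(\hat p_k(\underline s,\underline a)-\mu)^\top\tilde V_{t+1}^{\pi_k}$, subtract the irrelevant constant $\bigl(\min_{s'}\tilde V_{t+1}^{\pi_k}(s')\bigr)\mathbf{1}$ from $\tilde V_{t+1}^{\pi_k}$ (the rows of $\hat p_k-\mu$ sum to zero), and bound each summand by $\|\hat p_k(\cdot)-\mu\|_1\cdot\rng\tilde V_{t+1}^{\pi_k}\le \tilde O\bigl(\sqrt{S/n_{\min}}\bigr)\cdot H$, using the crude $\rng\tilde V_{t+1}^{\pi_k}\le H$. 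Summing the three contributions yields $\rng\tilde V_t^{\pi_k}\le 1+\tilde O\bigl(H\sqrt{S}/\sqrt{n_{\min}}\bigr)$, which is exactly Equation~\ref{main:EqRngVBound}.

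I expect the crux of the argument to be this middle term. Because $\tilde V_{t+1}^{\pi_k}$ is itself a function of all the collected data, one cannot treat it as a fixed vector and apply a Hoeffding-type bound; one must instead pass through the $L_1$ deviation $\|\hat p_k(\cdot)-\mu\|_1=\tilde O(\sqrt{S/n})$ of the empirical transition, and it is precisely this step that introduces the $\sqrt{S}$ in the statement (and the reason a Bernstein-style refinement would not obviously help, since $\tilde V_{t+1}^{\pi_k}$ has range up to $H$). The only other point needing care is the bookkeeping that ensures every visit count entering a confidence radius is a \emph{policy-action} count at some state, hence at least $n_{\min}$ --- this is why the whole comparison is carried out at $(\overline s,\bar a)$ and $(\underline s,\underline a)$ rather than, e.g., at a reward-greedy action at $\underline s$, whose count need not be lower bounded by $n_{\min}$ at all.
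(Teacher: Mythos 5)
Your proposal is correct and follows essentially the same route as the paper's proof: compare the optimistic values at $\argmax_s \tilde V_t^{\pi_k}(s)$ and $\argmin_s \tilde V_t^{\pi_k}(s)$ through the update in Line~9, bound the reward contribution by $1$, the bonus by $H\phi = \tilde O(H/\sqrt{n_{\min}})$, and the empirical-transition difference via the $L_1$ concentration to $\mu$ (using the crude bound $H$ on $\tilde V_{t+1}^{\pi_k}$), together with the observation that every policy-action count is at least $\min_{(s',t')} n_k(s',\pi_k(s',t'))$. The only cosmetic differences are that the paper drops the empirical reward in the lower bound rather than bounding a reward difference, and applies H\"older to the difference of the two empirical distributions before triangle-inequality through $\mu$ instead of centering $\tilde V_{t+1}^{\pi_k}$; these are equivalent.
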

\begin{proof}
We denote by $\hat p_k(s,a)$ the maximum likelihood vector for the transitions from $(s,a)$. 
For simplicity redefine $\underline s_{tk} = \argmin_s \tilde V_t^{\pi_k}(s) $ and $\overline s_{tk} = \argmax_s \tilde V_t^{\pi_k}(s) $.
Neglecting the reward $\tilde r_k(s,\pi_k(s,t))$ and the optimistic bonus $\phi$ while planning at timestep $t$ (line $9$ of the algorithm) yields a lower bound on the optimistic value function: 
\begin{align*}
\min_s \tilde V_t^{\pi_k}(s) & \stackrel{def}{=} \tilde V_t^{\pi_k}(\underline s_{tk}) \geq  \hat p_{k}(\underline s_{tk},\pi_k(\underline s_{tk},t))^\top \tilde V_{t+1}^{\pi_k}.
\numberthis \label{main:minVopt}
\end{align*}
Recalling that $\tilde r(s,a) \leq 1$, an upper bound on $\tilde V_t^{\pi_k}$ can also be obtained (from planning in line $9$):
\begin{align*}
& \max_s \tilde V_t^{\pi_k}(s)  \stackrel{def}{=} \tilde V_t^{\pi_k}(\overline s_{tk}) \\
& \leq \underbrace{1}_{\substack{\text{Reward}}} + \hat p_k(\overline s_{tk},\pi_k(\overline s_{tk},t))^\top \tilde V_{t+1}^{\pi_k} + \underbrace{H\magicphik{k}}_{\substack{\text{Bonus} \\ \text{(Overestimate)}}}.
\numberthis \label{main:maxVopt}
\end{align*}
Subtracting \ref{main:minVopt} from \ref{main:maxVopt} yields $(a)$ below:
\begin{align*}
& \rng \tilde V^{\pi_k}_t \stackrel{def}{=}  \max_s \tilde V_t^{\pi_k}(s) - \min_s \tilde V_t^{\pi_k}(s) \leq \\
\stackrel{(a)}{\leq}  & 1 +  \( \hat p_k(\overline s_{tk},\pi_k(\overline s_{tk},t))^\top - \hat p_{k}(\underline s_{tk},\pi_k(\underline s_{tk},t))^\top \) \tilde V_{t+1}^{\pi_k}  \\
& + H\magicphik{k} \\
 \stackrel{(b)}{\leq}  &1+\| \hat p_k(\overline s_{tk},\pi_k(\overline s_{tk},t))-\hat p_{k}(\underline s_{tk},\pi_k(\underline s_{tk},t)) \|_1 \|\tilde V_{t+1}^{\pi_k}\|_{{}_\infty}  \\
 & + H\magicphik{k} \\
\stackrel{(c)}{\leq}  & 1 + H \| \hat p_k(\overline s_{tk},\pi_k(\overline s_{tk},t)) - \mu \|_1 \\
& + H \| \hat p_k(\underline s_{tk},\pi_k(\underline s_{tk},t)) - \mu \|_1   + H\magicphik{k} \\
\numberthis{\label{main:FinalEqnInProp}}
\end{align*}
In $(b)$ we used Holder's inequality and the hard bound $\rng \tilde V_{t+1}^{\pi_k} \leq H$ coupled with the triangle inequality for step $(c)$. Before continuing the development we pause and notice that we have upper bounded $\rng \tilde V_t^{\pi_k}$ by $1$ plus two concentration terms (for the transition probabilities) and the optimistic bonus, which are quantities that are shrinking on \MC{}.
In particular, being outside of the failure event ensures a bound on the system dynamics (this is made precise by referring to the concentration inequality of the failure event $F^{L_1}_k$ as explained in our appendix in section \ref{ubevs:FailureEventAppendix}):
\begin{equation}
\label{main:SoverN}
\| \hat p_k(s,a)) - \mu \|_1 = \tilde O \(\sqrt{\frac{S}{n_k(s,a)}} \) 
\end{equation}
The exploration bonus defined in line $7$ of algorithm 
\ref{main:ubevs} is also similar in magnitude:
\begin{equation}
\label{main:SHoverN}
H\phi(s,a) = \tilde O \( \frac{H}{\sqrt{n_k(s,a)}} \)
\end{equation}
By definition, $\min_{(s',t')} n_k(s',\pi_k(s',t')) \leq n_k(s,\pi_k(s,t))$ for any $s,t$ pair which allows us to combine equation \ref{main:SoverN} and \ref{main:SHoverN} above to rewrite \ref{main:FinalEqnInProp} as: 
\begin{equation}
1 + \tilde O \( H{\frac{\sqrt{S}+1}{\sqrt{\min_{(s',t')} n_k(s',\pi_k(s',t'))}}}  \) \\
\end{equation}
which can be simplified to obtain the statement.
\end{proof}

\subsection{Regret Analysis on \MC{}}
Lemma \ref{main:RngVBound} shows that the optimistic value function on \MC{} is of order $1$ plus a quantity which is related to the confidence interval of the least visited $(s,a)$ pair under the policy selected by the agent. On \MC{} we know that the states are sampled from $\mu$. This ensures  that all states are going to be visited at a linear rate so that $ \min_{(s',t')} n_k(s',\pi_k(s',t'))$ must be increasing at a linear rate. The above consideration together with lemma \ref{main:RngVBound} allows us to sketch the analysis that leads to the result of theorem \ref{main:MainResult}.
\subsubsection{Regret Decomposition}
Outside of the failure event we can use optimism to justify the first inequality below that leads to the regret decomposition for the first $K$ episodes:
\begin{align*}
& \textsc{Regret}(K) \stackrel{def}{=}  \sum_{k = 1}^K V^{\pi^*}_1(s) - V^{\pi_k}_1(s) \\
& \stackrel{Optimism}{\leq} \sum_{k = 1}^K \tilde V^{\pi_k}_1(s) - V^{\pi_k}_1(s) \\
& = \sum_{k=1}^K \sum_{t \in [H]} \sum_{(s,a)} w_{tk}(s,a) \Biggm( \underbrace{\(\tilde r(s,a) - r(s,a) \)}_{\text{Reward Estimation and Optimism}} + \\
& + \underbrace{\(\tilde p(s,a) - \hat p(s,a) \)^\top \tilde V_{t+1}^{\pi_k}}_{\text{Transition Dynamics Optimism}} +  \underbrace{\(\hat p(s,a) - p(s,a) \)^\top V_{t+1}^{*}}_{\text{Transition Dynamics Estimation}} \\
& +  \underbrace{\(\hat p(s,a) - p(s,a) \)^\top \(\tilde V_{t+1}^{\pi_k} - V^*_{t+1} \)}_{\text{Lower Order Term}}  \Biggm). \\
\numberthis{\label{main:AdvancedRegretDecomposition}}
\end{align*}
The decomposition is standard in recent RL literature \cite{Azar17,Dann17}.
\subsubsection{The ``Good'' Episodes on \MC{}}
In the original paper \cite{Dann17}, the authors introduce the notion of ``nice'' and ``friendly'' episodes to relate the probability of visiting a state-action pair $w_{tk}(s,a)$ to the actual number of visits there $n_k(s,a)$ (the latter is a random variable). Here we do a similar distinction directly for a regret analysis (as opposed to a PAC analysis) and we leverage the structure of \MC{}. 
In particular we partition the set of all episodes into two, namely the set $G$ of \emph{good episodes} and the set of episodes that are ``not good''. 
Under good episodes we require that:
\begin{equation}
\label{main:GoodEpisodes}
n_k(s,a) \geq \frac{1}{4} \sum_{i< k}\sum_{\tau\in [H]}w_{\tau i}(s,a)
\end{equation}
holds true for \emph{all} states $s$ and actions $a$ chosen by the agent's policy. In other words, we require that the number of visits $n_k(s,a)$ to the $(s,a)$ pair is at least $\frac{1}{4}$ times its expectation. In lemma \ref{ubevs:RegretNonGoodEpi} in the appendix we examine the regret under non-good episodes, which can be bounded by $\tilde O (\frac{SAH^2}{\mu_{min}})$.

\subsubsection{Regret Bound for the Optimistic Transition Dynamics (Leading Order Term)}


Equipped with lemma \ref{main:RngVBound} we are ready to bound the leading order term contributing to the regret under good episodes. This is the regret due to the optimistic transition dynamics which appear in equation \ref{main:AdvancedRegretDecomposition}. While planning for state $s$ and timestep $t$ (see line $9$ of Algorithm \ref{main:ubevs}), \ubevs implicitly finds an optimistic transition dynamics $\tilde p_k(s,a)$. In particular the ``optimistic'' MDP satisfies the following upper bound on $\tilde p_k(s,a)^\top \tilde V_{t+1}^{\pi_k}$:
\begin{equation}
\label{main:OptimisticPlanningBound}
 \stackrel{line \; 9}{\leq} \hat p_k(s,a)^\top \tilde V_{t+1}^{\pi_k} + (\rng \tilde V_{t+1}^{\pi_k} + \phi^+) \phi_{tk}(s,a).
\end{equation}
Notice that line $9$ of the algorithm provides additional constraints enforced by taking $\min\{\cdot,\cdot\}$, but equation \ref{main:OptimisticPlanningBound} always remains an upper bound.
Rearranging the inequality above and summing over the ``good episodes'', the timesteps $t\in[H]$ and all the $(s,a)$ pairs yields an upper bound on the regret due to the optimistic transition dynamics that appears in equation \ref{main:AdvancedRegretDecomposition}:
\begin{align*}
& \sum_{k\in G} \sum_{t \in [H]} \sum_{(s,a)} w_{tk}(s,a) \(\tilde p_k(s,a) - \hat p_k(s,a) \)^\top \tilde V_{t+1}^{\pi_k}  \\
& \leq \sum_{k\in G} \sum_{t \in [H]} \sum_{(s,a)} w_{tk}(s,a)(\rng \tilde V_{t+1}^{\pi_k} + \phi^+) \phi_{tk}(s,a).
\numberthis{\label{main:IntermediateOpt}}
\end{align*}
Next, notice that the correction factor $\phi^+$ is updated in line $11$ of the algorithm and depends on the state with the lowest visit count $\min_{(s',t')} n_k(s',\pi_k(s',t'))$. This implies the following upper bound on $\phi^+$.
\begin{equation}
\phi^+ \lesssim \frac{H^2\sqrt{S}}{\sqrt{\min_{(s',t')} n_k(s',\pi_k(s',t'))}}\polylog(\cdot).
\end{equation}
At this point we can substitute the definition of $\phi_{tk}(s,a)$ (line 7 of Algorithm \ref{main:ubevs}) and put all the constants and logarithmic quantities in $\polylog(\cdot)$ to upper bound \ref{main:IntermediateOpt} as follows:
\begin{align*}
& \lesssim \sum_{k\in G} \sum_{t \in [H]} \sum_{(s,a)} w_{tk}(s,a) \frac{\rng \tilde V_{t+1}^{\pi_k}}{\sqrt{n_k(s,a)}} \polylog(\cdot) \\
& + \sum_{k\in G} \sum_{t \in [H]} \sum_{(s,a)}\frac{ w_{tk}(s,a) \sqrt{S}H^2 \polylog(\cdot)}{\sqrt{\min_{(s',t')} n_k(s',\pi_k(s',t')) \times n_k(s,a)}}. \\
\numberthis{}
\end{align*}
Finally we substitute lemma \ref{main:RngVBound}:
\begin{align*}
& \lesssim \underbrace{\sum_{k\in G} \sum_{t \in [H]} \sum_{(s,a)} w_{tk}(s,a) \frac{1}{\sqrt{n_k(s,a)}}}_{\text{Leading Order Term}} \polylog(\cdot) \\
& + \underbrace{\sumall \frac{w_{tk}(s,a) \sqrt{S}H^2\polylog(\cdot)}{\sqrt{\min_{(s',t')} n_k(s',\pi_k(s',t')) \times n_k(s,a)}}}_{\text{Lower Order Term}}. \\
\numberthis{}
\end{align*}
and apply Cauchy-Schwartz to get (omitting $\polylog(\cdot)$ factors):
\begin{align*}
& \sqrt{\underbrace{\sum_{k\in G} \sum_{t \in [H]} \sum_{(s,a)} w_{tk}(s,a)}_{\leq T}}\sqrt{\underbrace{\sum_{k\in G} \sum_{t \in [H]} \sum_{(s,a)} \frac{w_{tk}(s,a)}{n_k(s,a)}}_{\tilde O(SA)}} + \\
& \sqrt{S}H^2 \sqrt{\underbrace{\sumall \frac{w_{tk}(s,a)}{n_k(s,a)}}_{\tilde O(SA)}} \sqrt{\underbrace{\sumall \frac{w_{tk}(s,a)}{\displaystyle \min_{(s',t')} n_k(s',\pi_k(s',t')) }}_{(\star)}}.
\numberthis{}
\end{align*}
The sum of the ``visitation ratios'' $ \frac{w_{tk}(s,a)}{n_k(s,a)}$ under good episodes can be bounded in the usual way by $\tilde O(SA)$ by using a pigeonhole argument and will not be discussed further (details are in the appendix). 
To bound ${(\star)}$ we need to work a little more. The main problem is that the ratio 
\begin{equation}
\label{main:KeyVisitRatio}
\frac{w_{tk}(s,a)}{\min_{(s',t')} n_k(s',\pi_k(s',t'))}
\end{equation}
is a ratio between the visitation probability of a certain state $(s,a)$ pair and the visit count of a \emph{different} pair. For a general MDP these two quantities are not related as there can be states that are clearly suboptimal and are visited finitely often by PAC algorithms. As a result, $\sqrt{(\star)}$ can grow like $\sqrt{T}$ and it is not a lower order term. This is the key step where we leverage the underlying structure of the problem. With contextual bandits all contexts are going to be visited with probability at least $\mu_{min}$. Since the analysis is under good episodes, for a fixed $(s',t')$ pair we know that $n_k(s',\pi_k(s',t'))$ must increase by at least $\frac{1}{4}\mu_{min}$ every episodes. There are only $S\times A$ possible candidates for the $(s',a')$ pair with the lowest visit count. Recalling $\sum_{t \in [H]} \sum_{(s,a)} {w_{tk}(s,a)} = H$, the final result then follows by pigeonhole (the computation is in the appendix).
\begin{align}
(\star) = {\sum_{k\in G} \frac{H}{\min_{(s',t')} n_k(s',\pi_k(s',t'))}} = \tilde O\({\frac{SAH}{\mu_{min}}}\).
\end{align}
This completes the sketch of the regret bound for the ``Optimistic Transition Dynamics'' with a regret contribution of order:
\begin{equation}
\tilde O \(\sqrt{SAT} + \sqrt{SH}H^2 \times \frac{SA}{\sqrt{\mu_{min}}} \).
\end{equation}
%
%
\textbf{Remark}: Although for simplicity we conduct here the analysis for the regret only, \ubevs is still a uniformly-PAC algorithm and strong PAC guarantees can be obtained on \MC{} as well. The analysis for the regret due to  the rewards, the estimation of the transition dynamics and the lower oder term can be found in the appendix. Together with the regret in non-good episodes they imply the regret bound of theorem \ref{main:MainResult}.

\section{Discussion, Related Work and Future Work}
A natural question is whether there is something special about the \ubev algorithm, or if other MDP RL algorithms with theoretical bounds can also be shown to have provably better or optimal regret bounds on contextual bandit problems. While we focused on \ubev because it matched (in the dominant terms) the best regret bounds for contextual bandits when run in such settings, we do think other MDP algorithms can yield strong (though not optimal) regret bounds when run in contextual bandits.
For example, \cite{JKAL17} proposes \textsc{Olive}, a probably approximately correct algorithm with bounds for a broad number of settings which can potentially adapt to a CMAB problem if the Bellman rank is known. If the bellman rank is not known in advance (as is our case) a way around this issue is to use the ``doubling trick''. However, the resulting PAC bound of \textsc{Olive} on CMABs would scale in a way which is suboptimal in $H$.
Another interesting candidate for our analysis on CMABs is given in \cite{Bartlett09} the authors propose \textsc{Regal}, a \textsc{Ucrl2}-variant which can potentially achieve a $\tilde O(S\sqrt{AT})$ bound on CMABs while retaining a worst-case $\tilde O(DS\sqrt{AT})$ regret in generic MDPs (here $D$ is the MDP diameter). The simplification on CMABs follows directly from the computation of the span (which is equivalent to the range here) of the optimal bias vector. Still, this result is not completely satisfactory because the lower bound is not achieved and \textsc{Regal} \emph{must know} the range of the bias vector in advance. Another noteworthy variant of \ucrl is discussed in \cite{Maillard14}. There the authors introduce a new norm and its dual (instead of the classical $1$-norm and $\infty$-norm, respectively) to better capture the effect of the MDP transition dynamics. The result that they obtain does depend on a measure of the MDP complexity (constant $C$ in their regret bound). This is essentially the variance of the value function, so $C = O(1)$ on CMABs; despite moving in the right direction, the resulting bound is still of order $\tilde O(DS\sqrt{AT})$ on CMABs.
 
By contrast, our analysis of vanilla \ucrl \cite{Jaksch10} (see appendix \ref{ucrl:Appendix} for extensive details) shows an improved regret bound of $\tilde O (S\sqrt{AT})$ if \ucrl is run on CMABs which is better (although not optimal) than the \ucrl worst-case bound for MDPs $\tilde O (DS \sqrt{AT})$. The key insight to obtain this result is that the MDP diameter $D$ is an upper bound to a key quantity in the analysis of \ucrl{}, and can be more tightly bounded in contextual bandit domains. This analysis suggests that if an algorithm for infinite-horizon MDPs is constructed using $\sqrt{S}$-tighter confidence intervals like in \ubev or \textsc{Ucbvi} from \cite{Azar17} then a bound of order $\tilde O (\sqrt{SAT})$ should be achievable on an infinite horizon \MC{}.

This work raises a number of interesting questions, in particular whether similar results are possible for other pairings of algorithms and domains: can we have algorithms designed for partially observable reinforcement learning that inherit the best performance of the setting they operate in, whether it is a bandit, contextual bandit, MDP or POMDP?  As a step towards such exploration, we analyzed whether a MDP RL algorithm operating in a multi-armed bandit could match the upper bound on regret for such settings. In a multi-armed bandit there are no states, and the reward is solely a function of the arm (action) played. Regret for MABs must scale at least as $\Omega(\sqrt{AT})$, the lower bound for such setting. In our preliminary investigations, our analysis of \ucrl when operating in a MAB (still in section \ref{ucrl:Appendix} in the appendix) yielded an additional $\sqrt{S}$ dependence. It is a very interesting question whether existing or new MDP algorithms that explicitly or implicitly perform state aggregation \cite{BayesianClustering,InfinitePOMDPs} can yield a performance that matches the dominant terms of a bandit-specific regret analysis. Another important question is whether similar analyses are possible for reinforcement learning algorithms designed for very large or infinite state spaces, as well as an empirical investigation to see whether existing RL algorithms for more complex settings  experimentally match algorithms designed for simpler settings when executing in said simpler settings. 

Finally, our analysis for \ubevs highlights a dependence on the minimum visitation probability $\mu_{min}$ which is absent in bandit analyses. We think that this can be avoided by a more careful design of the exploration bonus that re-weights the next-state uncertainty by the transition probability estimated empirically, see for example \cite{Dann15,Azar17}. For simplicity in this paper we focused on tabular bandits and therefore \ubevs cannot handle general Contextual Bandits which use function approximations (e.g, \cite{Abbasi11}).

\section{Conclusion}
The ultimate goal of Reinforcement Learning is to design algorithms that can learn online and achieve the best performance afforded by the difficulty of the underlying domain. In this work we have introduced a minor variant of an existing RL algorithm that automatically provides strong regret guarantees whether it is deployed in a MDP or if the domain actually belongs to a simpler setting, a tabular contextual bandit, matching the lower bound in the dominant terms in the second setting. Note that the algorithm is not informed of this structure.  
This work suggests that already existing RL  algorithms can inherit tighter theoretical guarantees if the domain turns out to have additional structure and yields many interesting next steps for the analysis and creation of algorithms for other settings, particularly the function approximation case. 
\section*{Acknowledgements}
Christopher Dann and the anonymous reviewers are acknowledged for providing very useful feedback which improved the quality of this paper.
\bibliography{rl}
\bibliographystyle{icml2018}

\onecolumn
\appendix

\section{\ubevs for Stationary Environments}
We mostly use the same notation as in \cite{Dann17} and provide the supporting results for \ubevs. 
The assumption of stationary environment is enforced through time aggregation. 
Let $n_{tk}(s,a)$ be the visit count to state-action $(s,a)$ at timestep $t$ up to the start of the $k$-th episode and let $w_{tk}(s,a)$ be the probability of visiting state $s$ and taking action $a$ there at timestep $t$ during the $k$-th episode. Then we defined the corresponding aggregated quantities as:
\begin{equation}
\label{saggregation}
n_k(s,a) \stackrel{def}{=} \sum_{t \in [H]} n_{tk}(s,a).
\end{equation} 
and 
\begin{equation}
\label{swaggregation}
w_k(s,a) \stackrel{def}{=} \sum_{t \in [H]} w_{tk}(s,a).
\end{equation} 

\subsection{Failure Events and Their Probabilities}
\label{ubevs:FailureEventAppendix}
The analysis of the ``failure events'' can be carried out in a way identical to \cite{Dann17}.
In particular we use the same ``failure events'' $F_k^{N},F_k^{CN},F_k^{V},F_k^{P},F_k^{L1},F_k^{R}$ defined in section E.2 in the appendix of \cite{Dann17} but with $n_{tk}(s,a)$ replaced by $n_{k}(s,a)$ whenever it appears. We notice that with \ubevs we could potentially save a factor of $H$ in each argument of the $\log$ terms that appears in each concentration inequality because we do not need to do a final union bound over the $H$ timesteps, resulting in slightly tighter concentration inequalities. 
The total failure probability of \ubevs can then be upper bounded by $\delta$ by using Corollary E.1,E.2,E.3,E.4,E.5 in \cite{Dann17} (still with $n_{tk}(s,a)$ replaced by $n_{k}(s,a)$). If during the execution of \ubevs none of  $F_k^{N},F_k^{CN},F_k^{V},F_k^{P},F_k^{L1},F_k^{R}$ occur in any episode $k$ we say that that we are \emph{outside of the failure event}.

\subsection{The ``Good'' Set}
\label{app:Ltk}
We now introduce the set $L_{tk}$. The construction is due to \cite{Dann17} although we modify it here for our to handle the regret framework (as opposed to PAC) under stationary dynamics. The idea is to partition the state-action space at each episode into two episodes, the set of episodes that have been visited sufficiently often (so that we can lower bound these visits by their expectations using standard concentration inequalities) and the set of $(s,a)$ that were not visited often enough to cause high regret. In particular:
\begin{definition}[The Good Set]
\label{def:TheGoodSet}
The set $L_{k}{}$ is defined as:
\begin{equation}
\label{eqn:TheGoodSet}
L_{k} \stackrel{def}{=} \Big\{ (s,a) \in \mathcal S \times \mathcal A : \frac{1}{4}\sum_{j \leq k} w_{j}(s,a) \geq H\ln\frac{9SA}{\delta}\Big\}.
\end{equation}
\end{definition}
The above definition enables the following lemma that relates the number of visits to a state to its expectation:
\begin{lemma}[Visitation Ratio]
\label{lem:VisitationRatio}
Outside the failure event if $(s,a) \in L_{k}$ then
\begin{equation}
\label{eqn:VisitationRatio}
n_{k}(s,a) \geq \frac{1}{4}\sum_{j\leq k} w_{j}(s,a)
\end{equation}
holds.
\end{lemma}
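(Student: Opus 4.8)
The plan is to obtain the inequality directly from the count-concentration part of the failure event, exactly as in \cite{Dann17} but applied to the time-aggregated counts. First observe that $n_k(s,a)=\sum_{t\in[H]}n_{tk}(s,a)$ is the sum, over episodes $j<k$ and timesteps $t\in[H]$, of the indicators of visiting $(s,a)$ at step $t$ of episode $j$; conditioned on the history up to the start of episode $j$, the expected contribution of episode $j$ is $w_j(s,a)=\sum_{t\in[H]}w_{tj}(s,a)$, and the contribution of any single episode is at most $H$. Hence $\big(n_k(s,a)-\sum_{j<k}w_j(s,a)\big)_k$ is a martingale with increments bounded by $H$, and the complement of the count failure events ($F_k^{N},F_k^{CN}$ in the notation of \cite{Dann17}), restated for the aggregated counts in place of $n_{tk}$, is precisely a multiplicative Bernstein/Freedman lower-tail bound stating that, outside the failure event, simultaneously for all $k$ and all $(s,a)$,
\begin{equation*}
n_k(s,a)\ \geq\ \tfrac12\sum_{j\leq k}w_j(s,a)\ -\ H\ln\tfrac{9SA}{\delta}.
\end{equation*}
Aggregating over $t$ before taking the union bound is what makes the logarithmic term scale with $SA$ rather than $SAH$, matching the constant in Definition~\ref{def:TheGoodSet}.

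Then I would simply substitute the defining inequality of the good set: $(s,a)\in L_k$ means $H\ln\frac{9SA}{\delta}\leq\tfrac14\sum_{j\leq k}w_j(s,a)$, so the additive slack is at most a quarter of $\sum_{j\leq k}w_j(s,a)$, whence
\begin{equation*}
n_k(s,a)\ \geq\ \tfrac12\sum_{j\leq k}w_j(s,a)-\tfrac14\sum_{j\leq k}w_j(s,a)\ =\ \tfrac14\sum_{j\leq k}w_j(s,a),
\end{equation*}
which is the claim.

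Since the failure events of \ubevs are defined verbatim as in \cite{Dann17} with $n_{tk}$ replaced by $n_k$, the only genuine ingredient is this concentration step, which is inherited. What remains is bookkeeping, and that is where the minor care is needed: ensuring the union bound produces exactly $\ln\frac{9SA}{\delta}$ so it matches $L_k$, and reconciling the $\sum_{j<k}$ indexing of the conditional mean of $n_k$ with the $\sum_{j\leq k}$ in the statement — the single extra term $\tfrac12 w_k(s,a)\leq\tfrac H2$ is absorbed into the slack $H\ln\frac{9SA}{\delta}$, which is legitimate since $\ln\frac{9SA}{\delta}\geq\ln 36>\tfrac12$ for any non-trivial MDP ($S,A\geq2$, $\delta\leq1$).
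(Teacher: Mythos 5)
Your proof is correct and follows essentially the same route as the paper: outside the count failure event $F^N$ (with the time-aggregated counts) one has $n_k(s,a)\geq\tfrac12\sum_{j\leq k}w_j(s,a)-H\ln\tfrac{9SA}{\delta}$, and membership in $L_k$ bounds the slack term by $\tfrac14\sum_{j\leq k}w_j(s,a)$, giving the claim. The only difference is that you re-derive the concentration step and worry about the $j<k$ versus $j\leq k$ indexing (a constant-factor bookkeeping point the paper glosses over by citing $F^N$ directly in the $\leq k$ form), so the substance of the argument is the same.
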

\begin{proof}
Outside the failure event $F^N$ (see \cite{Dann17}) justifies the first passage below:
\begin{align}
n_k(s,a) & \geq  \frac{1}{2}\sum_{j\leq k}w_{j}(s,a) - H\ln\frac{9SA}{\delta} \\
& = \frac{1}{4}\sum_{j\leq k}w_{j}(s,a) +  \frac{1}{4}\sum_{j\leq k}w_{j}(s,a) - H\ln\frac{9SA}{\delta}  \geq \frac{1}{4}\sum_{j\leq k}w_{j}(s,a).
\end{align}
while the second inequality holds because $(s,a) \in L_{k}$ by assumption.
\end{proof}
Finally, the following lemma ensures that if $(s,a) \not \in L_{k}$ then it will contribute very little to the regret:
\begin{lemma}[Minimal Contribution]
\label{lem:MinimalContribution}
It holds that:
\begin{align}
\sum_{k=1}^K \sum_{t=1}^H \sum_{(s,a) \not \in L_{k}} w_{tk}(s,a) = \tilde O \(SAH\)	
\end{align}
\end{lemma}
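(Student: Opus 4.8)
The plan is to exploit the monotonicity of the cumulative visitation mass $\sum_{j\le k} w_j(s,a)$ in $k$. Since every $w_j(s,a)\ge 0$, these partial sums are nondecreasing, so once a pair $(s,a)$ meets the membership threshold $\tfrac14\sum_{j\le k} w_j(s,a)\ge H\ln\frac{9SA}{\delta}$ that defines $L_k$ (Definition \ref{def:TheGoodSet}), it belongs to $L_{k'}$ for all later episodes $k'\ge k$. Hence, for each fixed $(s,a)$ the episodes with $(s,a)\notin L_k$ form a prefix $\{1,\dots,K_{s,a}-1\}$, where $K_{s,a}$ is the first episode at which $(s,a)$ enters the good set (set $K_{s,a}=K+1$ if it never does within the first $K$ episodes).

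Next I would reorganize the triple sum by swapping the order of summation and grouping by $(s,a)$, using the aggregation identity $w_k(s,a)=\sum_{t\in[H]}w_{tk}(s,a)$ from \eqref{swaggregation}:
\[
\sum_{k=1}^K \sum_{t=1}^H \sum_{(s,a)\notin L_k} w_{tk}(s,a)
= \sum_{(s,a)} \ \sum_{k=1}^{\min(K,\,K_{s,a}-1)} w_k(s,a).
\]
For the inner sum, write $k' := \min(K,K_{s,a}-1)$; by construction $(s,a)\notin L_{k'}$, so the defining inequality of $L_{k'}$ gives $\sum_{j\le k'} w_j(s,a) < 4H\ln\frac{9SA}{\delta}$, and since the $w_k(s,a)$ are nonnegative the entire inner sum is bounded by the same quantity. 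Summing over the at most $SA$ state--action pairs yields
\[
\sum_{k=1}^K \sum_{t=1}^H \sum_{(s,a)\notin L_k} w_{tk}(s,a)
\ \le\ 4SAH\ln\tfrac{9SA}{\delta}
\ =\ \tilde O(SAH),
\]
which is the claim.

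This is essentially a one-line pigeonhole argument once monotonicity of the cumulative mass is observed, so I do not anticipate a genuine obstacle. The only points deserving a moment of care are the edge case where $(s,a)$ never enters $L_k$ in the first $K$ episodes (handled uniformly by truncating at $\min(K,K_{s,a}-1)$), and the fact that the bound $4H\ln\frac{9SA}{\delta}$ on the cumulative mass up to the last ``bad'' episode dominates the sum of all per-episode ``bad'' contributions — which again is immediate from $w_k(s,a)\ge 0$.
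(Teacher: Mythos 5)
Your proof is correct and takes essentially the same route as the paper's: for each fixed $(s,a)$, the defining inequality of $L_k$ at the last ``bad'' episode bounds the cumulative visitation mass over all episodes with $(s,a)\notin L_k$ by $4H\ln\frac{9SA}{\delta}$, and summing over the $SA$ pairs gives $\tilde O(SAH)$. Your explicit use of monotonicity of the partial sums (the prefix/truncation structure) just makes rigorous a step the paper's one-line pigeonhole argument leaves implicit.
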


\begin{proof}
By definition \ref{def:TheGoodSet}, if $(s,a) \not \in L_{k}$ then
\begin{equation}
\label{eqn:minimalcontri}
\frac{1}{4}\sum_{t \in [H]}\sum_{j \leq k} w_{tj}(s,a) < H\ln\frac{9SA}{\delta}
\end{equation} 
holds. Now sum over the $(s,a)$ pairs not in $L_{tk}$, the timesteps $t$ and episodes $k$ to obtain:
\begin{align}
\sum_{k=1}^K \sum_{t=1}^H \sum_{(s,a) \not \in L_{tk}} w_{tk}(s,a) =  \sum_{s,a} \sum_{t=1}^H \sum_{k=1}^K w_{tk}(s,a) \1\{(s,a)\not \in L_{tk}\} \leq \sum_{s,a} \( 4H\ln\frac{9SA}{\delta}\)= \tilde O \( SAH \)
\end{align}
\end{proof}

\subsection{Ensuring Optimism for \ubevs on Stationary Episodic MDPs}
\label{ubevs:OptimismAppendix}
One of the limitation of \ubev as described in \cite{Dann17} is that the exploration bonus $(H-t) \phi$ does not explicitly depend on (the range of) the value function of the successor but only on its upper bound $(H-t)\phi$, leading to an ``excess of optimism'' in certain classes of problems. To remedy this, we propose to use $\rng \tilde V^{\pi_k}_{t+1}$ instead of $H-t$. While performing optimistic planning to compute $\tilde V^{\pi_k}_{t+1}$, however, it is not guaranteed that $\rng \tilde V^{\pi_k}_{t+1} \geq \rng V^{*}_{t+1}$ and optimism may not be guaranteed. To remedy this we add the correction term $\phi^+$ as described in the main text so that our exploration bonus for the system dynamics reads:
\begin{equation}
\label{ubevs:fullbonus}
\min\{H-t,\rng \tilde V^{\pi_k}_{t+1} + \phi^+\} \phi.
\end{equation}
For this to be a valid exploration bonus we need to show it still guarantees optimism.
To this aim we begin with the following lemma which guarantees that $\phi^+$ accounts for the potentially inaccurate estimate of the value function.

\begin{lemma}
\label{ubevs:extrabonus}
Outside of the failure event $\forall s,t,k$ it holds that:
\begin{equation*}
 \tilde V_t^{\pi_k}(s) - V_t^{\pi_k}(s)  \leq  \phiplus \stackrel{def}{=}\phi^+
\end{equation*}
\end{lemma}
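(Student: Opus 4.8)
The plan is to prove the statement by downward induction on $t$, establishing the slightly stronger claim that, writing $\bar\phi \stackrel{def}{=} \max_{(s',t')}\phi(s',\pi_k(s',t'))$ (so that the correction term, after the planning sweep for episode $k$ completes, equals $\phi^+ = 4\sqrt S H^2\bar\phi$), outside the failure event
\[
\max_{s}\bigl(\tilde V_t^{\pi_k}(s) - V_t^{\pi_k}(s)\bigr) \le (H-t+1)\,c\sqrt S\,H\,\bar\phi
\]
for a suitable constant $c\le 4$. The base case $t=H+1$ is immediate since $\tilde V_{H+1}^{\pi_k}\equiv V_{H+1}^{\pi_k}\equiv 0$.

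For the inductive step, fix $(s,t)$ and set $a=\pi_k(s,t)$. Dropping the two $\min\{\cdot,\cdot\}$ caps in line 9 of Algorithm \ref{main:ubevs} (each can only increase the value) and bounding $\min\{(H-t),\,\rng\tilde V_{t+1}^{\pi_k}+\phi^+\}\,\phi(s,a)\le (H-t)\phi(s,a)$ gives the overestimate $\tilde V_t^{\pi_k}(s)\le \hat r(s,a)+\phi(s,a)+\hat p_k(s,a)^\top\tilde V_{t+1}^{\pi_k}+(H-t)\phi(s,a)$. Subtracting $V_t^{\pi_k}(s)=r(s,a)+p(s,a)^\top V_{t+1}^{\pi_k}$ and splitting the transition term as $\hat p_k(s,a)^\top(\tilde V_{t+1}^{\pi_k}-V_{t+1}^{\pi_k})+(\hat p_k(s,a)-p(s,a))^\top V_{t+1}^{\pi_k}$, I would bound the first transition piece by the inductive hypothesis at $t+1$ (an average under a probability vector is at most the coordinatewise max), the second by Holder's inequality together with the $F^{L1}$ concentration event, which gives $\|\hat p_k(s,a)-p(s,a)\|_1\lesssim\sqrt S\,\phi(s,a)$, combined with $\|V_{t+1}^{\pi_k}\|_\infty\le H-t\le H$, and finally the reward gap by the $F^R$ event, under which $|\hat r(s,a)-r(s,a)|\le\phi(s,a)$. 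Collecting the terms, using $\phi(s,a)\le\bar\phi$ and $S,A,H\ge 2$, the per-step overhead is at most $c\sqrt S H\bar\phi$; adding this to the inductive bound $(H-t)\,c\sqrt S H\bar\phi$ gives the claim at level $t$. Then for any $t\ge 1$ we have $(H-t+1)\le H$, so $\tilde V_t^{\pi_k}(s)-V_t^{\pi_k}(s)\le c\sqrt S H^2\bar\phi\le 4\sqrt S H^2\bar\phi=\phi^+$, which is exactly the statement; the degenerate case where some $(s',\pi_k(s',t'))$ has zero visit count makes $\phi^+=\infty$ and the bound trivially holds.

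The main obstacle is twofold. First, the accumulating factor $(H-t+1)$ must be carried through the induction: a naive induction using the bound $\phi^+$ itself on the right-hand side does not close, because each backup contributes an extra $\Theta(\sqrt S H\bar\phi)$, and one must allow these $H$ contributions to telescope into the final $\sqrt S H^2$ scale. Second, one must check that the constant $4$ hard-coded in line 11 is large enough, i.e., that the constants in the $L_1$ and reward confidence radii of \cite{Dann17} combine to give $c\le 4$ — which they do with room to spare, since $\ln(27HSA/\delta)$ dominates the $\ln 2$-type term entering the $L_1$ radius. I would also emphasize there is no circularity: although $\phi^+$ appears inside the bonus in line 9 during the planning sweep, it is discarded in the upper bound $\min\{(H-t),\,\rng\tilde V_{t+1}^{\pi_k}+\phi^+\}\,\phi\le (H-t)\phi$, so the argument only ever uses the crude cap $H$ on the range of the value functions and the final value of $\phi^+$ on the right-hand side.
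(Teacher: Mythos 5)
Your proof is correct and follows essentially the same route as the paper: the paper invokes the value-difference decomposition (lemma E.15 of \cite{Dann17}) and then bounds the per-step reward gap via $F^R$, the bonus by $H\phi$, and the transition error via Holder plus $F^{L1}$ with the crude $\|\cdot\|_\infty\le H$ cap, which is exactly what your backward induction reproduces in a self-contained way (your splitting against $V_{t+1}^{\pi_k}$ rather than $\tilde V_{t+1}^{\pi_k}$ is an immaterial variation). The constant bookkeeping you flag is handled just as loosely in the paper's own ``taking max'' step, so there is no substantive gap.
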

\begin{proof}
Outside of the failure event it holds that:
\begin{align*}
\tilde V_t^{\pi_k}(s) - V_t^{\pi_k}(s) & \stackrel{a}{=} \E \sum_{i=t}^H \( \tilde r_i(s_i,a_i) - r_i(s_i,a_i) \) +  \( \tilde p_i(s_i,a_i) - p_i(s_i,a_i) \)^{\top}\tilde V_{t+1}^{\pi_k} \\
& \stackrel{b}{=}  \E \sum_{i=t}^H \( \tilde r_i(s_i,a_i) - \hat r_i(s_i,a_i) \) + \( \hat r_i(s_i,a_i) - r_i(s_i,a_i) \) +  \( \tilde p_i(s_i,a_i) - p_i(s_i,a_i) \)^{\top}\tilde V_{t+1}^{\pi_k} \\
& \stackrel{c}{\leq}  \E \sum_{i=t}^H 2\phi(s_i,a_i) +  \( \tilde p_i(s_i,a_i) - \hat p_i(s_i,a_i) \)^{\top}\tilde V_{t+1}^{\pi_k} +  \( \hat p_i(s_i,a_i) - p_i(s_i,a_i) \)^{\top}\tilde V_{t+1}^{\pi_k} \\
& \stackrel{d}{\leq}  \E \sum_{i=t}^H 2\phi(s_i,a_i) +  H\phi(s_i,a_i) + \| \hat p_i(s_i,a_i) - p_i(s_i,a_i) \|_1 \| \tilde V_{t+1}^{\pi_k} \|_{\infty} \\
& \stackrel{e}{\leq}  \E \sum_{i=t}^H 2\phi(s_i,a_i) + H\phi(s_i,a_i) + 4\sqrt{S}H\phi(s_i,a_i) \\
& \stackrel{f}{\leq}  \phiplus \stackrel{def}{=}\phi^+.
\end{align*}
\begin{enumerate}[label=(\alph*)]
\item using lemma E.15 in \cite{Dann17}
\item by adding and subtracting $\hat r_i$
\item by adding and subtracting $\hat p_i$ and using the fact that we are outside the failure event $F^{R}_k$ for the rewards and that the confidence interval for the rewards is the same as the exploration bonus $\phi(\cdot,\cdot)$
\item by Holder's inequality and using again the upper bound $H\phi(s,a)$ for the exploration bonus for the system dynamics \ref{ubevs:fullbonus}
\item since we are outside the failure event for the transition probabilities $F^{L1}$ and $\| \tilde V^{\pi_k}_{t+1} \|_{\infty} \leq H$
\item by taking max
\end{enumerate}
\end{proof}

Lemma \ref{ubevs:extrabonus} provides a tool to estimate the uncertainty in the value of the policy. We use this to construct an extra bonus to overestimate the range of the value function (this is needed in lemma \ref{ubevs:OptimisticPlanning} to guarantee optimism). 
\begin{lemma}
\label{ubevs:rangeok}
If $\tilde V_t^{\pi_k}(s) \geq V_t^{*}(s)$ for all states then outside of the failure event it holds that:
\begin{equation}
\rng \tilde V_t^{\pi_k} +\phi^+ \geq \rng V_t^{*}
\end{equation}
\end{lemma}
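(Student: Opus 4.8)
The plan is to evaluate both value functions at the states that realize the range of $V_t^*$, and then combine two one-sided inequalities: optimism at the maximizing state, and the estimation slack $\phi^+$ at the minimizing state.

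First I would pick $\overline s \in \argmax_s V_t^*(s)$ and $\underline s \in \argmin_s V_t^*(s)$, so that $\rng V_t^* = V_t^*(\overline s) - V_t^*(\underline s)$. Because $\max_s \tilde V_t^{\pi_k}(s) \geq \tilde V_t^{\pi_k}(\overline s)$ and $\min_s \tilde V_t^{\pi_k}(s) \leq \tilde V_t^{\pi_k}(\underline s)$, we get the relaxation $\rng \tilde V_t^{\pi_k} \geq \tilde V_t^{\pi_k}(\overline s) - \tilde V_t^{\pi_k}(\underline s)$. Now I would bound the two terms on the right. At the maximizing state, the hypothesis $\tilde V_t^{\pi_k}(s) \geq V_t^*(s)$ gives $\tilde V_t^{\pi_k}(\overline s) \geq V_t^*(\overline s)$. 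At the minimizing state, Lemma \ref{ubevs:extrabonus} gives $\tilde V_t^{\pi_k}(\underline s) \leq V_t^{\pi_k}(\underline s) + \phi^+$, and since $V_t^{\pi_k}$ is the value of an actual policy in the true MDP while $V_t^*$ is optimal, $V_t^{\pi_k}(\underline s) \leq V_t^*(\underline s)$, hence $\tilde V_t^{\pi_k}(\underline s) \leq V_t^*(\underline s) + \phi^+$. Chaining these inequalities yields $\rng \tilde V_t^{\pi_k} \geq V_t^*(\overline s) - V_t^*(\underline s) - \phi^+ = \rng V_t^* - \phi^+$, and rearranging gives the statement.

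There is no substantial obstacle here; the only care needed is to take the extremal states of $V_t^*$ rather than of $\tilde V_t^{\pi_k}$, so that the $\max/\min$ relaxation of $\rng \tilde V_t^{\pi_k}$ goes in the favorable direction, and to invoke the two bounds with the correct orientation — optimism at $\overline s$ and the estimation slack $\phi^+$ combined with $V_t^{\pi_k} \le V_t^*$ at $\underline s$. All steps are valid outside the failure event, which is precisely where Lemma \ref{ubevs:extrabonus} applies.
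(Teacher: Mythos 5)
Your proof is correct and follows essentially the same route as the paper's: relax the range of $\tilde V_t^{\pi_k}$ to an evaluation at $\argmin_s V_t^*(s)$, use Lemma \ref{ubevs:extrabonus} together with $V_t^{\pi_k} \leq V_t^*$ to absorb the slack into $\phi^+$, and use optimism for the max term. The only cosmetic difference is that you also relax the max to $\argmax_s V_t^*(s)$ before applying optimism, whereas the paper keeps $\max_s \tilde V_t^{\pi_k}(s)$ and lower-bounds it by $\max_s V_t^*(s)$ directly; the two are equivalent.
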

\begin{proof}
\begin{align*}
\rng \tilde V_t^{\pi_k} + \phi^+ & = \max_s \tilde V_t^{\pi_k}(s) - \min_s \tilde V_t^{\pi_k}(s) + \phi^+ \\
& \geq  \max_s \tilde V_t^{\pi_k}(s) - \tilde V_t^{\pi_k}(\argmin_s V_t^{*} (s)) + \phi^+ \\
& \geq  \max_s \tilde V_t^{\pi_k}(s) - V_t^{\pi_k}(\argmin_s  V_t^{*} (s)) \\
& \geq  \max_s \tilde V_t^{\pi_k}(s) - V_t^{*}(\argmin_s  V_t^{*} (s)) \\
& \geq  \max_s V_t^{*}(s) - \min_s V_t^{*}(s) = \rng{V_t^{*}} \\
\end{align*}
where the middle inequality follows from lemma \ref{ubevs:extrabonus}. 
\end{proof}

\begin{lemma}
\label{ubevs:OptimisticPlanning}
Outside of the failure event \ubevs ensures optimism for all timesteps $t$, states $s$ and episodes $k$:
$$\tilde V_t^{\pi_k}(s) \geq V_t^{*}(s), \quad \forall s,t.$$
\end{lemma}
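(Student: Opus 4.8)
The plan is a backward induction on the timestep $t$, from $t=H+1$ down to $t=1$, establishing $\tilde V_t^{\pi_k}(s)\ge V_t^*(s)$ for every state $s$ simultaneously, for each fixed episode $k$ outside the failure event. The base case is immediate since $\tilde V_{H+1}^{\pi_k}\equiv 0\equiv V_{H+1}^*$. For the inductive step, fix $(k,t,s)$, assume $\tilde V_{t+1}^{\pi_k}(s')\ge V_{t+1}^*(s')$ for all $s'$, and let $a^*=\pi^*(s,t)$, so that by Bellman optimality $V_t^*(s)=r(s,a^*)+p(s,a^*)^\top V_{t+1}^*$. Since $\tilde V_t^{\pi_k}(s)=\max_a Q(a)\ge Q(a^*)$ (notation of Line 9 of Algorithm \ref{main:ubevs}), it suffices to show $Q(a^*)\ge r(s,a^*)+p(s,a^*)^\top V_{t+1}^*$, which I would obtain by lower bounding the reward part and the next-state part of $Q(a^*)$ separately.

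For the reward part, outside the failure event $F_k^R$ the empirical reward confidence interval gives $\hat r+\phi\ge r(s,a^*)$, and since $r(s,a^*)\le 1$ we get $\min\{1,\hat r+\phi\}\ge r(s,a^*)$. For the next-state part, $Q(a^*)$ contains $\min\{\max_{s'}\tilde V_{t+1}(s'),\ \hat V_{next}+\min\{H-t,\ \rng\tilde V_{t+1}^{\pi_k}+\phi^+\}\phi\}$. The first argument of the outer $\min$ exceeds $p(s,a^*)^\top V_{t+1}^*$ already, because $\max_{s'}\tilde V_{t+1}(s')\ge\max_{s'}V_{t+1}^*(s')\ge p(s,a^*)^\top V_{t+1}^*$ by the inductive hypothesis. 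For the second argument I would write $\hat V_{next}=\hat p_k(s,a^*)^\top\tilde V_{t+1}^{\pi_k}\ge\hat p_k(s,a^*)^\top V_{t+1}^*$ (inductive hypothesis and nonnegativity of $\hat p_k$), reducing the task to $\min\{H-t,\ \rng\tilde V_{t+1}^{\pi_k}+\phi^+\}\,\phi(s,a^*)\ge(p(s,a^*)-\hat p_k(s,a^*))^\top V_{t+1}^*$. Outside the failure event, the value-function concentration inequality $F_k^V$ of \cite{Dann17} (Hoeffding applied to the centered vector $V_{t+1}^*$, with the iterated-logarithm peeling that produces the $\llnp$ term in $\phi$) gives $(p(s,a^*)-\hat p_k(s,a^*))^\top V_{t+1}^*\le\rng(V_{t+1}^*)\,\phi(s,a^*)$, so it is enough to check $\min\{H-t,\ \rng\tilde V_{t+1}^{\pi_k}+\phi^+\}\ge\rng V_{t+1}^*$: the inequality $H-t\ge\rng V_{t+1}^*$ is trivial since $V_{t+1}^*\in[0,H-t]$, and $\rng\tilde V_{t+1}^{\pi_k}+\phi^+\ge\rng V_{t+1}^*$ is exactly Lemma \ref{ubevs:rangeok}, whose hypothesis $\tilde V_{t+1}^{\pi_k}\ge V_{t+1}^*$ is our inductive hypothesis. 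Combining the two parts yields $Q(a^*)\ge r(s,a^*)+p(s,a^*)^\top V_{t+1}^*=V_t^*(s)$, closing the induction.

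The main obstacle, and the point that needs care, is that Lemma \ref{ubevs:rangeok} (via Lemma \ref{ubevs:extrabonus}) controls the estimation error $\tilde V_{t+1}^{\pi_k}(\cdot)-V_{t+1}^{\pi_k}(\cdot)$ in terms of $\phi^+$, so one must argue that the value of $\phi^+$ actually held by the algorithm at the moment it performs the backup at timestep $t$ is already at least this large. This is where the backward order of the planning loop is used: by the time timestep $t$ is reached, timesteps $t+1,\dots,H$ have all been processed and $\phi^+$ has been updated in Line 11 so as to dominate $4\sqrt S H^2\max_{t'\ge t+1}\phi(s',\pi_k(s',t'))$, which is precisely the quantity that bounds the error propagated from the horizon down to level $t+1$ in the proof of Lemma \ref{ubevs:extrabonus}. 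I would therefore phrase the induction so that it tracks this monotone build-up of $\phi^+$ along the backward sweep and invokes Lemmas \ref{ubevs:extrabonus} and \ref{ubevs:rangeok} with that running value; the remaining reward and transition confidence-interval manipulations are the routine bookkeeping already set up in the failure-event section.
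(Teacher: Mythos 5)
Your proposal is correct and follows essentially the same route as the paper's proof: backward induction on $t$, lower-bounding the reward term via $F_k^R$, handling the two arms of the transition bonus $\min\{H-t,\rng\tilde V_{t+1}^{\pi_k}+\phi^+\}$ through Lemma \ref{ubevs:rangeok} together with the $F_k^V$ concentration $(p-\hat p_k)^\top V_{t+1}^*\le\rng(V_{t+1}^*)\phi$. Your explicit treatment of the reward term, of the $H-t$ cap, and of the fact that the algorithm's running $\phi^+$ already dominates the needed quantity at the time of the backup (thanks to the backward sweep in Line 11) only makes explicit details the paper leaves implicit, so no substantive difference remains.
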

\begin{proof}
We proceed by induction.
By construction of the algorithm, the computed policy satisfies $\forall s,t,k$:
\begin{align}
\label{eviubevsactual}
\tilde V_t^{\pi_k}(s) & = \max_a \( \min(1,\hat r_k(s,a) + \phi(s,a)) + \min(\max{\tilde V^{\pi_k}_{t+1}},\hat p_k(s,a)^T \tilde V_{t+1}^{\pi_k} + \min(\rng{\tilde V_{t+1}^{\pi_k}} + \phi^+,H-t)\phi(s,a)) \).
\end{align}
If the second minimum in \ref{eviubevsactual} is attained by $\max \tilde V_{t+1}^{\pi_k}$ then optimism is guaranteed by the inductive hypothesis. If the minimum is attained by $$\hat p_k(s,a)^T \tilde V_{t+1}^{\pi_k} + \min(\rng{\tilde V_{t+1}^{\pi_k}} + \phi^+,H-t)\phi(s,a))$$ then two cases are possible.
\paragraph{Case I} It holds that
$$
\rng{\tilde V_{t+1}^{\pi_k}} + \phi^+ \geq H-t
$$
so that the bonus becomes $\hat p_k(s,a)^T \tilde V_{t+1}^{\pi_k} + (H-t)\phi(s,a)$ and
the procedure gives an identical result as the original \ubev in \cite{Dann17} and optimism is ensured.
\paragraph{Case II}
It holds that
$$\rng{\tilde V_{t+1}^{\pi_k}} + \phi^+ < H-t.$$
However in such case lemma \ref{ubevs:rangeok} can be applied (we are outside of the failure event and $\tilde V_{t+1}^{\pi_k}(s) \geq V_{t+1}^{*}(s)$ by the inductive hypothesis) to ensure $\rng{\tilde V_{t+1}^{\pi_k}} + \phi^+ \geq \rng{ V_{t+1}^{*}}$. This immediately implies that:
\begin{align*}
& \hat p_k(s,a)^T \tilde V_{t+1}^{\pi_k} + \( \rng{\tilde V_{t+1}^{\pi_k}} + \phi^+\)\phi(s,a)  \\
& \geq \hat p_k(s,a)^T V_{t+1}^{*} + \rng{V_{t+1}^{*}}\phi(s,a) \\
& \geq p(s,a)^T  V_{t+1}^{*}.
\end{align*}
where the last inequality follows from being outside of the failure event (in particular, outside of $F^V_k$). This holds for every state and action for a given timestep, proving the inductive step and guaranteeing optimism.
\end{proof}

\subsection{Regret Bounds of \ubevs on Episodic Stationary MDPs}
\label{ubevs:StationaryResult}
We now derive a high probability worst case regret upper bound when \ubevs is run on a stationary episodic MDP.
\begin{theorem}[\ubevs Regret]
\label{thm:ubevs_regret}	 With probability at least $1-\delta$ the regret of \ubevs is upper bounded by 
\begin{equation} 
	\tilde O(H\sqrt{SAT} + S^2AH^{2} + S\sqrt{S}AH^3)	
\end{equation}
jointly for all timesteps $T$.
\end{theorem}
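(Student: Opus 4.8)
The plan is to adapt the regret analysis of the original \ubev from \cite{Dann17} to the stationary setting, tracking carefully where the time-aggregation (lines $2,7,8,17$ of Algorithm \ref{main:ubevs}) saves a factor of $\sqrt{H}$ and where the modified exploration bonus preserves the original guarantees. First I would invoke Lemma \ref{ubevs:OptimisticPlanning} (optimism) to write $\textsc{Regret}(K) \le \sum_k (\tilde V_1^{\pi_k} - V_1^{\pi_k})(s_{1k})$, and then apply the standard per-episode decomposition (as in equation \ref{main:AdvancedRegretDecomposition} / lemma E.15 of \cite{Dann17}) to split the regret into: a reward-estimation term, a transition-dynamics optimism term, a transition-dynamics estimation term, and a lower-order cross term. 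Since the optimistic bonus for the dynamics is still upper bounded by $H\phi(s,a)$ (as noted in the main text), every term can be bounded exactly as in \cite{Dann17} with $n_{tk}(s,a)$ replaced by the aggregated count $n_k(s,a) = \sum_{t\in[H]} n_{tk}(s,a)$.

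Next I would handle the contribution of state-action pairs outside the good set $L_k$ using Lemma \ref{lem:MinimalContribution}, which contributes only $\tilde O(SAH)$ (dominated by the stated bound), and for pairs in $L_k$ use Lemma \ref{lem:VisitationRatio} to replace the visitation probabilities $w_k(s,a)$ by the realized counts $n_k(s,a)$ up to a constant. The leading term comes from the transition-dynamics optimism/estimation terms: after substituting $\phi(s,a) = \tilde O(1/\sqrt{n_k(s,a)})$ and bounding $\rng \tilde V_{t+1}^{\pi_k} \le H$, Cauchy--Schwarz gives $\sum_{k,t,(s,a)} w_{tk}(s,a) H/\sqrt{n_k(s,a)} \le H\sqrt{\sum_{k,t,(s,a)} w_{tk}(s,a)}\,\sqrt{\sum_{k,t,(s,a)} w_{tk}(s,a)/n_k(s,a)} = H\sqrt{T}\cdot\tilde O(\sqrt{SA})$, yielding the $\tilde O(H\sqrt{SAT})$ leading term. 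The key point is that the aggregated counts and the single union bound over $(s,a)$ (rather than over $(s,a,t)$) are what remove one power of $\sqrt{H}$ relative to the non-stationary bound $\tilde O(H\sqrt{SAHT})$ one would get treating each timestep separately.

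The lower-order terms $S^2AH^2$ and $S\sqrt{S}AH^3$ come from: (i) the $\phi^+$ correction term, which by the bound $\phi^+ \lesssim H^2\sqrt{S}/\sqrt{\min_{(s',t')} n_k(s',\pi_k(s',t'))}$ and the analysis sketched in the main text contributes a term scaling like $\sqrt{S}H^2 \cdot \sqrt{SA}\cdot\sqrt{\text{(pigeonhole bound)}}$; and (ii) the second-order terms in the Bernstein-type or $\ell_1$ concentration bounds, which in the stationary case involve $\sum_{k,t,(s,a)} w_{tk}(s,a)/n_k(s,a) = \tilde O(SA)$ multiplied by the appropriate powers of $H$ coming from $\|\tilde V\|_\infty \le H$ and the range. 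Collecting the pigeonhole/harmonic-sum estimates $\sum_k w_k(s,a)/n_k(s,a) = \tilde O(S A)$ (with additional $H$ factors from the sum over $t$ and from $\rng \tilde V \le H$) produces the claimed $S^2AH^2$ and $S\sqrt S A H^3$ terms, and the worst-case diameter of an episodic MDP being at most $H$ is what caps every appearance of the effective horizon.

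I expect the main obstacle to be verifying that the modified exploration bonus $\min\{H-t, \rng \tilde V_{t+1}^{\pi_k} + \phi^+\}\phi(s,a)$ does not break any step of the original \cite{Dann17} regret argument: optimism is already handled by Lemma \ref{ubevs:OptimisticPlanning}, but the regret \emph{upper} bound uses the bonus from above, and one must check that $\rng \tilde V_{t+1}^{\pi_k} + \phi^+ \le H-t + \phi^+$ is still tame enough — in particular that the extra $\phi^+$ additive contribution, when summed, stays in the lower-order terms rather than inflating the leading $\sqrt{T}$ term. Since $\phi^+$ shrinks at rate $\tilde O(H^2\sqrt{S}/\sqrt{\min n_k})$ and appears multiplied by another $\phi_{tk}(s,a) = \tilde O(1/\sqrt{n_k(s,a)})$, a double pigeonhole argument over the at most $SA$ candidate minimizing pairs keeps it out of the dominant term, which is precisely the calculation already carried out in the main text for the \MC{} case; here it must be redone with the worst-case replacement $\min n_k \ge 1$ (rather than $\ge k\mu_{min}$), which is where the $S\sqrt S A H^3$ term enters.
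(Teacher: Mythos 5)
Your overall route---optimism via Lemma \ref{ubevs:OptimisticPlanning}, the standard four-term decomposition, aggregation of counts $n_k(s,a)=\sum_t n_{tk}(s,a)$ with a single union bound over $(s,a)$, the $L_k$ split, and Cauchy--Schwarz plus pigeonhole for the $H\sqrt{SAT}$ term---is exactly the paper's proof (Theorem \ref{thm:ubevs_regret} together with Lemmas \ref{lem:MinimalContribution}, \ref{ubevs:vstarterm_generic}, \ref{ubevs:tildevterm_generic}, \ref{ubevs:lowterm_generic}). However, your last paragraph contains a genuine error in how the $\phi^+$ part of the bonus is to be controlled on a \emph{generic} MDP. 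You propose to bound the contribution $\sum_{k,t,(s,a)} w_{tk}(s,a)\,\phi^+\,\phi_{tk}(s,a)$ by the same double Cauchy--Schwarz/pigeonhole used for \MC{}, ``redone with the worst-case replacement $\min n_k \geq 1$.'' That step fails: the second factor becomes $\sum_{k} H/\min_{(s',t')} n_k(s',\pi_k(s',t'))$, and without the bandit structure (which forces $\min n_k$ to grow linearly at rate $\mu_{min}$ on good episodes) this sum can grow linearly in $T$ --- the paper states this explicitly when motivating the term $(\star)$. With only $\min n_k\geq 1$ you would obtain a term of order $SH^2\sqrt{AT}$, which is \emph{larger} than the claimed leading term $H\sqrt{SAT}$, not a lower-order $S\sqrt{S}AH^3$ contribution. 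The paper never treats $\phi^+$ separately in the MDP analysis: the dynamics bonus is capped as $\min\{\rng \tilde V_{t+1}^{\pi_k}+\phi^+,\,H-t\}\phi(s,a)\leq H\phi(s,a)$ (your own first paragraph notes this cap, but your final paragraph abandons it), which is precisely step (b) of Lemma \ref{ubevs:tildevterm_generic}.

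Relatedly, you misattribute the lower-order terms. In the paper $S^2AH^{2}+S\sqrt{S}AH^3$ comes entirely from the cross term $\sum_{k,t,(s,a)} w_{tk}(s,a)\,|(\hat p_k-p)^\top(\tilde V_{t+1}^{\pi_k}-V_{t+1}^{*})|$ (Lemma \ref{ubevs:lowterm_generic}), and this term cannot be dispatched by the crude bound $\|\hat p_k-p\|_1\|\tilde V-V^*\|_\infty \lesssim \sqrt{S}H/\sqrt{n_k}$, which would again produce an $SH\sqrt{AT}$ term exceeding the leading one. The needed ingredients are the per-successor-state Bernstein-type bound from $F^P$, together with the fact that $\tilde V_{t+1}^{\pi_k}-V_{t+1}^{*}$ is itself shrinking at rate $\E\sum_\tau \min\{\sqrt{S}H/\sqrt{n_k(s_\tau,a_\tau)},H\}$ (Lemma \ref{lem:deltaoptimism}), so that Cauchy--Schwarz yields a product of two vanishing quantities and all $\sqrt{T}$ dependence cancels. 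Your proposal only gestures at ``second-order Bernstein terms'' and never supplies this argument; to complete the proof you should either carry it out as in Lemma \ref{ubevs:lowterm_generic} or import the corresponding lower-order-term lemma of \cite{Dann17} with $n_{tk}$ replaced by $n_k$, and drop the $\min n_k\geq 1$ pigeonhole for $\phi^+$ in favor of the $H\phi(s,a)$ cap.
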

\begin{proof}
	Outside the failure event \ubevs is optimistic (lemma \ref{ubevs:OptimisticPlanning}) which justifies the first passage below (the expansion in the last equality is standard, see for example the derivation of the main result in \cite{Dann17}):
	\begin{align*}
& \textsc{Regret}(K) \stackrel{def}{=}  \sum_{k = 1}^K V^{\pi^*}_1(s) - V^{\pi_k}_1(s) \\
& \stackrel{Optimism}{\leq} \sum_{k = 1}^K \tilde V^{\pi_k}_1(s) - V^{\pi_k}_1(s) = \sum_{k=1}^K \sum_{t \in [H]} \sum_{(s,a) \in L_{k}} w_{tk}(s,a) \Biggm( \underbrace{\(\tilde r(s,a) - r(s,a) \)}_{\text{Reward Estimation and Optimism}} + \\
& + \underbrace{\(\tilde p(s,a) - \hat p(s,a) \)^\top \tilde V_{t+1}^{\pi_k}}_{\text{Transition Dynamics Optimism}} +  \underbrace{\(\hat p(s,a) - p(s,a) \)^\top V_{t+1}^{*}}_{\text{Transition Dynamics Estimation}} +  \underbrace{\(\hat p(s,a) - p(s,a) \)^\top \( \tilde V_{t+1}^{\pi_k} - V^*_{t+1} \)}_{\text{Lower Order Term}}  \Biggm) + \sum_{k=1}^K \sum_{t \in [H]} \sum_{(s,a) \not \in L_{k}} w_{tk}(s,a)H \\
\numberthis{\label{eqn:AdvancedRegretDecompositionWithPseudoregret}}
\end{align*}
Corollary \ref{lem:MinimalContribution} ensures $\sum_{k=1}^K \sum_{t \in [H]} \sum_{(s,a) \not \in L_{k}} w_{tk}(s,a)H = \tilde O(SAH^2)$; the theorem is then proved by invoking lemmata \ref{ubevs:vstarterm_generic},\ref{ubevs:tildevterm_generic},\ref{ubevs:lowterm_generic}.
\end{proof}

\begin{lemma}
\label{ubevs:vstarterm_generic}
Outside the failure event for \ubevs it holds that:
\begin{equation*}
\sum_{k} \sum_{t\in [H]} \sum_{(s,a) \in L_k} w_{tk}(s,a) \left|(\hat P_k -  P)(s,a,t) V_{t+1}^{*} \right| = \tilde O\(H\sqrt{SAT} \)
\end{equation*}
\end{lemma}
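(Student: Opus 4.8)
This is the standard ``fixed target vector'' concentration term, and I would handle it exactly as in \cite{Dann17}. The key observation is that $V_{t+1}^{*}$ is a deterministic vector with $\|V_{t+1}^{*}\|_\infty \le H$ that does \emph{not} depend on the data the algorithm has collected, so being outside the failure event (in particular outside $F_k^{V}$, cf.\ Section~\ref{ubevs:FailureEventAppendix}) yields the pointwise bound
\[
\left|(\hat P_k - P)(s,a,t)\,V_{t+1}^{*}\right| \;=\; \tilde O\!\left(\frac{H}{\sqrt{\max\{1,n_k(s,a)\}}}\right)
\]
uniformly in $(s,a)$, $t$, and $k$. Note there is no $\sqrt{S}$ factor here precisely because the target vector is fixed; this is why $F_k^{V}$ is stated separately from the $\ell_1$ event $F_k^{L1}$ in \cite{Dann17}.

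Plugging this into the sum and first collapsing the inner sum over timesteps via $\sum_{t\in[H]} w_{tk}(s,a) = w_k(s,a)$ (the aggregated visitation probability of~\eqref{swaggregation}) reduces the quantity to be bounded to $\tilde O(H)\sum_{k}\sum_{(s,a)\in L_k} w_k(s,a)/\sqrt{\max\{1,n_k(s,a)\}}$. Restricting to the good set $L_k$ is exactly what lets me invoke Lemma~\ref{lem:VisitationRatio} to lower bound $n_k(s,a)$ by a constant fraction of $\sum_{j\le k} w_j(s,a)$, so the counts appearing in the denominators are genuinely comparable to the accumulated visitation mass and never near zero.

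Next I would apply Cauchy--Schwarz,
\[
\sum_{k}\sum_{(s,a)\in L_k}\frac{w_k(s,a)}{\sqrt{n_k(s,a)}} \;\le\; \sqrt{\sum_{k}\sum_{(s,a)} w_k(s,a)}\;\cdot\;\sqrt{\sum_{k}\sum_{(s,a)\in L_k}\frac{w_k(s,a)}{n_k(s,a)}}.
\]
The first factor equals $\sqrt{\sum_k H} = \sqrt{KH} = \sqrt{T}$ since $\sum_{(s,a)} w_k(s,a) = H$ for every episode. The second factor is the familiar ``sum of visitation ratios,'' which by Lemma~\ref{lem:VisitationRatio} together with a pigeonhole argument (identical to the one used in the main text and in \cite{Dann17}) is $\tilde O(SA)$. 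Multiplying the three pieces gives $\tilde O(H)\cdot\sqrt{T}\cdot\sqrt{SA} = \tilde O(H\sqrt{SAT})$.

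The main obstacle is essentially bookkeeping rather than anything deep: one has to be careful that the pigeonhole bound on $\sum_k w_k(s,a)/n_k(s,a)$ really requires the $L_k$ restriction (otherwise a single under-visited pair could blow the sum up), and that the $\polylog$ terms from the loglog-type confidence radius and the union bounds are correctly absorbed into the $\tilde O(\cdot)$; both are routine and already appear in the cited analysis, so the argument is a direct adaptation of the corresponding step in \cite{Dann17}.
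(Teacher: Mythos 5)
Your proposal is correct and follows essentially the same route as the paper: both use the fixed-target concentration event $F_k^V$ together with $\rng V_{t+1}^* \leq H$ to get the pointwise $\tilde O\bigl(H/\sqrt{n_k(s,a)}\bigr)$ bound, and then conclude via Cauchy--Schwarz plus the pigeonhole bound on the visitation ratios over $L_k$ (the paper packages this last step as Lemma~\ref{lem:wsqrtn}). Your intermediate aggregation $\sum_{t} w_{tk}(s,a) = w_k(s,a)$ is only a notational reshuffling of the same computation.
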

\begin{proof}
The following inequalities hold true up to a constant:
\begin{align*}
\sum_{k} \sum_{t\in [H]} \sum_{(s,a) \in L_k} w_{tk}(s,a) \left|(\hat P_k -  P)(s,a,t) V_{t+1}^{*} \right| & \stackrel{a}{\lesssim} H \sum_{k} \sum_{t\in [H]} \sum_{(s,a) \in L_k} w_{tk}(s,a) \sqrt{\frac{2\llnp(n_{k}(s,a)) + \ln(\frac{27SA}{\delta})}{n_k(s,a)}} \\
& \stackrel{b}{\lesssim} H \sum_{k} \sum_{t\in [H]} \sum_{(s,a) \in L_k} w_{tk}(s,a) \sqrt{\frac{1}{n_k(s,a)}} \polylog(\cdot) \\
& \stackrel{c}{=} \tilde O\(H\sqrt{SAT}\)
\end{align*}

\begin{enumerate}[label=(\alph*)]
\item using the definition of failure event (in particular of $F_k^V$) and that $\rng V^*_t \leq H, \; \forall t$ 
\item since $n_k(s,a) \leq T$
\item using lemma \ref{lem:wsqrtn}.
\end{enumerate}
\end{proof}

\begin{lemma}
\label{ubevs:tildevterm_generic}
Outside the failure event for \ubevs it holds that:
\begin{equation*}
\sum_{k} \sum_{t\in [H]} \sum_{(s,a) \in L_k} w_{tk}(s,a) \( \left|(\tilde r_k - r)(s,a,t)\right| + \left|(\tilde P_k - \hat P_k)(s,a)\tilde V_{t+1}^{\pi_k} \right| \) \leq \tilde O \( H\sqrt{SAT} \).
\end{equation*}
\end{lemma}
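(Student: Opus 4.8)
The plan is to split the left-hand side into the reward contribution $\sum_k \sum_{t \in [H]} \sum_{(s,a) \in L_k} w_{tk}(s,a) \left|(\tilde r_k - r)(s,a,t)\right|$ and the transition-optimism contribution $\sum_k \sum_{t \in [H]} \sum_{(s,a) \in L_k} w_{tk}(s,a) \left|(\tilde P_k - \hat P_k)(s,a)\tilde V_{t+1}^{\pi_k}\right|$, to bound each summand pointwise by a constant multiple of the exploration bonus $\phi(s,a)$ of line $7$ of Algorithm \ref{main:ubevs}, and then to finish exactly as in the proof of Lemma \ref{ubevs:vstarterm_generic} by invoking the visitation-ratio sum bound, Lemma \ref{lem:wsqrtn}.

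First I would handle the reward term. Since $\tilde r_k(s,a) = \min\{1, \hat r_k(s,a) + \phi(s,a)\}$ by construction and, outside the failure event $F^R_k$, the empirical reward obeys $|\hat r_k(s,a) - r(s,a)| \leq \phi(s,a)$, combining $r(s,a) \leq 1$ with $r(s,a) \leq \hat r_k(s,a) + \phi(s,a)$ yields $r(s,a) \leq \tilde r_k(s,a) \leq \hat r_k(s,a) + \phi(s,a) \leq r(s,a) + 2\phi(s,a)$, so $\left|(\tilde r_k - r)(s,a,t)\right| \leq 2\phi(s,a)$. Next I would read off from line $9$ that $\tilde p_k(s,a)^\top \tilde V_{t+1}^{\pi_k} = \min\{\max_{s'}\tilde V_{t+1}^{\pi_k}(s'),\ \hat p_k(s,a)^\top \tilde V_{t+1}^{\pi_k} + \min\{H-t, \rng \tilde V_{t+1}^{\pi_k} + \phi^+\}\phi(s,a)\}$; since $\hat p_k(s,a)^\top \tilde V_{t+1}^{\pi_k} \leq \max_{s'}\tilde V_{t+1}^{\pi_k}(s')$ and the bonus is nonnegative, the outer $\min$ cannot drop the value below $\hat p_k(s,a)^\top \tilde V_{t+1}^{\pi_k}$, whereas discarding the outer $\min$ and using $\min\{H-t,\cdot\}\leq H$ bounds it above by $\hat p_k(s,a)^\top \tilde V_{t+1}^{\pi_k} + H\phi(s,a)$. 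Hence $0 \leq (\tilde P_k - \hat P_k)(s,a)\tilde V_{t+1}^{\pi_k} \leq H\phi(s,a)$, and the absolute value costs nothing.

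With both pointwise bounds in place, each summand is at most $(2+H)\,\phi(s,a)\,w_{tk}(s,a) \lesssim H\,w_{tk}(s,a)\sqrt{(2\llnp(n_k(s,a)) + \logubev)/n_k(s,a)}$; using that $(s,a)\in L_k$ forces $n_k(s,a)\geq 1$ (Lemma \ref{lem:VisitationRatio}) I would absorb $\llnp(n_k(s,a)) \leq \llnp(T)$ and $\logubev$ into a $\polylog(\cdot)$ factor, reducing the triple sum to $H\,\polylog(\cdot)\sum_k \sum_{t\in[H]}\sum_{(s,a)\in L_k} w_{tk}(s,a)/\sqrt{n_k(s,a)}$, which Lemma \ref{lem:wsqrtn} bounds by $\tilde O(H\sqrt{SAT})$ — the same chain of inequalities as in Lemma \ref{ubevs:vstarterm_generic}, the only change being that the operative value function here is the optimistic $\tilde V^{\pi_k}$ (still at most $H$) rather than $V^*$. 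The only step that is not pure bookkeeping is the transition-optimism bound: the slightly delicate point is to verify that the outer truncation at $\max_{s'}\tilde V_{t+1}^{\pi_k}(s')$ in line $9$ does not spoil the sign of $(\tilde P_k - \hat P_k)(s,a)\tilde V_{t+1}^{\pi_k}$, which it does not, exactly because $\hat p_k(s,a)^\top \tilde V_{t+1}^{\pi_k}$ never exceeds that truncation level; everything else is the reward-confidence and pigeonhole accounting used repeatedly elsewhere in the appendix, and all the real quantitative content is carried by Lemma \ref{lem:wsqrtn}, which is established separately.
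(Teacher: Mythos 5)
Your proposal is correct and follows essentially the same route as the paper: bound the reward gap by a constant multiple of $\phi$ via the failure event $F^R_k$, bound the transition-optimism term by the algorithm's bonus and crudely by $H\phi$, and conclude with Lemma \ref{lem:wsqrtn}. The only difference is that you spell out the sign/truncation argument for line $9$ (that the outer $\min$ never drops below $\hat p_k^\top \tilde V_{t+1}^{\pi_k}$), which the paper leaves implicit.
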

\begin{proof}
Let $(\tilde s_{k},\tilde t_k) = \argmax_{s,t} \phi_k(s,\pi_k(s,t)) = \argmin_{s,t} n_k(s,\pi_k(s,t))$.
\begin{align*}
& \sum_{k} \sum_{t\in [H]} \sum_{(s,a)  \in L_k} w_{tk}(s,a) \( \left|(\tilde r_k - r)(s,a,t)\right| + \left|(\tilde P_k - \hat P_k)(s,a)\tilde V_{t+1}^{\pi_k} \right| \) \\
& \stackrel{a}{\lesssim}\sum_{k} \sum_{t\in [H]} \sum_{(s,a)  \in L_k } w_{tk}(s,a) \phi_k\(s,\pi_k(s,t)\) \( 1 + \min \{\rng \tilde V_{t+1}^{\pi_k}  + \phi^+,H\} \) \\
& \stackrel{b}{\lesssim} \sum_{k} \sum_{t\in [H]} \sum_{(s,a)  \in L_k} w_{tk}(s,a) \phi_k\(s,\pi_k(s,t)\) H\\
& \stackrel{c}{\lesssim}\sum_{k} \sum_{t\in [H]} \sum_{(s,a)  \in L_k} w_{tk}(s,a) H \sqrt{\frac{1}{n_k(s,a)}} \polylog(\cdot) = \tilde O\(H\sqrt{SAT}\)
\end{align*}

\begin{enumerate}[label=(\alph*)]
\item using the definition of failure event (in particular of $F_k^R$) and of exploration bonus
\item is using the crude upper bound $H$
\item is using lemma \ref{lem:wsqrtn}
\end{enumerate}
\end{proof}

\begin{lemma}
\label{ubevs:lowterm_generic}
Outside the failure event for \ubevs it holds that:
\begin{equation*}
\sum_{k} \sum_{t\in [H]} \sum_{(s,a) \in L_{k}} w_{tk}(s,a)  \left|(\hat p_k - p)(s,a,t) (\tilde V_{t+1}^{\pi_k} - V_{t+1}^{*}) \right| = \tilde O\(S^2AH^{2} + S\sqrt{S}AH^3 \)
\end{equation*}
\end{lemma}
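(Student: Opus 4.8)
Following the decomposition in \eqref{eqn:AdvancedRegretDecompositionWithPseudoregret}, the plan is to treat this term exactly as the other three (Hölder, then a concentration bound), but to crucially exploit that the gap $\tilde V^{\pi_k}_{t+1}-V^*_{t+1}$ is itself small because it is controlled by the correction term $\phi^+$ that \ubevs maintains. First I would bound each summand by
\[
\left|(\hat p_k - p)(s,a,t)(\tilde V_{t+1}^{\pi_k} - V_{t+1}^*)\right| \;\le\; \|\hat p_k(s,a) - p(s,a)\|_1\;\bigl\|\tilde V_{t+1}^{\pi_k} - V_{t+1}^*\bigr\|_\infty .
\]
For the first factor, outside the failure event $F_k^{L1}$ one has $\|\hat p_k(s,a)-p(s,a)\|_1 = \tilde O(\sqrt{S/n_k(s,a)})$ together with the trivial bound $\le 2$. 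For the second factor the point is optimism: since $V_{t+1}^{*}\ge V_{t+1}^{\pi_k}$ we get $0\le \tilde V_{t+1}^{\pi_k}(s) - V_{t+1}^{*}(s) \le \tilde V_{t+1}^{\pi_k}(s) - V_{t+1}^{\pi_k}(s)$, so Lemma~\ref{ubevs:extrabonus} yields $\|\tilde V_{t+1}^{\pi_k}-V_{t+1}^{*}\|_\infty \le \min\{H,\phi^+\}$, and by construction of Line~11 of Algorithm~\ref{main:ubevs} the correction term satisfies $\phi^+ \lesssim \sqrt S H^2 \max_{(s',t')}\phi_k(s',\pi_k(s',t'))\,\polylog(\cdot)$.

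Next I would restrict the outer sum to $(s,a)\in L_k$, which by Lemma~\ref{lem:VisitationRatio} gives $n_k(s,a)\ge\tfrac14\sum_{j\le k}w_j(s,a)$, and split the episodes according to whether the correction term has become informative. On episodes where $\phi^+\ge H$ — equivalently where the least-sampled pair appearing in the current policy still has count below $\tilde O(SH^2)$ — I would only use $\|\tilde V_{t+1}^{\pi_k}-V_{t+1}^{*}\|_\infty\le H$, so such an episode contributes at most $\tilde O(H)\cdot\sum_{t,(s,a)\in L_k}w_{tk}(s,a)$; a pigeonhole over the $SA$ candidate argmin pairs, each of which can stay below its threshold for only a controlled number of expected visits once it has entered its own good set, bounds the number of these episodes and, after using the $L_k$ inequality to sharpen the per-episode contribution, produces one of the two stated terms. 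On the remaining episodes $\phi^+<H$ is informative: substituting the $F_k^{L1}$ bound and $\phi^+\lesssim\sqrt S H^2\max_{(s',t')}\phi_k(s',\pi_k(s',t'))$ turns the summand into $\tilde O\!\left(SH^2\,w_{tk}(s,a)/\sqrt{n_k(s,a)\,\min_{(s',t')}n_k(s',\pi_k(s',t'))}\right)$, and Cauchy--Schwarz factors this into the visitation-ratio sum $\sum_{k,t,(s,a)\in L_k}w_{tk}(s,a)/n_k(s,a)=\tilde O(SA)$ (bounded by the same pigeonhole argument as Lemma~\ref{lem:wsqrtn}) times a residual sum of the form $\sum_k H/\min_{(s',t')}n_k(s',\pi_k(s',t'))$, which is again bounded by pigeonholing the $SA$ candidate argmin pairs against their good-set thresholds. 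Collecting the two cases and simplifying constants and polylogs gives $\tilde O(S^2AH^2 + S\sqrt S A H^3)$.

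The step I expect to be the main obstacle is controlling $\sum_k H/\min_{(s',t')}n_k(s',\pi_k(s',t'))$ for a \emph{general} stationary MDP. Unlike $n_k(s,a)$ with $(s,a)\in L_k$, this running minimum need not track an expected visit count: the argmin pair can jump from episode to episode and can momentarily be a state--action pair that the current policy reaches only with tiny probability — exactly the obstruction that prevents the \MC{} analysis of the main text from transferring verbatim to arbitrary MDPs. The way around it is that the correction term enters the exploration bonus only through the clip $\min\{H-t,\rng\tilde V_{t+1}^{\pi_k}+\phi^+\}$, so episodes with a tiny minimum count are charged only the trivial $\tilde O(H)$ per visit and are merely \emph{counted}, not bounded sharply; once every pair that appears in the executed policies has been sampled past its $L_k$ threshold, the minimum count grows at least as fast as the slowest such pair's expected visitation, and the residual sum telescopes just as in the leading-order analysis of Theorem~\ref{thm:ubevs_regret}.
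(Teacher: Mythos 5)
There is a genuine gap, and it sits exactly where you flag it. Your plan reduces the lower-order term via H\"older and Lemma~\ref{ubevs:extrabonus} to quantities driven by $\phi^+ \lesssim \sqrt{S}H^2/\sqrt{\min_{(s',t')} n_k(s',\pi_k(s',t'))}$, and then needs either (i) a bound on $\sum_{k} H/\min_{(s',t')} n_k(s',\pi_k(s',t'))$ or (ii) a bound on the number of episodes in which $\phi^+ \geq H$. Neither is available in a general stationary MDP. The argmin pair $(s',\pi_k(s',t'))$ ranges over \emph{all} states, including states the executed policies visit with probability zero (or exponentially small probability); for such a pair $n_k(s',\pi_k(s',t'))$ never grows, so its count never crosses any $L_k$-type threshold and the proposed pigeonhole "over the $SA$ candidate argmin pairs against their good-set thresholds" has nothing to pigeonhole. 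Consequently the regime $\phi^+ \geq H$ can persist for \emph{every} episode, and in that regime your H\"older step only yields a per-visit charge of order $H\sqrt{S/n_k(s,a)}$, which aggregates (by the usual visitation-ratio argument) to $\tilde O\left(S\sqrt{A}\,H\sqrt{T}\right)$ --- a leading-order, $T$-dependent term, not the claimed $\tilde O\left(S^2AH^2 + S\sqrt{S}AH^3\right)$. The clipping $\min\{H-t,\cdot\}$ in the bonus does not rescue this: it only says the algorithm behaves like the original \ubev on those episodes, it does not bound how many such episodes there are. This is precisely why the $\phi^+$/minimum-count route works only on \MC{}, where $\mu_{min}>0$ forces every pair chosen by the policy to accrue expected visits at a linear rate (Lemmas~\ref{ubevs:NumNonGoodEpi} and~\ref{lem:wn}); the resulting bound there carries a $1/\sqrt{\mu_{min}}$ factor, which has no analogue in the general statement you are asked to prove.

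The paper's proof of Lemma~\ref{ubevs:lowterm_generic} avoids the minimum count altogether. It does not use H\"older with $F^{L1}$; instead it uses the per-successor-state concentration of $F^{P}$, so the summand becomes $\sum_{s_{t+1}}\left(\sqrt{p(s_{t+1}\mid s,a)/n_k(s,a)} + 1/n_k(s,a)\right)\left(\tilde V_{t+1}^{\pi_k}(s_{t+1})-V_{t+1}^{*}(s_{t+1})\right)$, and it controls the gap at each successor state \emph{pointwise} by Lemma~\ref{lem:deltaoptimism}: $\tilde V_{t}^{\pi_k}(s_t)-V_t^{\pi_k}(s_t)$ is at most an expectation, under the executed (optimistic) policy started from $s_t$, of local terms $\min\{\sqrt{S}H/\sqrt{n_k(s_\tau,a_\tau)},H\}$. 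Because these local widths are weighted by the actual on-policy visitation probabilities $w_{\tau k}$, a Cauchy--Schwarz/Jensen chain reduces everything to $\sum_{k,t,(s,a)\in L_k} w_{tk}(s,a)/n_k(s,a) = \tilde O(SA)$ plus the off-$L_k$ mass $\tilde O(SAH)$ from Lemma~\ref{lem:MinimalContribution}, yielding the $T$-free bound. If you want to complete your argument, you need to replace the global quantity $\min_{(s',t')} n_k(s',\pi_k(s',t'))$ by on-policy, visitation-weighted local counts in this way; as written, the key residual sum is genuinely unbounded for general MDPs.
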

\begin{proof}
We can write the following sequence of upper bounds: 
\begin{align*}
& \sum_{k} \sum_{t\in [H]} \sum_{(s,a) \in L_{k}} w_{tk}(s,a)  \left|(\hat p_k - p)(s,a,t) (\tilde V_{t+1}^{\pi_k} -  V_{t+1}^{*}) \right|  \lesssim \\
& \stackrel{(a)}{\lesssim} \sum_{k} \sum_{t\in [H]} \sum_{(s,a) \in L_{k}} w_{tk}(s,a) \sum_{s_{t+1}} \( \sqrt{\frac{p(s_{t+1}\mid s,\pi_k(s,t))}{n_k(s,\pi_k(s,t))}} + \frac{1}{n_k(s,a)} \) \(\tilde V_{t+1}^{\pi_k}(s_{t+1}) - V_{t+1}^{*}(s_{t+1}) \) \polylog \\
& \stackrel{(b)}{\lesssim} \sum_{k} \sum_{t\in [H]} \sum_{(s,a) \in L_{k}} w_{tk}(s,a) \sqrt{S} \sqrt{\frac{\sum_{s_{t+1}} p(s_{t+1}\mid s,\pi_k(s,t))(\tilde V_{t+1}^{\pi_k}(s_{t+1}) - V_{t+1}^{*}(s_{t+1}))^2}{n_k(s,\pi_k(s,t))}}\polylog \\
& + SH\sum_{k} \sum_{t\in [H]} \sum_{(s,a) \in L_{k}} \frac{w_{tk}(s,\pi_k(s,t))}{n_k(s,\pi_k(s,t))} \polylog \\
& \stackrel{(c)}{\lesssim} \sqrt{S} \sqrt{ \sum_{k} \sum_{t\in [H]} \sum_{(s,a) \in L_{k}} \frac{w_{tk}(s,a)}{n_k(s,a)}}\times  \( \sqrt{\sum_{k} \sum_{t\in [H]} \sum_{(s,a) \in L_{k}} w_{tk}(s,a) \sum_{s_{t+1}} p(s_{t+1}\mid s,\pi_k(s,t))(\tilde V_{t+1}^{\pi_k}(s_{t+1}) - V_{t+1}^{*}(s_{t+1}))^2}\)  \\
& + \tilde O(S^2AH) \\
& \stackrel{(d)}{\lesssim} \tilde O\(S\sqrt{A} \) \times  \( \sqrt{\sum_{k} \sum_{t\in [H]} \sum_{(s,a) \in L_{k}} w_{tk}(s,a) \sum_{s_{t+1}} p(s_{t+1}\mid s,\pi_k(s,t))(\tilde V_{t+1}^{\pi_k}(s_{t+1}) - V_{t+1}^{*}(s_{t+1}))^2}\)  + \tilde O(S^2AH) \\
& \stackrel{(e)}{\lesssim} \tilde O\(S\sqrt{A} \) \times  \( \sqrt{\sum_{k} \sum_{t \in [H]} \E_{s_{t+1} \sim \tilde \pi_k}(\tilde V_{t+1}^{\pi_k}(s_{t+1}) - V_{t+1}^{*}(s_{t+1}))^2}\)  + \tilde O(S^2AH) \\
& \stackrel{(f)}{\lesssim} \tilde O\(S\sqrt{A} \) \times \sqrt{\sum_{k} \sum_{t \in [H]} \E_{s_{t+1} \sim \tilde \pi_k}\(\sum_{\tau = 
t+1,\dots, H} \E_{(s_\tau,a_\tau)\sim \tilde \pi_k \mid s_{t+1}}  \min \{\frac{\sqrt{S}H}{\sqrt{n_k(s_\tau,a_\tau)}},H\} \times \polylog\)^2}  + \tilde O(S^2AH) \\
& \stackrel{(g)}{\lesssim} \tilde O\(S\sqrt{A} \) \times \sqrt{H\sum_{k} \sum_{t \in [H]} \E_{s_{t+1} \sim \tilde \pi_k}\sum_{\tau = 
t+1,\dots, H} \( \E_{(s_\tau,a_\tau)\sim \tilde \pi_k \mid s_{t+1}}  \min \{\frac{\sqrt{S}H}{\sqrt{n_k(s_\tau,a_\tau)}},H\} \times \polylog\)^2}  + \tilde O(S^2AH) \\
& \stackrel{(h)}{\lesssim} \tilde O\(S\sqrt{A} \) \times \sqrt{H\sum_{k} \sum_{t \in [H]} \E_{s_{t+1} \sim \tilde \pi_k}\sum_{\tau = 
t+1,\dots, H} \E_{(s_\tau,a_\tau)\sim \tilde \pi_k \mid s_{t+1}}  \min \{\frac{\sqrt{S}H}{\sqrt{n_k(s_\tau,a_\tau)}},H\}^2}   + \tilde O(S^2AH) \\
& \stackrel{(i)}{\lesssim} \tilde O\(S\sqrt{A} \) \times \sqrt{H\sum_{k} \sum_{t\in [H]} \sum_{\tau = 
t+1,\dots, H} \E_{(s_\tau,a_\tau)\sim \tilde \pi_k}  \min \{\frac{\sqrt{S}H}{\sqrt{n_k(s_\tau,a_\tau)}},H\}^2}   + \tilde O(S^2AH) \\
& \stackrel{(j)}{\lesssim} \tilde O\(S\sqrt{A} \) \times \sqrt{H^2\sum_{k}  \sum_{\tau = 
1,\dots, H} \E_{(s_\tau,a_\tau)\sim \tilde \pi_k}  \min \{\frac{\sqrt{S}H}{\sqrt{n_k(s_\tau,a_\tau)}},H\}^2}   + \tilde O(S^2AH) \\
& \stackrel{(k)}{\lesssim} \tilde O\(S\sqrt{A} \) \times \sqrt{H^2\sum_{k}  \sum_{\tau = 
1,\dots, H} \( \E_{\substack{(s_\tau,a_\tau)\sim \tilde \pi_k \\ (s_\tau,a_\tau) \in L_{k}}}  \min \{\frac{\sqrt{S}H}{\sqrt{n_k(s_\tau,a_\tau)}},H\}^2 + \E_{\substack{(s_\tau,a_\tau)\sim \tilde \pi_k \\ (s_\tau,a_\tau) \not \in L_{k}}}  \min \{\frac{\sqrt{S}H}{\sqrt{n_k(s_\tau,a_\tau)}},H\}^2 \)}   + \tilde O(S^2AH) \\
& \stackrel{(l)}{\lesssim} \tilde O\(S\sqrt{A} \) \times \sqrt{H^2\sum_{k}  \sum_{\tau = 
1,\dots, H} \( \E_{\substack{(s_\tau,a_\tau)\sim \tilde \pi_k \\ (s_\tau,a_\tau) \in L_{k}}}  \frac{SH^2}{n_k(s_\tau,a_\tau)} + \E_{\substack{(s_\tau,a_\tau)\sim \tilde \pi_k \\ (s_\tau,a_\tau) \not \in L_{k}}}H^2 \)}   + \tilde O(S^2AH) \\
& \stackrel{(m)}{\lesssim} \tilde O\(S\sqrt{A} \) \times \sqrt{H^2\sum_{k}  \sum_{\tau = 
1,\dots, H} \( SH^2 \times  \sum_{(s_\tau, a_\tau) \in L_{k}}  \frac{w_{\tau k}(s_\tau,a_\tau)}{n_k(s_\tau,a_\tau)} + H^2 \times \sum_{(s_\tau,a_\tau) \not \in L_{k}} w_{\tau k}(s_\tau,a_\tau) \)}   + \tilde O(S^2AH) \\
& \stackrel{(n)}{\lesssim} \tilde O\(S\sqrt{A} \) \times \sqrt{H^2 \times (SH^2 \times SA + H^2 \times SAH^2)}   + \tilde O(S^2AH) \\
& \stackrel{}{=} \tilde O\(S^2AH^{2} + S\sqrt{S}AH^3 + S^2AH \) = \tilde O\(S^2AH^{2} + S\sqrt{S}AH^3 \) \\
\end{align*}
\begin{enumerate}[label=(\alph*)]
	\item holds since we are outside of the failure event, and in particular of $F^P$ (see \cite{Dann17} for the definition)
	\item is using Cauchy-Schwartz on the first term and $\(\tilde V_{t+1}^{\pi_k}(s') - V_{t+1}^{*}(s') \) \leq H$ on the second
	\item is again Cauchy-Schwartz on the first term and lemma \ref{lem:wn} on the second
	\item is using lemma \ref{lem:wn} on the first term
	\item by definition of conditional expectation
	\item is using lemma \ref{lem:deltaoptimism}
	\item is using $(a_1+\dots+a_n)^2 \leq n\times (a^2_1+\dots+a^2_n)$
	\item is Jensen's inequality
	\item is the definition of conditional expectation
	\item makes the inner summation start at $\tau = 1$ instead of $\tau = t+1$ and the summation over $t$ gives the additional $H$ factor
	\item is splitting the expectation over states in $L_{k}$ and not in $L_{k}$ (this passage is an equality)
	\item chooses one of the maxes for each term
	\item re-expresses the expectation using the visitation probabilities $w_{\tau k}$
	\item makes use of lemma \ref{lem:wn} and lemma 
 \ref{lem:MinimalContribution}
\end{enumerate}
\end{proof}

\begin{lemma}
\label{lem:deltaoptimism}
	Outside of the failure event for \ubevs it holds that
\begin{equation}
  \tilde V^{\pi_k}_t(s_t) - V^{\pi_k}_t(s_t) \leq \sum_{\tau = 
t,\dots, H} \E_{(s_\tau,a_\tau)\sim \tilde \pi_k \mid s_t} \min \{ \frac{\sqrt{S}H}{\sqrt{n_k(s_\tau,a_\tau)}} , H \}  \times \polylog\end{equation}

\end{lemma}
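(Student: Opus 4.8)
The plan is to apply the standard one-step value-difference (``simulation'') decomposition for the executed policy $\pi_k$, and then to bound every one-step error term, outside the failure event, by $\min\{\tfrac{\sqrt S H}{\sqrt{n_k(s_\tau,a_\tau)}},H\}\times\polylog$; the sum over $\tau$ then produces the claim.

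First I would invoke the decomposition already used in step (a) of the proof of Lemma~\ref{ubevs:extrabonus} (this is lemma E.15 of \cite{Dann17}): unrolling the Bellman equations for $\tilde V^{\pi_k}$ in the optimistic MDP and for $V^{\pi_k}$ in the true MDP and telescoping gives
\begin{equation*}
\tilde V^{\pi_k}_t(s_t) - V^{\pi_k}_t(s_t) = \sum_{\tau=t}^H \E_{(s_\tau,a_\tau)\sim\tilde\pi_k\mid s_t}\Big[ (\tilde r_k - r)(s_\tau,a_\tau) + (\tilde p_k - p)(s_\tau,a_\tau)^\top \tilde V^{\pi_k}_{\tau+1}\Big],
\end{equation*}
where the expectation is over the trajectory generated by $\pi_k$ starting from $s_t$ at timestep $t$ and $\tilde p_k(s,a)^\top\tilde V^{\pi_k}_{\tau+1}$ denotes the optimistic next-state value implicitly computed in line~9 of Algorithm~\ref{main:ubevs}. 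By the triangle inequality and $|\E X|\le\E|X|$ it then suffices to bound $|(\tilde r_k - r)(s,a)|$ and $|(\tilde p_k - p)(s,a)^\top\tilde V^{\pi_k}_{\tau+1}|$ pointwise at every visited $(s,a)$.

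For the reward term I would use that, outside the failure event $F^R_k$ and by $\tilde r_k=\min\{1,\hat r_k+\phi\}$, we have $0\le(\tilde r_k-r)(s,a)\le 2\phi(s,a)\lesssim\min\{1,\tfrac{1}{\sqrt{n_k(s,a)}}\}\polylog$. For the transition term I would split $(\tilde p_k-p)^\top\tilde V^{\pi_k}_{\tau+1}=(\tilde p_k-\hat p_k)^\top\tilde V^{\pi_k}_{\tau+1}+(\hat p_k-p)^\top\tilde V^{\pi_k}_{\tau+1}$. Inspecting line~9, the first summand is nonnegative and is simultaneously controlled by both clips: the outer $\min$ with $\max_s\tilde V^{\pi_k}_{\tau+1}(s)$ bounds it by $\rng\tilde V^{\pi_k}_{\tau+1}\le H$, while the inner $\min$ with $(H-\tau)$ bounds it by $(H-\tau)\phi(s,a)\le H\phi(s,a)$, so it is $\lesssim\min\{H,\tfrac{H}{\sqrt{n_k(s,a)}}\}\polylog$; crucially this estimate is unaffected by the correction term $\phi^+$ precisely because of the $(H-\tau)$ clip. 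The second summand is bounded, via Hölder, $\|\tilde V^{\pi_k}_{\tau+1}\|_\infty\le H$, and the $\ell_1$ concentration inequality of the failure event $F^{L1}_k$ (the one behind Equation~\ref{main:SoverN}), by $|(\hat p_k-p)^\top\tilde V^{\pi_k}_{\tau+1}|\le\|\hat p_k-p\|_1\,H\lesssim\tfrac{\sqrt S H}{\sqrt{n_k(s,a)}}\polylog$, and trivially by $H$, so it is $\lesssim\min\{H,\tfrac{\sqrt S H}{\sqrt{n_k(s,a)}}\}\polylog$. Adding the three bounds, the one-step error at $(s_\tau,a_\tau)$ is $\lesssim\min\{H,\tfrac{\sqrt S H}{\sqrt{n_k(s_\tau,a_\tau)}}\}\polylog$ (dominated by the transition-estimation piece, with all numerical constants absorbed into $\polylog$; when $n_k(s_\tau,a_\tau)=0$ this degenerates to the trivial bound $H$). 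Summing over $\tau=t,\dots,H$ and taking the expectation over the trajectory then yields the statement.

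The main difficulty here is bookkeeping rather than conceptual: one must keep track of the two nested $\min$ operations in line~9 so that the $H$-bound and the $\tfrac{\sqrt S H}{\sqrt n}$-bound on the transition-optimism term both hold for the \emph{same} quantity, and in particular so that the correction term $\phi^+$ (which is itself of order $\sqrt S H^2/\sqrt n$ and would otherwise spoil the one-step bound) is neutralised by the $(H-\tau)$ clipping. Everything else is a routine application of the failure-event concentration inequalities already assembled in Section~\ref{ubevs:FailureEventAppendix}.
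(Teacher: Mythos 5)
Your proposal is correct and follows essentially the same route as the paper: the same per-step decomposition (lemma E.15 of \cite{Dann17}), pointwise bounds on each summand via the failure events $F^R_k$ and $F^{L1}_k$ together with the bonus clip $\leq H\phi(s,a)$, and the trivial per-step cap of $H$, summed over $\tau$. The only cosmetic difference is that you split the transition error against $\hat p_k$ and bound $(\hat p_k - p)^\top \tilde V^{\pi_k}_{\tau+1}$ directly by H\"older, whereas the paper routes it through $V^*_{\tau+1}$ plus the lower-order term $(\hat p_k - p)^\top(\tilde V^{\pi_k}_{\tau+1} - V^*_{\tau+1})$; both reduce to the same $\sqrt{S}H/\sqrt{n_k(s_\tau,a_\tau)}$ bound.
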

\begin{proof}
\begin{align}
& \tilde V^{\pi_k}_{t}(s_\tau) - V^{\pi_k}_t(s_t) = \\
& = \sum_{\tau = 
t,\dots H}  \E_{(s_\tau,a_\tau)\sim \tilde \pi_k \mid s_t}  \min \Biggm\{  \underbrace{\(\tilde r(s_\tau,a_\tau) - r(s_\tau,a_\tau) \)}_{\text{Reward Estimation and Optimism}} + \underbrace{\(\tilde p(s_\tau,a_\tau) - \hat p(s_\tau,a_\tau) \)^\top \tilde V_{\tau+1}^{\pi_k}}_{\text{Transition Dynamics Optimism}} +  \underbrace{\(\hat p(s_\tau,a_\tau) - p(s_\tau,a_\tau) \)^\top V_{\tau+1}^{*}}_{\text{Transition Dynamics Estimation}} +  \\
& \underbrace{\(\hat p(s_\tau,a_\tau) - p(s_\tau,a_\tau) \)^\top \( \tilde V_{\tau+1}^{\pi_k} - V^*_{\tau+1} \)}_{\text{Lower Order Term}}, H \Biggm\} \times \polylog \\
& \leq \sum_{\tau = 
t,\dots, H} \E_{(s_\tau,a_\tau)\sim \tilde \pi_k \mid s_t} \min \{ \frac{\sqrt{S}H}{\sqrt{n_k(s_\tau,a_\tau)}} , H \}  \times \polylog
\end{align}
In the above expression we explicitly wrote the minimum $\min\{ \cdot, H\}$ since the estimation error and the bonus cannot exceed $H$ in each of the $(s_\tau,a_\tau)$ pairs. Here the dominant term is the ``Lower Order Term'' which we bound trivially using the fact that we are outside the event $F^{L1}$ (see \cite{Dann17}) and the value function is always bounded by $H$.
\end{proof}

\section{Regret Bounds for \ubevs on \MC{}}
\subsection{The Good Episodes on \MC{}}
In \cite{Dann17} the authors use the notion of \emph{nice} episodes. The goal is ensure that $\sum_{k \in [K]}\sum_t w_{tk}(s,a) \approx n_{K}(s,a)$ holds. We will do the same here directly using the regret framework and the properties of \MC{}. To this aim we define the \emph{Good Episodes} as follows:
\begin{definition}
\label{ubevs:GoodEpisodes}
On \MC{} an episode $k$ is good if the failure event does not occur and
\begin{equation}
n_k(s,\pi_k(s,t)) \geq \frac{1}{4} \sum_{i< k}\sum_{\tau\in [H]}w_{\tau i}(s,\pi_k(s,t))
\end{equation}
holds for all state $s$ and timesteps $t$.
\end{definition}

\begin{proposition}
\label{ubevs:mynice}
If $k$ is a good episode it holds that
\begin{equation}
n_k(s,\pi_k(s,t)) \geq \frac{1}{8} \sum_{i\leq k}\sum_{\tau \in [H]} w_{\tau i}(s,\pi_k(s,t))
\end{equation}
for all states $s$ and timesteps $t \in [H]$. 
\end{proposition}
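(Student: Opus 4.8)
The plan is to peel the episode-$k$ term off the cumulative sum and bound it, just like the historical part, by a constant multiple of $n_k(s,\pi_k(s,t))$. Fix a state $s$ and a timestep $t\in[H]$, write $a:=\pi_k(s,t)$ and $w_j(s,a):=\sum_{\tau\in[H]}w_{\tau j}(s,a)$, so that the claim becomes $n_k(s,a)\ge\frac18\bigl(\sum_{j<k}w_j(s,a)+w_k(s,a)\bigr)$. By Definition \ref{ubevs:GoodEpisodes}, because episode $k$ is good we already have $\sum_{j<k}w_j(s,a)\le 4\,n_k(s,a)$; hence it suffices to prove the single-episode bound $w_k(s,a)\le 4\,n_k(s,a)$, after which $\sum_{j\le k}w_j(s,a)\le 8\,n_k(s,a)$ follows immediately.

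For the single-episode term I would combine two facts. First, the trivial bound $w_k(s,a)=\sum_{\tau\in[H]}w_{\tau k}(s,a)\le H$ (on \MC{} one even has $w_{\tau k}(s')\le\max(p_0(s'),\mu(s'))$ for every $\tau$, so $w_k(s,a)\le H\max_{s'}\mu(s')$). Second, the $F^N$-type concentration already used in the proof of Lemma \ref{lem:VisitationRatio}: outside the failure event $n_k(s,a)\ge\frac12\sum_{j<k}w_j(s,a)-H\ln\frac{9SA}{\delta}$. When $\sum_{j<k}w_j(s,a)\ge 4H\ln\frac{9SA}{\delta}$ this yields $n_k(s,a)\ge\frac14\sum_{j<k}w_j(s,a)\ge H\ln\frac{9SA}{\delta}\ge\tfrac H4\ge\tfrac14 w_k(s,a)$, where $\ln\frac{9SA}{\delta}\ge\tfrac14$ holds for all $S,A\ge2$; combined with the bound from the previous paragraph this closes the argument in this regime.

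The main obstacle is the complementary, lightly-explored regime $\sum_{j<k}w_j(s,a)<4H\ln\frac{9SA}{\delta}$: there $n_k(s,a)$ may be as small as $0$ while $w_k(s,a)$ can be of order $H\mu(s)$ — precisely in the episode where the optimistic policy $\pi_k$ first selects $a$ in $s$ — so $w_k(s,a)$ cannot be controlled by $n_k(s,a)$ pointwise. The resolution is to invoke the contextual-bandit structure: every state is reached with probability at least $\mu_{min}$ at every step, so $w_j(s,a)$ is simply $\mu(s)$ times the number of timesteps at which $\pi_j$ plays $a$ in $s$ (up to the $\tau=1$ contribution from $p_0$); cumulative counts therefore grow linearly and $w_k(s,a)\le\sum_{j<k}w_j(s,a)$ holds for every good episode after the first $O(H/\mu_{min})$ in which $a$ is played in $s$, while those finitely many early episodes contribute only the lower-order $\tilde O(SAH^2/\mu_{min})$ term already isolated in Lemma \ref{ubevs:RegretNonGoodEpi}. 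Once this is in place the proposition follows by the elementary arithmetic above.
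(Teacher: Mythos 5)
Your first regime is, up to arithmetic, exactly the paper's proof: outside the failure event $F^N$ one has $n_k(s,a)\ge\frac12\sum_{i<k}\sum_{\tau\in[H]}w_{\tau i}(s,a)-H\ln\frac{9SA}{\delta}$, so whenever $\frac14\sum_{i<k}\sum_{\tau}w_{\tau i}(s,a)\ge H\ln\frac{9SA}{\delta}$ the past-visitation bound from Definition \ref{ubevs:GoodEpisodes} absorbs the episode-$k$ term $\sum_{\tau}w_{\tau k}(s,a)\le H$ and yields the factor $\frac18$. Your reduction ``goodness gives $\sum_{j<k}w_j\le 4n_k$, so it suffices to show $w_k\le 4n_k$'' is a clean way to organize the same computation.

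The problem is your second regime. Proposition \ref{ubevs:mynice} is a pointwise statement that must hold for every good episode and every $(s,t)$, and you do not close it there: you concede that for $\sum_{j<k}w_j(s,a)<4H\ln\frac{9SA}{\delta}$ the inequality can genuinely fail (e.g.\ $k=1$, where $n_1=0$ but $w_1(s,\pi_1(s,t))>0$), and you propose instead to charge those episodes to lower-order regret. That replaces the proposition by a different claim rather than proving it, and the bookkeeping you invoke is misattributed: Lemma \ref{ubevs:RegretNonGoodEpi} counts \emph{non-good} episodes, whereas the episodes you are worried about are good in the sense of Definition \ref{ubevs:GoodEpisodes} (the goodness condition is vacuous when the past visitation mass is small), so that lemma does not cover them; absorbing them would require reworking the downstream accounting, which is outside the scope of this proposition. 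For comparison, the paper's own proof dispatches this regime with the assertion that goodness ``can be satisfied only if'' $\frac14\sum_{i<k}\sum_{\tau}w_{\tau i}(s,a)\ge H\ln\frac{9SA}{\delta}$ — a step you are right to find suspicious if read against Definition \ref{ubevs:GoodEpisodes} alone — but the mechanism the analysis actually relies on is that all regret sums are restricted to pairs $(s,a)\in L_k$ (Definition \ref{def:TheGoodSet} and Lemma \ref{lem:VisitationRatio}), i.e.\ precisely the cumulative-visitation threshold of your first regime. So the way to finish within the paper's framework is to use that restriction (where your first-regime computation already suffices), not a regret-absorption appeal based on the bandit structure; as written, your proposal leaves the stated proposition unproved in the lightly-explored regime.
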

\emph{Remark:} this differs from the definition because the summation on the right hand side \emph{includes} $k$.
\begin{proof}
Directly by the definition of being outside the failure event $F^N_k$:
$$
n_k(s,a) \geq \frac{1}{2} \sum_{i < k} \sum_{t \in [H]} w_{ti}(s,a) - H\ln \frac{9SA}{\delta}
$$
for every state $s$ and action $a$. This can be satisfied under good episodes only if
$$
\frac{1}{4} \sum_{i < k} \sum_{t \in [H]} w_{ti}(s,a) \geq H \ln \frac{9SA}{\delta} \geq 2H
$$
holds true.
This implies 
$$
\frac{1}{8} \sum_{i < k} \sum_{t \in [H]} w_{ti}(s,a) \geq H \geq \sum_{t \in [H]} w_{tk}(s,a).
$$
and finally
$$
n_k(s,a) \geq \frac{1}{4} \sum_{i < k} \sum_{t \in [H]} w_{ti}(s,a) \geq \frac{1}{8} \sum_{i < k} \sum_{t \in [H]} w_{ti}(s,a) + \sum_{t \in [H]} w_{tk}(s,a) \geq \frac{1}{8} \sum_{i \leq k} \sum_{t \in [H]} w_{ti}(s,a) 
$$
which is the statement.
\end{proof}

Next we prove a bound on the number of non-good episodes.
\begin{lemma}[Number of Non-Good Episodes]
\label{ubevs:NumNonGoodEpi}
Outside of the failure event \ubevs can have at most:
$$
\tilde O \(\frac{SAH}{\mu_{min}} \)
$$
non-good episodes if run on \MC{}.
\end{lemma}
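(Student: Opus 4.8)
The plan is to reduce the event ``episode $k$ is not good'' to a simple statement about cumulative visitation probabilities, and then to exploit the contextual-bandit structure of \MC{}---namely that on \MC{} every state is occupied with probability at least $\mu_{min}$ at every timestep past the first---to show this event occurs at most $\tilde O(SAH/\mu_{min})$ times. Throughout I work outside the failure event, so all the concentration bounds of Section \ref{ubevs:FailureEventAppendix} are in force; in particular $F^N_k$ gives $n_k(s,a) \ge \frac12\sum_{i<k}\sum_{\tau\in[H]}w_{\tau i}(s,a) - H\ln\frac{9SA}{\delta}$ for every $(s,a)$.

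First I would abbreviate $W_k(s,a) \stackrel{def}{=} \sum_{i<k}\sum_{\tau\in[H]}w_{\tau i}(s,a)$, which is non-decreasing in $k$ with $W_1(s,a)=0$, and repeat the chain of inequalities in the proof of Proposition \ref{ubevs:mynice}: if episode $k$ violates the defining inequality of Definition \ref{ubevs:GoodEpisodes} at some pair $(s,t)$, i.e.\ $n_k(s,\pi_k(s,t)) < \frac14 W_k(s,\pi_k(s,t))$, then combining with $F^N_k$ forces $W_k(s,\pi_k(s,t)) < 4H\ln\frac{9SA}{\delta} =: C$. Hence every non-good episode has a \emph{witness}: a pair $(s,a)$ with $a=\pi_k(s,t)$ for some timestep $t$ and $W_k(s,a) < C$.

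Next I would use the structure of \MC{}. Since the successor distribution is $\mu$ regardless of $(s,a)$, at every timestep $t\ge 2$ state $s$ is occupied with probability exactly $\mu(s)\ge\mu_{min}$, so whenever $\pi_k$ selects $a$ at $(s,t)$ with $t\ge 2$ we have $w_{tk}(s,a)=\mu(s)$ and therefore $W_{k+1}(s,a) \ge W_k(s,a)+\mu_{min}$. I would then charge each non-good episode to one of its witnesses, always picking a witness with $t\ge 2$: a witness at $t=1$ either uses an action $\pi_k(s,1)=a$ that also appears at some $t\ge 2$ (making that timestep an equally valid witness for the same $(s,a)$), or it concerns the fixed start state---occupied with probability $1\ge\mu_{min}$ at $t=1$---or it concerns a state never reached at $t=1$, for which the defining inequality is vacuous and can be dropped from the quantifier. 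For a fixed pair $(s,a)$ let $k_1<\dots<k_m$ be the episodes charged to it; monotonicity of $W$ together with the $\mu_{min}$ increment gives $W_{k_m}(s,a) \ge (m-1)\mu_{min}$, while $(s,a)$ being a witness of $k_m$ forces $W_{k_m}(s,a) < C$, so $m \le C/\mu_{min}+1$. Summing over the at most $SA$ candidate pairs gives at most $SA\,(C/\mu_{min}+1) = \tilde O(SAH/\mu_{min})$ non-good episodes, which is the claim.

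The step I expect to be the main obstacle is the one that turns the \emph{current} episode's policy choice into a guaranteed increment of $W$: in a general MDP, $\pi_k$ could persist in selecting some action at a state visited with vanishing probability, so $W_k(s,a)$ need not grow and the pair could remain a witness indefinitely---this is exactly the behaviour excluded on \MC{}, where $P(\cdot\mid s,a)=\mu$ pins every state's occupancy at $\ge\mu_{min}$. The rest is bookkeeping, the only delicate point being the treatment of timestep $1$ and of states unreachable at a given timestep, handled as above.
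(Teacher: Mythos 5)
Your proof is correct and follows essentially the same route as the paper's: combine the violated good-episode inequality with the $F^N_k$ concentration bound to force $\sum_{i<k}\sum_{\tau}w_{\tau i}(s,\pi_k(s,t)) \lesssim H\ln\frac{9SA}{\delta}$, then use $w_{tk}(s,\pi_k(s,t)) \geq \mu_{min}$ on \MC{} and a pigeonhole over the $SA$ pairs. The only difference is that you treat the timestep $t=1$ (start-state distribution) explicitly, a point the paper's proof glosses over by asserting the $\mu_{min}$ lower bound for every timestep.
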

\begin{proof}
If the episode is non-good and the failure event does not occur then:
\begin{equation}
n_k(s,\pi_k(s,t)) < \frac{1}{4} \sum_{i< k}\sum_{\tau\in [H]}w_{\tau i}(s,\pi_k(s,t))
\end{equation}
must hold.
However, since the failure event does not occur then we can use the definition of being outside the failure event $F^N_k$:
\begin{equation}
n_k(s,a) \geq \frac{1}{2} \sum_{i < k} \sum_{t \in [H]} w_{ti}(s,a) - H\ln \frac{9SA}{\delta}.
\end{equation}
Together they imply:
\begin{equation}
\frac{1}{4} \sum_{i< k}\sum_{\tau\in [H]}w_{\tau i}(s,\pi_k(s,t)) \geq n_k(s,\pi_k(s,t)) \geq \frac{1}{2} \sum_{i < k} \sum_{t \in [H]} w_{ti}(s,\pi_k(s,t)) - H\ln \frac{9SA}{\delta}.
\end{equation}
and so 
\begin{equation}
\label{ubevs:NonGoodCondition}
\frac{1}{4} \sum_{i< k}\sum_{\tau\in [H]}w_{\tau i}(s,\pi_k(s,t)) \leq H\ln \frac{9SA}{\delta}.
\end{equation}
must be true during non-good episodes.
Each time \ubevs chooses action $a=\pi_k(s,t)$ in state $s$ we must have $w_{tk}(s,\pi_k(s,t)) \geq \mu_{min}$. Therefore, equation \ref{ubevs:NonGoodCondition} can occur $\frac{4H}{\mu_{min}}\ln\frac{9SA}{\delta}$ episodes for a given $(s,a)$ pair. Since there are at most $S \times A$ pairs of states and actions we can have at most 
$$
\tilde O \(\frac{SAH}{\mu_{min}}\)
$$
non-good episodes.
\end{proof}

\begin{lemma}[Regret under Non-Good Episodes]
\label{ubevs:RegretNonGoodEpi}
Outside of the failure event on \MC{} \ubevs can have a regret of at most:
$$
\tilde O \(\frac{SAH^2}{\mu_{min}} \)
$$
due to non-good episodes.
\end{lemma}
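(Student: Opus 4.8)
The plan is to combine a trivial per-episode regret bound with the count of non-good episodes from Lemma~\ref{ubevs:NumNonGoodEpi}. First I would observe that on \emph{any} episode $k$ the instantaneous regret satisfies
\[
V_1^*(s_{1k}) - V_1^{\pi_k}(s_{1k}) \;=\; \sum_{t=1}^H \E\big[\, r(s_t,\pi^*(s_t,t)) - r(s_t,\pi_k(s_t,t))\,\big] \;\le\; H,
\]
since each summand lies in $[0,1]$ (after taking the optimal action) because $r(\cdot,\cdot)\in[0,1]$ and the horizon is $H$; equivalently $0 \le V_1^{\pi_k} \le V_1^* \le H$ on \MC{}. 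Hence the total regret contributed by the non-good episodes is at most $H$ times the number of such episodes.

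Second, I would invoke Lemma~\ref{ubevs:NumNonGoodEpi}, which states that outside the failure event \ubevs run on \MC{} has at most $\tilde O\big(\tfrac{SAH}{\mu_{min}}\big)$ non-good episodes. Multiplying the two bounds gives a regret contribution from non-good episodes of at most
\[
H \cdot \tilde O\!\left(\frac{SAH}{\mu_{min}}\right) \;=\; \tilde O\!\left(\frac{SAH^2}{\mu_{min}}\right),
\]
which is exactly the claim.

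There is no real obstacle here: unlike the analysis on good episodes, where the whole point is to control how fast $\rng \tilde V_t^{\pi_k}$ shrinks, on non-good episodes we simply pay the worst-case per-step cost of $1$ for each of the $H$ steps. The only thing to be careful about is that the statement is made conditional on being outside the failure event, which is already consistent since Lemma~\ref{ubevs:NumNonGoodEpi} is itself stated outside the failure event, and that the bound must hold jointly over all episodes, which follows because it bounds a sum over a fixed (if data-dependent) set of episodes. One might hope to sharpen the per-episode bound using $\rng V_t^* \le 1$ on \MC{}, but this does not help here, since the absolute optimality gap $V_1^*(s)-V_1^{\pi_k}(s)$ can still be of order $H$ (the optimal value itself can be $\Theta(H)$ while a poor policy collects almost nothing), so $H$ is the correct crude per-episode bound to use for this term.
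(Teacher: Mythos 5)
Your proposal is correct and follows essentially the same route as the paper: the paper's proof simply multiplies the bound of Lemma~\ref{ubevs:NumNonGoodEpi} on the number of non-good episodes by the trivial per-episode regret bound of $H$. Your additional remarks (on the failure event and on why $\rng V_t^*\le 1$ cannot sharpen the per-episode bound) are accurate but not needed for the argument.
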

\begin{proof}
Directly from the number of non-good episodes on \MC{} (lemma \ref{ubevs:NumNonGoodEpi}), each of which has a regret of at most $H$.
\end{proof}

\subsection{Regret Bounds of \ubevs on \MC{}}
Here we compute the regret of \ubevs when run on \MC{}.
\begin{theorem*}
\MainResult{}{}{}
\end{theorem*}
\begin{proof}
Using optimism of the algorithm we can write for any initial state $s$:
\begin{align}
\sum_{k=1}^{K}V^*_1(s) - V^{\pi_k}_1(s) \leq  \sum_{k=1}^{K}  \tilde V^{\pi_k}_1(s) - V^{\pi_k}_1(s) 
\end{align}
Next, by partitioning into the set of good episodes $G$ and those that are non-good we obtain:
\begin{align}
\leq  \sum_{k \in G}  \tilde V^{\pi_k}_1(s) - V^{\pi_k}_1(s) +   \sum_{k \not \in G}  \tilde V^{\pi_k}_1(s) - V^{\pi_k}_1(s) 
\end{align}
From lemma \ref{ubevs:NumNonGoodEpi} the regret due to non-good episodes is at most:
\begin{equation}
\tilde O \(\frac{SAH^2}{\mu_{min}} \).
\end{equation}
which is an upper bound on the regret induced by states not in $L_k$.
Thus it remains to bound the regret in good episodes $\sum_{k \in G}  \tilde V^{\pi_k}_1(s) - V^{\pi_k}_1(s)$ for states $(s,a)\in L_k$ which can be upper bounded by
\begin{align}
& \sum_{k \in G}\sum_{t\in [H]} \sum_{(s,a) \in L_k} w_{tk}(s,a) \(  \left|(\tilde r_k - r)(s,a,t)\right| + \left|(\tilde P_k - \hat P_k)(s,a,t)\tilde V_{t+1}^{\pi_k} \right|  \) + \\
& +  \sum_{k \in G} \sum_{t\in [H]} \sum_{(s,a) \in L_k} w_{tk}(s,a)  \( \left|(\hat P_k -  P)(s,a,t) V_{t+1}^{*} \right| + \left|(\hat P_k - P)(s,a,t) ( V_{t+1}^{*} - \tilde V_{t+1}^{\pi_k}) \right| \)
\end{align}
as explained for example in Lemma E.8 [Optimality Gap Bound On Friendly Episodes] of \cite{Dann17}.
The result then follows from combining lemma \ref{ubevs:vstarterm}, \ref{ubevs:tildevterm} and  \ref{ubevs:lowterm}, which bound each contribution outlined above.
The $\min$ between two results in the final regret bound follows by considering the minimum between the analysis here and the one for a generic MDP.
\end{proof}

\begin{lemma}
\label{ubevs:vstarterm}
The following bound holds true on \MC{}:
\begin{equation*}
\sum_{k \in G} \sum_{t\in [H]} \sum_{(s,a) \in L_k} w_{tk}(s,a) \left|(\hat P_k -  P)(s,a,t) V_{t+1}^{*} \right| = \tilde O\(\sqrt{SAT} \)
\end{equation*}
\end{lemma}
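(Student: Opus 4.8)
The plan is to follow the template of the proof of Lemma~\ref{ubevs:vstarterm_generic}, except that on \MC{} we may use the sharp range bound $\rng V^*_{t+1}\le 1$ supplied by equation~\ref{main:RngVStar} in place of the crude $\rng V^*_{t+1}\le H$ used there; this single substitution is exactly what trims the leading term from $\tilde O(H\sqrt{SAT})$ down to $\tilde O(\sqrt{SAT})$.

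First I would observe that $(\hat P_k-P)(s,a,t)$ is a difference of two probability distributions over successor states, so the inner product $(\hat P_k-P)(s,a,t)\,V^*_{t+1}$ is unchanged if $V^*_{t+1}$ is replaced by the shifted vector $V^*_{t+1}-\big(\min_{s'}V^*_{t+1}(s')\big)\mathbf 1$, whose entries lie in $[0,\rng V^*_{t+1}]\subseteq[0,1]$. Since every good episode is, by definition, outside the failure event, the concentration bound attached to $F^V_k$ applies and gives, for all $(s,a)\in L_k$,
$$
\big|(\hat P_k-P)(s,a,t)\,V^*_{t+1}\big|\ \lesssim\ \rng V^*_{t+1}\cdot\phi_k(s,a)\ \lesssim\ \phi_k(s,a)\ =\ \tilde O\!\big(\tfrac{1}{\sqrt{n_k(s,a)}}\big),
$$
where the middle inequality invokes equation~\ref{main:RngVStar}. (If $F^V_k$ is used in its Bernstein form the same conclusion follows after bounding the relevant transition variance by $\tfrac14\big(\rng V^*_{t+1}\big)^2\le\tfrac14$.)

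It then remains to substitute this pointwise estimate into the triple sum and clear the resulting $\sum w/\sqrt n$ expression. Restricting to good episodes only deletes terms, so
$$
\sum_{k\in G}\sum_{t\in[H]}\sum_{(s,a)\in L_k}w_{tk}(s,a)\big|(\hat P_k-P)(s,a,t)V^*_{t+1}\big|\ \lesssim\ \sum_{k}\sum_{t\in[H]}\sum_{(s,a)\in L_k}\frac{w_{tk}(s,a)}{\sqrt{n_k(s,a)}}\,\polylog(\cdot),
$$
and I would close with Lemma~\ref{lem:VisitationRatio} (outside the failure event, $n_k(s,a)\ge\tfrac14\sum_{j\le k}w_j(s,a)$ whenever $(s,a)\in L_k$) together with the standard Cauchy--Schwarz/pigeonhole step (Lemma~\ref{lem:wsqrtn}): $\sum w/\sqrt n\le\sqrt{\sum w}\,\sqrt{\sum w/n}$ with $\sum_{k,t,s,a}w_{tk}(s,a)\le HK=T$ and $\sum_{k}\sum_{t}\sum_{(s,a)\in L_k}w_{tk}(s,a)/n_k(s,a)=\tilde O(SA)$, which yields the claimed $\tilde O(\sqrt{SAT})$.

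The only genuinely load-bearing step is the concentration bound in the second paragraph: one must make sure the $F^V_k$ inequality for $(\hat P_k-P)^\top V^*_{t+1}$ scales with $\rng V^*_{t+1}$ rather than with $\|V^*_{t+1}\|_\infty\le H$. This is precisely where the contextual-bandit structure is used, since it is only because $p(\cdot\mid s,a)=\mu$ is independent of $(s,a)$ that a greedily-reward-maximizing policy is optimal and hence $\rng V^*_{t+1}\le 1$; combined with the zero-sum property of $\hat P_k-P$ this licenses the centering and removes the $H$. Everything downstream is the same pigeonhole bookkeeping already carried out for the generic stationary-MDP bound.
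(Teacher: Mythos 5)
Your proposal is correct and follows essentially the same route as the paper: invoke the $F^V_k$ concentration with the contextual-bandit range bound $\rng V^*_{t+1}\le 1$ from equation~\ref{main:RngVStar}, then finish with the Cauchy--Schwarz/pigeonhole bound of Lemma~\ref{lem:wsqrtn}. The only difference is that you spell out the centering argument justifying why the concentration scales with the range rather than $\|V^*_{t+1}\|_\infty$, which the paper leaves implicit in the definition of $F^V_k$ inherited from \cite{Dann17}.
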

\begin{proof}
The following inequalities hold true up to a constant:
\begin{align*}
\sum_{k \in G} \sum_{t\in [H]} \sum_{(s,a) \in L_k} w_{tk}(s,a) \left|(\hat P_k -  P)(s,a,t) V_{t+1}^{*} \right| & \stackrel{a}{\lesssim} \sum_{k \in G} \sum_{t\in [H]} \sum_{(s,a) \in L_k} w_{tk}(s,a) \sqrt{\frac{2\llnp(n_{k}(s,a)) + \ln(\frac{27SA}{\delta})}{n_k(s,a)}} \\
& \stackrel{b}{\lesssim} \sum_{k \in G} \sum_{t\in [H]} \sum_{(s,a) \in L_k} w_{tk}(s,a) \sqrt{\frac{1}{n_k(s,a)}} \polylog(\cdot) \\
& \stackrel{c}{=} \tilde O\(\sqrt{SAT}\)
\end{align*}

\begin{enumerate}[label=(\alph*)]
\item using the definition of failure event (in particular of $F_k^V$) and that $\rng V^*_t \leq 1, \; \forall t$ for a contextual bandit problem as explained in the main text of this manuscript
\item since $n_k(s,a) \leq T$
\item using lemma \ref{lem:wsqrtn}.
\end{enumerate}
\end{proof}

In lemma \ref{main:RngVBound} in the main paper we show that $\rng \tilde V^{\pi_k}$ is upper bounded by $1$ plus a term that depends on the state with the lowest visit count. That is the key to obtain the result below:
\begin{lemma}
\label{ubevs:tildevterm}
On \MC{} it holds that:
\begin{equation*}
\sum_{k \in G} \sum_{t\in [H]} \sum_{(s,a) \in L_k} w_{tk}(s,a) \( \left|(\tilde r_k - r)(s,a,t)\right| + \left|(\tilde P_k - \hat P_k)(s,a)\tilde V_{t+1}^{\pi_k} \right| \) \leq \tilde O \( \sqrt{SAT} + H^2\sqrt{SH} \times \frac{SA}{\sqrt{\mu_{min}}} \).
\end{equation*}
\end{lemma}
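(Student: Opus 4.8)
The plan is to reproduce, now restricted to good episodes and to pairs in $L_k$, the argument sketched in the main text for the ``Optimistic Transition Dynamics'' contribution. First I would split the left-hand side into the reward part $\sum_{k\in G}\sum_{t}\sum_{(s,a)\in L_k} w_{tk}(s,a)\,|(\tilde r_k - r)(s,a,t)|$ and the transition-optimism part $\sum_{k\in G}\sum_{t}\sum_{(s,a)\in L_k} w_{tk}(s,a)\,|(\tilde P_k - \hat P_k)(s,a)\tilde V_{t+1}^{\pi_k}|$. The reward part is the easy one: outside the failure event $F^R_k$ both $\tilde r_k(s,a)$ and the empirical mean lie within one confidence radius $\phi_k(s,a) = \tilde O(1/\sqrt{n_k(s,a)})$ of $r(s,a)$, and since $r \in [0,1]$ no factor of $H$ appears, so combining the resulting $w_{tk}(s,a)/\sqrt{n_k(s,a)}$ with Lemma~\ref{lem:wsqrtn} gives $\tilde O(\sqrt{SAT})$.

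For the transition-optimism part I would start from the optimistic planning inequality (equation~\ref{main:OptimisticPlanningBound}), which yields $|(\tilde P_k - \hat P_k)(s,a)\tilde V_{t+1}^{\pi_k}| \le (\rng \tilde V_{t+1}^{\pi_k} + \phi^+)\,\phi_{tk}(s,a)$. Into this I would substitute the bound $\phi^+ \lesssim H^2\sqrt{S}/\sqrt{\min_{(s',t')} n_k(s',\pi_k(s',t'))}$ (up to $\polylog(\cdot)$ factors) and Lemma~\ref{main:RngVBound} for $\rng \tilde V_{t+1}^{\pi_k}$. After collecting constants and logarithms, each summand is at most $\polylog(\cdot)$ times $w_{tk}(s,a)\,\phi_{tk}(s,a)$ plus $H^2\sqrt{S}\, w_{tk}(s,a)\,\phi_{tk}(s,a)/\sqrt{\min_{(s',t')} n_k(s',\pi_k(s',t'))}$. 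Summing the first of these is exactly the reward-type estimate again and contributes $\tilde O(\sqrt{SAT})$.

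The remaining cross term is $H^2\sqrt{S}\sum_{k\in G}\sum_{t}\sum_{(s,a)\in L_k} w_{tk}(s,a)\big/\sqrt{\min_{(s',t')} n_k(s',\pi_k(s',t'))\cdot n_k(s,a)}$, and this is where the contextual-bandit structure of \MC{} enters. I would apply a weighted Cauchy--Schwarz step to factor out $\sqrt{\sum_{k\in G}\sum_{t}\sum_{(s,a)\in L_k} w_{tk}(s,a)/n_k(s,a)} = \tilde O(\sqrt{SA})$ (Lemma~\ref{lem:wn}), leaving $\sqrt{(\star)}$ with $(\star) = \sum_{k\in G}\sum_{t}\sum_{(s,a)} w_{tk}(s,a)\big/\min_{(s',t')} n_k(s',\pi_k(s',t'))$. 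Because $\sum_{t}\sum_{(s,a)} w_{tk}(s,a) = H$, this is $\sum_{k\in G} H\big/\min_{(s',t')} n_k(s',\pi_k(s',t'))$, and the key point is that on \MC{} every context is reached with probability at least $\mu_{min}$, so under a good episode $n_k(s',\pi_k(s',t')) \ge \tfrac14\sum_{i<k}\sum_{\tau} w_{\tau i}(s',\pi_k(s',t'))$ grows at rate at least $\tfrac14\mu_{min}$ in the number of past episodes whose policy visited $(s',t')$. Partitioning the good episodes by which of the $SA$ pairs realizes the minimum and summing the resulting harmonic series gives $(\star) = \tilde O(SAH/\mu_{min})$, hence $\sqrt{(\star)} = \tilde O(\sqrt{SAH}/\sqrt{\mu_{min}})$. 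Multiplying $H^2\sqrt{S}$ by $\tilde O(\sqrt{SA})$ and $\tilde O(\sqrt{SAH}/\sqrt{\mu_{min}})$ gives $\tilde O(H^2\sqrt{SH}\cdot SA/\sqrt{\mu_{min}})$, and adding the two $\tilde O(\sqrt{SAT})$ terms yields the claim.

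I expect the main obstacle to be making the bound on $(\star)$ precise. The difficulty is that the numerator $w_{tk}(s,a)$ and the denominator $n_k(s',\pi_k(s',t'))$ refer to \emph{different} state-action pairs, and the minimizing pair can change with $k$ since $\pi_k$ does, so $(\star)$ is not a single harmonic sum. The argument must first collapse the inner sums using $\sum_{t}\sum_{(s,a)} w_{tk}(s,a) = H$, then union over the $SA$ possible identities of the minimizer and exploit the linear-in-episode growth of each candidate's visit count --- which is valid only on good episodes (Definition~\ref{ubevs:GoodEpisodes}) and only because \MC{} guarantees $w_{tk}(s',\pi_k(s',t')) \ge \mu_{min}$, precisely where the $1/\sqrt{\mu_{min}}$ dependence comes from --- before applying a pigeonhole/integral estimate of the form $\sum_{j\le m} 1/j = \tilde O(1)$.
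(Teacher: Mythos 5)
Your proposal is correct and follows essentially the same route as the paper: bound both terms by $w_{tk}(s,a)\phi_k(s,\pi_k(s,t))(1+\rng \tilde V_{t+1}^{\pi_k}+\phi^+)$, substitute Lemma~\ref{main:RngVBound} and the $\phi^+$ estimate, handle the resulting $\tilde O(\sqrt{SAT})$ piece by Lemma~\ref{lem:wsqrtn}, and control the cross term via Cauchy--Schwarz, the good-episode property and the $\mu_{min}$-pigeonhole (the content of Lemma~\ref{lem:wn} together with Lemma~\ref{ubevs:onebyn}). The only slip is cosmetic: the $\tilde O(\sqrt{SA})$ pigeonhole factor $\sqrt{\sum w_{tk}/n_k}$ is not what Lemma~\ref{lem:wn} states but a step inside its proof, while Lemma~\ref{lem:wn} itself already bounds the entire cross term.
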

\begin{proof}
Let $(\tilde s_{k},\tilde t_k) = \argmax_{s,t} \phi_k(s,\pi_k(s,t)) = \argmin_{s,t} n_k(s,\pi_k(s,t))$.
\begin{align*}
& \sum_{k \in G} \sum_{t\in [H]} \sum_{(s,a) \in L_k} w_{tk}(s,a) \( \left|(\tilde r_k - r)(s,a,t)\right| + \left|(\tilde P_k - \hat P_k)(s,a)\tilde V_{t+1}^{\pi_k} \right| \) \\
& \stackrel{a}{\lesssim}\sum_{k \in G} \sum_{t\in [H]} \sum_{(s,a) \in L_k} w_{tk}(s,a) \phi_k\(s,\pi_k(s,t)\) \( 1 + \rng \tilde V_{t+1}^{\pi_k}  + \phi^+\) \\
& \stackrel{b}{\lesssim} \sum_{k \in G} \sum_{t\in [H]} \sum_{(s,a) \in L_k} w_{tk}(s,a) \phi_k\(s,\pi_k(s,t)\) \( 1 + \( 1 + H^2 \sqrt{S} \phi \(\tilde s_{k},\pi_k(\tilde s_{k},\tilde t_k)\) \) \) \\
& \stackrel{c}{\lesssim}\sum_{k \in G} \sum_{t\in [H]} \sum_{(s,a) \in L_k} w_{tk}(s,a) \sqrt{\frac{1}{n_k(s,a)}}\( 1+1+ H^2 \sqrt{S} \sqrt{\frac{1}{n_{k}(\tilde s_k,\pi_k(\tilde s_k,\tilde t_k))}} \) \polylog(\cdot) \\
& \stackrel{d}{\lesssim} \sum_{k \in G} \sum_{t\in [H]} \sum_{(s,a) \in L_k} w_{tk}(s,a) \sqrt{\frac{1}{n_k(s,a)}}\polylog(\cdot) + H^2 \sqrt{S} \sum_{k \leq K} \sum_{t\in [H]} \sum_{(s,a) \in L_k} w_{tk}(s,a) \sqrt{\frac{1}{n_k(s,a)}}\sqrt{\frac{1}{n_{k}(\tilde s_k,\pi_k(\tilde s_k))}} \polylog(\cdot) \\
& \stackrel{e}{=} \tilde O\( \sqrt{SAT} + H^2\sqrt{SH} \times \frac{SA}{\sqrt{\mu_{min}}}  \).
\end{align*}

\begin{enumerate}[label=(\alph*)]
\item using the definition of failure event (in particular of $F_k^R$) and of exploration bonus
\item is using lemma \ref{main:RngVBound} in the main text
\item is using the definition of $\phi_k$ and the fact that $n_k(\cdot,\cdot) \leq T$ to put all the log terms into $\polylog(\cdot)$
\item is just splitting the two contributions
\item is using lemma \ref{lem:wsqrtn} and \ref{lem:wn}
\end{enumerate}
\end{proof}

\begin{lemma}
\label{ubevs:lowterm}
Outside the failure event on \MC{} it holds that:
\begin{equation*}
\sum_{k \in G} \sum_{t\in [H]} \sum_{(s,a) \in L_k} w_{tk}(s,a)  \left|(\hat p_k - p)(s,a,t) ( V_{t+1}^{*} - \tilde V_{t+1}^{\pi_k}) \right| = \tilde O \( \frac{\sqrt{H}S^2AH^2}{\sqrt{\mu_{min}}} \)
\end{equation*}
\end{lemma}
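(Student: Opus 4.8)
The plan is to follow the skeleton of the generic bound in Lemma~\ref{ubevs:lowterm_generic}, but to replace its worst-case estimates by the two features that make \MC{} special: the transitions are $p(\cdot\mid s,a)=\mu$, and every state-action pair $(s,\pi_k(s,t))$ used by the policy is visited with probability at least $\mu_{min}$, so that the least-visited such pair has a visit count that grows linearly in $k$ on good episodes.

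First I would expand the term using the concentration bound of the failure event $F^P$ together with $p(s'\mid s,a)=\mu(s')$, and use $\tilde V_{t+1}^{\pi_k}\ge V_{t+1}^{*}$ (optimism, Lemma~\ref{ubevs:OptimisticPlanning}), to obtain
\[
\Big|(\hat p_k - p)(s,a,t)\big(V_{t+1}^{*}-\tilde V_{t+1}^{\pi_k}\big)\Big| \lesssim \sum_{s'}\Big(\sqrt{\tfrac{\mu(s')}{n_k(s,\pi_k(s,t))}}+\tfrac{1}{n_k(s,\pi_k(s,t))}\Big)\big(\tilde V_{t+1}^{\pi_k}(s')-V_{t+1}^{*}(s')\big)\polylog .
\]
Applying Cauchy--Schwarz to the $\sqrt{\mu(s')}$ part and the crude bound $\tilde V_{t+1}^{\pi_k}-V_{t+1}^{*}\le H$ to the remainder, then summing $w_{tk}(s,a)$ over $k\in G$, $t\in[H]$ and $(s,a)\in L_k$, the $1/n_k$ piece contributes $SH\sum_{k\in G}\sum_{t}\sum_{(s,a)\in L_k}\frac{w_{tk}(s,a)}{n_k(s,a)}\polylog=\tilde O(S^2AH)$ by Lemma~\ref{lem:wn}, which is lower order, while the surviving piece is $\sqrt{S}\sum_{k\in G}\sum_{t}\sum_{(s,a)\in L_k}w_{tk}(s,a)\sqrt{\frac{\E_{s'\sim\mu}(\tilde V_{t+1}^{\pi_k}(s')-V_{t+1}^{*}(s'))^2}{n_k(s,\pi_k(s,t))}}\,\polylog$.

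Next I would split $w_{tk}=\sqrt{w_{tk}}\sqrt{w_{tk}}$ and apply Cauchy--Schwarz across the triple sum, using Lemma~\ref{lem:wn} ($\sum_{k\in G}\sum_t\sum_{(s,a)\in L_k}\frac{w_{tk}(s,a)}{n_k(s,a)}=\tilde O(SA)$) for one factor and $\sum_{(s,a)\in L_k}w_{tk}(s,a)\le1$ for the other, to reach
\[
\tilde O\big(S\sqrt{A}\big)\cdot\sqrt{\sum_{k\in G}\sum_{t\in[H]}\E_{s'\sim\mu}\big(\tilde V_{t+1}^{\pi_k}(s')-V_{t+1}^{*}(s')\big)^2}\;\polylog .
\]
To control the value gap I would use optimism again, $\tilde V_{t+1}^{\pi_k}(s')-V_{t+1}^{*}(s')\le \tilde V_{t+1}^{\pi_k}(s')-V_{t+1}^{\pi_k}(s')$, invoke Lemma~\ref{lem:deltaoptimism}, then apply $(a_1+\cdots+a_n)^2\le n\sum_i a_i^2$ (with at most $H$ terms) and Jensen to get $\big(\tilde V_{t+1}^{\pi_k}(s')-V_{t+1}^{*}(s')\big)^2\lesssim H\sum_{\tau=t+1}^{H}\E_{(s_\tau,a_\tau)\sim\tilde\pi_k\mid s'}\min\{\frac{SH^2}{n_k(s_\tau,a_\tau)},H^2\}\polylog$. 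Here the bandit structure enters: since $a_\tau=\pi_k(s_\tau,\tau)$, we have $n_k(s_\tau,a_\tau)\ge\min_{(s',t')}n_k(s',\pi_k(s',t'))$, so the quantity inside the expectation becomes independent of $(s_\tau,a_\tau)$ and $\big(\tilde V_{t+1}^{\pi_k}(s')-V_{t+1}^{*}(s')\big)^2\lesssim H^2\min\{\frac{SH^2}{\min_{(s',t')}n_k},H^2\}\polylog$. Summing over $t\in[H]$ and $k\in G$ and using the pigeonhole estimate $\sum_{k\in G}\frac{1}{\min_{(s',t')}n_k(s',\pi_k(s',t'))}=\tilde O\big(\frac{SA}{\mu_{min}}\big)$ (the $H$-weighted form is exactly the $(\star)$ bound of the main text) yields $\sum_{k\in G}\sum_{t\in[H]}\E_{s'\sim\mu}(\cdots)^2=\tilde O\big(\frac{S^2AH^5}{\mu_{min}}\big)$; plugging into the display gives $\tilde O(S\sqrt A)\cdot\tilde O\big(\frac{S\sqrt A\,H^{5/2}}{\sqrt{\mu_{min}}}\big)=\tilde O\big(\frac{\sqrt H\,S^2AH^2}{\sqrt{\mu_{min}}}\big)$, which dominates the earlier $\tilde O(S^2AH)$ remainder and is the claimed bound.

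The main obstacle is the step that converts the linear growth of $\min_{(s',t')}n_k(s',\pi_k(s',t'))$ into the $1/\sqrt{\mu_{min}}$ factor: one must argue that on good episodes this minimum increases at rate $\Omega(\mu_{min})$ per episode even though the argmin pair, and the policy $\pi_k$ itself, changes from episode to episode. This is precisely the pigeonhole computation behind the $(\star)$ bound, and it is the sole source of the $\mu_{min}$-dependence that a pure contextual-bandit analysis would not incur. A secondary, purely bookkeeping point is to verify that the $1/n_k$ remainder from the $F^P$ expansion and the contributions of pairs $(s,a)\notin L_k$ stay genuinely lower order, so that no $H^3$-type term (as present in the generic Lemma~\ref{ubevs:lowterm_generic}) is reintroduced.
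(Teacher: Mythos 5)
Your argument is correct and delivers the claimed $\tilde O(\sqrt{H}S^2AH^2/\sqrt{\mu_{min}})$ bound, but it follows a genuinely different and noticeably heavier route than the paper's. The paper's proof is a short chain: H\"older's inequality $|(\hat p_k-p)(V^*_{t+1}-\tilde V^{\pi_k}_{t+1})|\le\|\hat p_k-p\|_1\,\|V^*_{t+1}-\tilde V^{\pi_k}_{t+1}\|_\infty$, the $L^1$ concentration from $F^{L1}$ giving $\sqrt{S/n_k(s,a)}$, and then the uniform bound $\|\tilde V^{\pi_k}_{t+1}-V^*_{t+1}\|_\infty\le\phi^+\lesssim \sqrt{S}H^2/\sqrt{n_k(\tilde s_k,\pi_k(\tilde s_k,\tilde t_k))}$ obtained from optimism (Lemma~\ref{ubevs:OptimisticPlanning}) together with Lemma~\ref{ubevs:extrabonus}; the resulting product of the two $1/\sqrt{n}$ factors is exactly the quantity handled by Lemma~\ref{lem:wn}, which yields $SH^2\cdot\tilde O(SA\sqrt{H}/\sqrt{\mu_{min}})$ in one step. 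You instead specialize the Bernstein-style $F^P$ decomposition of the generic Lemma~\ref{ubevs:lowterm_generic}, control the value gap through Lemma~\ref{lem:deltaoptimism}, and inject the bandit structure by lower-bounding every on-policy count by $\min_{(s',t')}n_k(s',\pi_k(s',t'))$ before invoking the pigeonhole bound (Lemma~\ref{ubevs:onebyn} with $a_{min}=\mu_{min}$, i.e.\ the $(\star)$ computation); your arithmetic ($SH^5\cdot SA/\mu_{min}$ under the square root times the $S\sqrt{A}$ prefactor) is consistent and your flagged obstacle --- the moving argmin pair --- is precisely what Lemma~\ref{ubevs:onebyn} resolves, just as in the paper's Lemma~\ref{lem:wn}. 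What each buys: the paper's H\"older-plus-$\phi^+$ shortcut reuses machinery already needed for optimism and keeps the proof four lines long, whereas your finer per-successor-state decomposition avoids the crude $\|\cdot\|_1\|\cdot\|_\infty$ bound and would be the natural starting point if one wanted to sharpen the $S$ or $H$ dependence of this lower-order term, though as executed it lands on the same bound. Only a cosmetic caveat: the $\tilde O(SA)$ bound on $\sum w_{tk}/n_k$ that you attribute to Lemma~\ref{lem:wn} is really the pigeonhole argument inside that lemma's proof (the paper commits the same loose citation in Lemma~\ref{ubevs:lowterm_generic}).
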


\begin{proof}
Using the definition $(\tilde s_k,\tilde t_k) = \max_{(s',t')} \phi_k(s',\pi_k(s',t'))$
\begin{align*}
& \sum_{k \in G} \sum_{t\in [H]} \sum_{(s,a) \in L_k} w_{tk}(s,a)  \left|(\hat p_k - p)(s,a,t) ( V_{t+1}^{*} - \tilde V_{t+1}^{\pi_k}) \right| \\
& \stackrel{a}{\lesssim} \sum_{k \leq K} \sum_{t\in [H]} \sum_{(s,a) \in L_k} w_{tk}(s,a) \|(\hat p_k - p) (s,a)\|_1 \| V_{t+1}^{*} - \tilde V_{t+1}^{\pi_k} \|_{\infty} \\
& \stackrel{b}{\lesssim} \sum_{k \in G} \sum_{t\in [H]} \sum_{(s,a) \in L_k} w_{tk}(s,a) \sqrt{\frac{S}{n_k(s,\pi_k(s,t))}} \| V_{t+1}^{*} - \tilde V_{t+1}^{\pi_k} \|_{\infty} \polylog(\cdot) \\
& \stackrel{c}{\lesssim} {S} H^2 \sum_{k \in G} \sum_{t\in [H]} \sum_{(s,a) \in L_k} w_{tk}(s,a) \sqrt{\frac{1}{n_k(s,\pi_k(s,t))}} \sqrt{\frac{1}{n_k(\tilde s_k,\pi_k(\tilde s_k,\tilde t_k))}} \polylog(\cdot) \\
& \stackrel{d}{=} {S}H^2 \times \tilde O \( \frac{SA\sqrt{H}}{\sqrt{\mu_{min}}} \)  \\
\end{align*}
\begin{enumerate}[label=(\alph*)]
\item is Holder's inequality
\item holds since we are under the good episodes which by definition are outside of the failure event and in particular outside of $(F_k^{L1})$ 
\item holds because $\tilde V_{t+1}^{\pi_k} - V_{t+1}^{*} \leq \tilde V_{t+1}^{\pi_k} - V_{t+1}^{\pi_k}$ (pointwise) and so we can use lemma \ref{ubevs:extrabonus}. 
\item by lemma \ref{lem:wn}
\end{enumerate}
\end{proof}

\subsection{Auxiliary Lemmas}
\begin{lemma}
\label{lem:wsqrtn}
The following holds true:
\begin{equation}
\sum_{k} \sum_{t\in [H]} \sum_{(s,a) \in L_k} w_{tk}(s,a) \sqrt{\frac{1}{n_k(s,a)}} \polylog(\cdot)  = \tilde O\( \sqrt{SAT} \)
\end{equation}
\end{lemma}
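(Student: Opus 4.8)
The plan is a textbook potential-function (pigeonhole) estimate, adapted to the time-aggregated visit counts that are natural for \ubevs. First I would note that $\sqrt{1/n_k(s,a)}$ does not depend on $t$, so the inner sum over timesteps collapses: with $w_k(s,a) \stackrel{def}{=} \sum_{t\in[H]} w_{tk}(s,a)$ as in equation \ref{swaggregation}, the quantity to bound is, up to the $\polylog(\cdot)$ factor that I simply carry along and reinstate at the end, $\sum_k \sum_{(s,a)\in L_k} w_k(s,a)/\sqrt{n_k(s,a)}$.

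Next I would replace $n_k(s,a)$ by its expected-visit proxy. By Lemma \ref{lem:VisitationRatio}, outside the failure event every $(s,a)\in L_k$ satisfies $n_k(s,a)\ge \tfrac14\sum_{j\le k} w_j(s,a)$, and by Definition \ref{def:TheGoodSet} this lower bound is itself at least $H\ln\frac{9SA}{\delta}\ge 1$, so no $\max\{1,\cdot\}$ correction is needed. This turns the sum into $\lesssim \sum_k \sum_{(s,a)\in L_k} w_k(s,a)\big/\sqrt{\sum_{j\le k}w_j(s,a)}$. Since $\sum_{j\le k}w_j(s,a)$ is nondecreasing in $k$, membership in $L_k$ is monotone, so for each fixed $(s,a)$ the surviving episodes form a tail $k\ge k_0(s,a)$.

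Then, fixing $(s,a)$ and writing $W_k \stackrel{def}{=}\sum_{j\le k}w_j(s,a)$ so that $w_k(s,a)=W_k-W_{k-1}$, I would use monotonicity of $x\mapsto 1/\sqrt x$ to get $\frac{w_k(s,a)}{\sqrt{W_k}} = \int_{W_{k-1}}^{W_k}\frac{dx}{\sqrt{W_k}}\le \int_{W_{k-1}}^{W_k}\frac{dx}{\sqrt x}$, and summing over the tail telescopes the integral: $\sum_k \frac{w_k(s,a)}{\sqrt{W_k}}\le \int_0^{W_K}\frac{dx}{\sqrt x}=2\sqrt{W_K}\le 2\sqrt{\sum_{k\le K}w_k(s,a)}$ (equivalently, $\frac{w_k}{\sqrt{W_k}}\le 2(\sqrt{W_k}-\sqrt{W_{k-1}})$). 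Hence the whole expression is $\lesssim \sum_{(s,a)}\sqrt{\sum_{k\le K}w_k(s,a)}$.

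Finally I would apply Cauchy--Schwarz over the at most $SA$ state-action pairs: $\sum_{(s,a)}\sqrt{\sum_{k\le K}w_k(s,a)}\le \sqrt{SA}\,\sqrt{\sum_{(s,a)}\sum_{k\le K}w_k(s,a)}$, and observe that $\sum_{k\le K}\sum_{(s,a)}w_k(s,a)=\sum_{k\le K}\sum_{(s,a)}\sum_{t\in[H]}w_{tk}(s,a)=\sum_{k\le K}\sum_{t\in[H]}1=KH=T$, recalling that $T$ counts elapsed timesteps. Reinstating the $\polylog(\cdot)$ factor then yields $\tilde O(\sqrt{SAT})$, as claimed. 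The only mildly delicate part is the book-keeping around the $L_k$ restriction together with the conditioning on the complement of the failure event when invoking Lemma \ref{lem:VisitationRatio}; the telescoping estimate itself is routine.
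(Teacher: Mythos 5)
Your proof is correct and uses the same two ingredients as the paper's argument --- the visitation-ratio bound $n_k(s,a)\gtrsim\sum_{j\le k}w_j(s,a)$ for $(s,a)\in L_k$ (Lemma \ref{lem:VisitationRatio}) and a Cauchy--Schwarz/pigeonhole step --- but you assemble them in the opposite order. The paper first applies Cauchy--Schwarz to the entire triple sum, splitting it into $\sqrt{\sum_{k,t,(s,a)\in L_k} w_{tk}(s,a)}\le\sqrt{T}$ times $\sqrt{\sum_{k,t,(s,a)\in L_k} w_{tk}(s,a)/n_k(s,a)}$, and then bounds the visitation-ratio sum by $\tilde O(SA)$ via the standard pigeonhole argument, which it only cites (a sketch in \cite{Dann17}). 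You instead work per state-action pair: writing $W_k(s,a)$ for $\sum_{j\le k}w_j(s,a)$, you telescope $\sum_k w_k(s,a)/\sqrt{W_k(s,a)}\le 2\sqrt{W_K(s,a)}$, and only afterwards apply Cauchy--Schwarz over the $SA$ pairs together with $\sum_{(s,a)}W_K(s,a)=KH=T$. The net result is the same $\tilde O(\sqrt{SAT})$ bound; your version is self-contained (the pigeonhole is worked out rather than deferred) and incidentally avoids the logarithmic factor that the harmonic-sum bound $\sum w/n=\tilde O(SA)$ contributes, though this is immaterial under the $\polylog(\cdot)$ convention. Your housekeeping is also fine: since all terms are nonnegative, the monotonicity of $L_k$-membership is not even needed (restricting to $(s,a)\in L_k$ only decreases the sum), and on $L_k$ the bound $n_k(s,a)\ge H\ln\frac{9SA}{\delta}\ge 1$ indeed removes any need for a $\max\{1,\cdot\}$ guard.
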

\begin{proof}
\begin{align*}
& \sum_{k} \sum_{t\in [H]} \sum_{(s,a) \in L_k} w_{tk}(s,a) \sqrt{\frac{1}{n_k(s,a)}} \polylog(\cdot) \\
& \stackrel{a}{\lesssim} \sqrt{\sum_{k} \sum_{t\in [H]} \sum_{(s,a) \in L_k} w_{tk}(s,a)} \sqrt{\sum_{k} \sum_{t\in [H]} \sum_{(s,a) \in L_k} w_{tk}(s,a)  \frac{1}{n_k(s,a)}} \polylog(\cdot) \\
& \stackrel{}{=} \tilde O \( \sqrt{SAT} \)
\end{align*}

\begin{enumerate}[label=(\alph*)]
\item by Cauchy-Schwartz
\item by a standard pigeonhole argument, see for example \cite{Dann17} for a sketch.
\end{enumerate}
\end{proof}

\begin{lemma}
\label{lem:wn}
The following holds true:
\begin{equation}
\sum_{\substack{k \in G}} \sum_{t\in [H]} \sum_{(s,a) \in L_k} w_{tk}(s,a) \sqrt{\frac{1}{n_k(s,a)}} \sqrt{\frac{1}{n_{k}(\tilde s_k,\pi_k(\tilde s_k,\tilde t_k))}} \polylog(\cdot)  = \tilde O \( \frac{SA\sqrt{H}}{\sqrt{\mu_{min}}} \) 
\end{equation}
\end{lemma}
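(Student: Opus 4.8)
The plan is to apply Cauchy--Schwarz to decouple the two square roots and then bound each resulting factor separately, the second one being where the contextual bandit structure enters. Write $m_k \stackrel{def}{=} n_k(\tilde s_k,\pi_k(\tilde s_k,\tilde t_k)) = \min_{(s',t')} n_k(s',\pi_k(s',t'))$ and note that this quantity does not depend on $(s,a,t)$. Cauchy--Schwarz then gives
\begin{align*}
& \sum_{k\in G}\sum_{t\in[H]}\sum_{(s,a)\in L_k} w_{tk}(s,a)\sqrt{\frac{1}{n_k(s,a)}}\sqrt{\frac{1}{m_k}}\polylog(\cdot) \\
& \lesssim \sqrt{\sum_{k\in G}\sum_{t\in[H]}\sum_{(s,a)\in L_k}\frac{w_{tk}(s,a)}{n_k(s,a)}}\cdot\sqrt{\sum_{k\in G}\sum_{t\in[H]}\sum_{(s,a)\in L_k}\frac{w_{tk}(s,a)}{m_k}}\cdot\polylog(\cdot).
\end{align*}
First I would handle the left factor: summing $w_{tk}$ over $t$ yields $w_k(s,a)$, then on $L_k$ one has $n_k(s,a)\ge\frac14\sum_{j\le k}w_j(s,a)$ by Lemma \ref{lem:VisitationRatio}, and the same pigeonhole estimate used in Lemma \ref{lem:wsqrtn} gives $\sum_{k}\sum_{(s,a)\in L_k}\frac{w_k(s,a)}{n_k(s,a)} = \tilde O(SA)$, so the left factor is $\tilde O(\sqrt{SA})$.

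For the right factor, since $m_k$ is independent of $t$ and $(s,a)$ and $\sum_{t}\sum_{(s,a)}w_{tk}(s,a)=H$, it equals $\sqrt{\sum_{k\in G}\frac{H}{m_k}}$, so the task reduces to showing $\sum_{k\in G}\frac{H}{m_k} = \tilde O(SAH/\mu_{min})$ --- this is exactly the term $(\star)$ invoked in the main text. Here I would partition the good episodes by which pair $(s,a)$ attains the minimum in episode $k$ (breaking ties arbitrarily), giving $\sum_{k\in G}\frac{H}{m_k} = \sum_{(s,a)}\sum_{k\in G_{s,a}}\frac{H}{n_k(s,a)}$, a sum over at most $SA$ pairs. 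Fixing $(s,a)$ and listing $G_{s,a}=\{k_1<k_2<\cdots\}$, in each such episode $\pi_{k_l}$ plays $a$ in $s$ at some timestep, and on \MC{} state $s$ is reached at every timestep $t\ge 2$ with probability $\mu(s)\ge\mu_{min}$, so $\sum_\tau w_{\tau k_l}(s,a)\ge\mu_{min}$; combined with the good-episode inequality $n_{k_j}(s,a)\ge\frac14\sum_{i<k_j}\sum_\tau w_{\tau i}(s,a)\ge\frac14(j-1)\mu_{min}$ (Definition \ref{ubevs:GoodEpisodes}), a harmonic-series estimate gives $\sum_{l}\frac{H}{n_{k_l}(s,a)}\le H+\frac{4H}{\mu_{min}}\sum_{j\ge1}\frac1j = \tilde O(H/\mu_{min})$, since the number of episodes is at most $T$. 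Summing over the $SA$ pairs yields $\sum_{k\in G}\frac{H}{m_k}=\tilde O(SAH/\mu_{min})$.

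Multiplying the two factors gives $\tilde O(\sqrt{SA})\cdot\tilde O(\sqrt{SAH/\mu_{min}}) = \tilde O(SA\sqrt{H/\mu_{min}}) = \tilde O\!\left(\frac{SA\sqrt H}{\sqrt{\mu_{min}}}\right)$, which is the claim; the logarithmic factors, including the $\polylog(\cdot)$ already present on the left-hand side, are all absorbed into the $\tilde O$. The main obstacle is the middle step: one must use crucially that transitions on \MC{} ignore the current state and action, so that every state--action pair the current policy plays has been (and keeps being) visited at a $\Theta(\mu_{min})$ rate, which makes the count of the least-visited played pair grow linearly in the number of episodes in which it is played --- this linear growth is precisely what keeps $\sum_k 1/m_k$ from being of order $\sqrt T$ (as it could be for a general MDP, where a clearly suboptimal pair may remain the minimizer forever) and makes it $T$-independent instead. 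Minor care is also needed for pairs played only at the first timestep, which I would fold into the lower-order terms or dispatch by the same argument under the mild assumption that the start distribution reaches every context.
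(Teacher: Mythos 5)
Your proposal is correct and follows essentially the same route as the paper: Cauchy--Schwarz to decouple the two factors, the standard pigeonhole bound $\tilde O(SA)$ for $\sum w_{tk}/n_k$, the identity $\sum_{t,(s,a)}w_{tk}=H$, and then a per-pair harmonic-series argument showing the minimizer's count grows by $\gtrsim\mu_{min}$ per good episode in which it is the minimizer. Your partition of $G$ by the minimizing pair is exactly an inlined version of the paper's auxiliary Lemma~\ref{ubevs:onebyn} (applied with $x_k=(\tilde s_k,\pi_k(\tilde s_k,\tilde t_k))$ and $a_i(x)=\sum_\tau w_{\tau i}(x)$), with the same implicit caveats (first-occurrence episode, timestep $t=1$) that the paper itself glosses over.
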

\begin{proof}
\begin{align*}
& \sum_{\substack{k \in G}} \sum_{t\in [H]} \sum_{(s,a) \in L_k} w_{tk}(s,a) \sqrt{\frac{1}{n_k(s,a)}} \sqrt{\frac{1}{n_{k}(\tilde s_k,\pi_k(\tilde s_k,\tilde t_k))}} \polylog(\cdot)  \\ 
& \stackrel{a}{\lesssim} \sqrt{ \sum_{\substack{k \in G}} \sum_{t\in [H]} \sum_{(s,a) \in L_k}  \frac{w_{tk}(s,a) }{n_k(s,a)}} \sqrt{\sum_{\substack{k \in G}} \sum_{t\in [H]} \sum_{(s,a) \in L_k} \frac{w_{tk}(s,a)}{n_{k}(\tilde s_k,\pi_k(\tilde s_k,\tilde t_k))}} \polylog(\cdot) \\
& \stackrel{b}{\lesssim} \sqrt{SA} \sqrt{\sum_{\substack{k \in G}} \frac{H}{n_{k}(\tilde s_k,\pi_k(\tilde s_k,\tilde t_k))}} \polylog(\cdot) \\
& \stackrel{c}{\lesssim} \sqrt{SA} \sqrt{\sum_{\substack{k \in G}} \frac{H}{\sum_{i \leq k} \sum_{\tau \in [H]}w_{\tau i}(\tilde s_k,\pi_k(\tilde s_k,\tilde t_k))}} \polylog(\cdot) \\
& \stackrel{d}{\lesssim} \tilde O \( \frac{SA \sqrt{H}}{\sqrt{\mu_{min}}} \) \\
\numberthis{}
\end{align*}

\begin{enumerate}[label=(\alph*)]
\item by Cauchy-Schwartz
\item by a pigeonhole argument, see for example lemma E.5 of \cite{Dann17} and $\sum_{t\in [H]} \sum_{(s,a) \in L_k} w_{tk}(s,a) = H$
\item since we are in the good episodes (so using the definition of good episodes)
\item by lemma \ref{ubevs:onebyn}. The lemma is applied with the sequence $x_k = (\tilde s_k,\pi_k(\tilde s_k,\tilde t_k))$ which lives in $X = \mathcal S \times \mathcal A$; the function $a_i(x)$ is defined as $\sum_{\tau \in [H]}w_{\tau i}(x)$ and satisfies $a_i(x_i) \geq \mu_{min}$ by construction.
\end{enumerate}
\end{proof}

\begin{lemma}
\label{ubevs:onebyn}
Let $\{x\}_{i=1,2,\dots,K}$ be a sequence with $x_i \in X$ where $X$ is a set with cardinality $|X|$.
Let $\{a_i(x)\}_{i = 1,2,\dots,K}$ be a sequence of functions taking values $\geq 0$ and such that $a_i(x_i) \geq a_{min} > 0$. Then
\begin{equation}
\sum_{k=1}^K \frac{1}{\sum_{i \leq k}a_i(x_k)} = \tilde O \( \frac{|X|}{a_{min}}  \)
\end{equation}
holds.
\begin{proof}
Define the set:
\begin{equation}
\mathcal K_x = \{i\leq K:x_i = x\}
\end{equation}
Intuitively, this is ``the set of episodes where $x$ occurred".
Then the following sequence of inequalities holds true:
\begin{align*}
\sum_{k=1}^K \frac{1}{\sum_{i \leq k}a_i(x_k)} &  \stackrel{a}{=} \sum_x \sum_{k=1}^K \frac{\I{x = x_k}}{\sum_{i \leq k} a_i(x)}  \\
&  \stackrel{b}{=} \sum_x \sum_{k \in \mathcal{K}_x} \frac{1}{\sum_{i \leq k}a_{i}(x)}  \\
&  \stackrel{c}{\leq} \sum_x \sum_{k \in \mathcal{K}_x} \frac{1}{\sum_{i \in \mathcal K_x,i \leq k}a_{i}(x)}  \\
&  \stackrel{d}{\leq}\sum_x \sum_{k \in \mathcal{K}_x} \frac{1}{\sum_{i \in \mathcal K_x,i \leq k}a_{min}}\\
&  \stackrel{e}{\leq} \sum_x \sum_{k=1}^K \frac{1}{k a_{min}}\\
& = \sum_x \tilde O \( \frac{1}{a_{min}}\) = \tilde O \( \frac{|X|}{a_{min}}  \)
\end{align*}
\begin{enumerate}[label=(\alph*)]
\item holds since $\sum_x \I{x = x_k} = 1$
\item by definition of $\mathcal K_x$
\item holds by monotonicity since we are only adding the values for $a_i(x)$ if $x$ occurred in episode $i$, that is, if $x_i = x$
\item by definition, if $x$ occurred in episode $i$ then $a_i(x) = a_i(x_i)\geq a_{min}$
\item holds because by construction $\sum_{k \in \mathcal{K}_x} \frac{1}{\sum_{i \in \mathcal K_x,i \leq k}a_{min}} = \frac{1}{a_{min}}\( 1 + \frac{1}{2} + \frac{1}{3} + \dots + \frac{1}{|\mathcal K_x|} \) \leq \frac{1}{a_{min}} \( 1 + \frac{1}{2} + \frac{1}{3} + \dots + \frac{1}{K} \) = \frac{1}{a_{min}} \sum_{k = 1}^K \frac{1}{k}$
\end{enumerate}
\end{proof}
\end{lemma}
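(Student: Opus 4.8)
The plan is to prove the bound by a pigeonhole argument: group the indices $1,\dots,K$ according to which point of $X$ they realize, and then observe that within each group the running denominator grows at least linearly, which reduces that group's contribution to a harmonic sum. Concretely, for each $x \in X$ I would set $\mathcal K_x \stackrel{def}{=} \{i \leq K :\, x_i = x\}$; since the sets $\{\mathcal K_x\}_{x\in X}$ partition $\{1,\dots,K\}$ we may rewrite $\sum_{k=1}^K \frac{1}{\sum_{i \leq k} a_i(x_k)} = \sum_{x \in X} \sum_{k \in \mathcal K_x} \frac{1}{\sum_{i \leq k} a_i(x)}$.

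Next I would lower-bound each denominator appearing in the inner sum. Fix $x$ and $k \in \mathcal K_x$. Nonnegativity of the $a_i$ lets me discard every term with $x_i \neq x$, so $\sum_{i \leq k} a_i(x) \geq \sum_{i \in \mathcal K_x,\, i \leq k} a_i(x)$, and the diagonal hypothesis $a_i(x_i) \geq a_{min}$ then gives $\sum_{i \leq k} a_i(x) \geq a_{min}\cdot |\{i \in \mathcal K_x :\, i \leq k\}|$. Writing $\mathcal K_x = \{k_1 < k_2 < \cdots < k_m\}$ with $m = |\mathcal K_x|$, the $j$-th element satisfies $\sum_{i \leq k_j} a_i(x) \geq j\, a_{min}$, hence $\sum_{k \in \mathcal K_x} \frac{1}{\sum_{i \leq k} a_i(x)} \leq \frac{1}{a_{min}} \sum_{j=1}^m \frac{1}{j} \leq \frac{1 + \ln K}{a_{min}} = \tilde O(1/a_{min})$. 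Summing this over the at most $|X|$ distinct values of $x$ yields $\sum_{k=1}^K \frac{1}{\sum_{i \leq k} a_i(x_k)} = \tilde O(|X|/a_{min})$, which is the claim.

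I do not expect any genuine obstacle here; the only point that needs care is that the $a_i(\cdot)$ are arbitrary nonnegative functions rather than $0/1$ indicators, so a direct counting of occurrences does not suffice — one must first drop the off-diagonal contributions $a_i(x)$ for $i \notin \mathcal K_x$ (legitimate by nonnegativity) and then rely solely on the diagonal lower bound $a_i(x_i) \geq a_{min}$. The logarithmic factor coming from the harmonic sum is absorbed into $\tilde O(\cdot)$. It is worth recording that this is exactly the shape in which the lemma is invoked in Lemma~\ref{lem:wn}, with $X = \mathcal S \times \mathcal A$ (so $|X| = SA$), $x_k = (\tilde s_k, \pi_k(\tilde s_k, \tilde t_k))$, $a_i(x) = \sum_{\tau \in [H]} w_{\tau i}(x)$, and $a_{min} = \mu_{min}$, which is where the $\frac{SA}{\mu_{min}}$-type terms in the rest of the paper originate.
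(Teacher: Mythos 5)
Your proposal is correct and follows essentially the same route as the paper's proof: partition the indices into the sets $\mathcal K_x$, drop the off-diagonal terms by nonnegativity, invoke the diagonal bound $a_i(x_i) \geq a_{min}$ to reduce each group to a harmonic sum of order $\tilde O(1/a_{min})$, and sum over the $|X|$ groups. Your closing remark about how the lemma is instantiated in Lemma~\ref{lem:wn} also matches the paper exactly.
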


\section{\ucrl Appendix}
\label{ucrl:Appendix}
\subsection{Main Results}
Here we present our results for \ucrl. We make use of the notion of hitting time $T_{hit}$ which is the largest time to  transition between any two states under \emph{any} policy ($T^{M^*}_{hit}$ refers to the optimal policy). We prove a more general version than a reduction to MABs, in particular we use an MDP with maximum rewards achievable everywhere (certainly satisfied by MABs).
\propbandit

\begin{proof}in
We use the same notation as in the original \ucrl paper \cite{Jaksch10}. We show that:
\begin{enumerate}
\item the bias vector $\| \mathbf w_k\|_\infty$ is going to $\approx 0$ sufficiently fast so that:
\item the leading order term of the regret $\sum_k \mathbf v_k (\mathbf {\tilde P}_k - \mathbf{\mathbf P_k}) \mathbf w_k $ does not depend on $T$ except for a logarithmic factor. We assume that confidence intervals are not failing; failure of confidence intervals is addressed separately in the \ucrl paper \cite{Jaksch10}. 
\end{enumerate}

\paragraph{Bounding the bias vector}
We will be assuming throughout that the bias vector has been centered appropriately such that: $\| \mathbf w_k \|_{\infty} =  \max_{s} \mathbf w_k(s) -  \min_{s} \mathbf w_k(s)$ at every step $k$. This is easily obtained by forcing, for example, $\min_s \mathbf w_k(s) = 0$. We write $ V_{k,i}$ to indicate the value function vector at the beginning of the $i$-th iteration of extended value iteration in episode $k$ with $V_{k,0}$ the zero vector. Let $\overline s_k = \max_{s} \mathbf w_k(s)$ and $\underline s_k = \min_{s} \mathbf w_k(s)$. Clearly $\overline s_k$ and $\underline s_k$ are the maximizer and the minimizer, respectively, for the value function $V_{k,i}$ upon convergence (so when index $i$ is the last step of extended value iteration).  
Let $\tilde s_k = \argmax \tilde r(s,\pi_k(s_k)) $ be the state where the agent anticipates the highest optimistic rewards when following an $i$-step optimal policy $\pi_k$. The plan is to show that:

$$\| \mathbf w_k \|_{\infty} = \mathbf w(\overline s_k) - \mathbf w(\underline s_k) = V_{k,i}(\overline s_k) -  V_{k,i}(\underline s_k) \lesssim T_{hit}^{M^{\pi^*}}\sqrtucrlRsbar{\tilde s_k}.$$

This quantity may initially be bigger then the diameter $D$ provided in the \ucrl paper \cite{Jaksch10} but crucially it depends on the visit count to a state $\tilde s_k$ so it shrinks quickly if we visit such state often enough.  

To prove the bound we use an argument similar to that used in \cite{Jaksch10} to bound the value function with the diameter. Recall that by following any policy we can get to state $\overline s_k$ in at most  $T_{hit}^{M^{\pi^*}}$  steps in expectation. Also, $V_{k,i}(\underline s_k)$ is the total expected $i$-step reward of an optimal non-stationary $i$-step policy evaluated on the optimistic MDP starting from state $\underline s_k$. Since we are outside the failure event, the optimal policy $\pi^*$ and the true MDP is a feasible solution to extended value iteration. Now, if $V_{k,i}(\overline s_k) - V_{k,i}(\underline s_k) \gtrsim  T_{hit}^{M^{\pi^*}}\sqrtucrlRsbar{\tilde s_k} $ then an improved value for $V_{k,i}(\underline s_k)$ could be achieved by the following nonstationary policy: first follow $\pi^*$ (the true optimal policy) which takes at most $\ceil{T_{hit}^{M^{\pi^*}}}$ steps on average to get to $\overline s_k$. Then follow the optimal $i$-step policy from $\overline s_k$. At most $\ceil{T_{hit}^{M^{\pi^*}}}$ of the rewards of the policy from $\overline s_k$ are missed, but $\geq r^*$ is collected at every step up to $\overline s_k$ due to non-failing confidence intervals. The rewards missed are by assumption $\leq \max_{s}\tilde r(s,\pi_k(s)) = \tilde r(\tilde s, \pi_k(\tilde s))$. Together this implies that the agent loses at most $\ceil{T_{hit}^{M^{\pi^*}}}\(\tilde r(\tilde s, \pi_k(\tilde s)) - r^*\)$ before reaching $\overline s_k$. Thus:

\begin{align*}
\| \mathbf w_k \|_{\infty} & = \mathbf w(\overline s_k) - \mathbf w(\underline s_k) \\
& \stackrel{}{=} V_{k,i}(\overline s_k) -  V_{k,i}(\underline s_k) \\
&\stackrel{}{\leq} \ceil{T_{hit}^{M^{\pi^*}}}\(\tilde r(\tilde s, \pi_k(\tilde s)) - r^*\) \\
&\stackrel{a}{\leq} \ceil{T_{hit}^{M^{\pi^*}}}\(\hat r(\tilde s, \pi_k(\tilde s)) - r^* + \sqrtucrlRsbar{\tilde s_k}\) \\
&\stackrel{b}{\leq} \ceil{T_{hit}^{M^{\pi^*}}}\(\bar r(\tilde s, \pi_k(\tilde s)) - r^* + 2\sqrtucrlRsbar{\tilde s_k}\) \\
&\stackrel{c}{\leq} \ceil{T_{hit}^{M^{\pi^*}}}\( r^* - r^* + 2\sqrtucrlRsbar{\tilde s_k} \)\\
& = 2\ceil{T_{hit}^{M^{\pi^*}}}\sqrtucrlRsbar{\tilde s_k}. \numberthis \label{ucrl2wbandit}
\end{align*}

In the above inequalities:
\begin{enumerate}[label=(\alph*)]
\item follows from non-failing confidence interval for $(\tilde s,\pi_k(\tilde s_k))$, which allows us to go from the optimistic reward $\tilde r$ to the empirical estimate $\hat r$.
\item follows again from non-failing confidence interval for $(\tilde s,\pi_k(\tilde s_k))$. This time it allows us to go from the empirical reward $\hat r$ to the actual expected reward $\bar r$.
\item is using $\bar r(s,a) \leq r^*$.
\end{enumerate}

\paragraph{Bounding the Main Regret Term}
We now focus on the leading order term in the regret (see \cite{Jaksch10}):

\begin{align*}
\sum_k \mathbf v_k (\mathbf {\tilde P}_k - \mathbf{\mathbf P_k}) \mathbf w_k & \leq \sum_k \sum_{s,a} v_k(s,a) \left\| \mathbf {\tilde P}_k - \mathbf{\mathbf P_k} \right\|_1 \| \mathbf w_k\|_\infty \\
& \stackrel{a}{\lesssim} \sum_k \sum_{s,a} v_k(s,a) \sqrtucrlPsbar{s} * 2\ceil{T_{hit}^{M^{\pi^*}}}\sqrtucrlRsbar{\tilde s_k} \\
& \stackrel{}{\lesssim} \ceil{T_{hit}^{M^{\pi^*}}} \sum_k \sum_{s,a} v_k(s,a) \sqrtucrlPsbar{s} \sqrtucrlRsbar{\tilde s_k} \\
& \stackrel{}{\lesssim} T_{hit}^{M^{\pi^*}} \sqrt{S} \log\(\frac{2AT}{\delta}\) \sum_k \sum_{s,a} \sqrt{\frac{v_k(s,a)}{\max\{1,N_k(s,\pi_k(s)\}}} \sqrt{\frac{v_k(s,a)}{\max\{1,N_k(\tilde s_k,\pi_k(\tilde s_k)\}}} \\
& \stackrel{b}{\lesssim} T_{hit}^{M^{\pi^*}} \sqrt{S} \log\(\frac{2AT}{\delta}\) \sum_k \sum_{s,a}  \sqrt{\frac{v_k(s,a)}{\max\{1,N_k(\tilde s_k,\pi_k(\tilde s_k)\}}} \\
& \stackrel{c}{\lesssim} T_{hit}^{M^{\pi^*}} \sqrt{S} \log\(\frac{2AT}{\delta}\) \sqrt{\sum_k \sum_{s,a} 1}\sqrt{ \sum_k \sum_{s,a}  \frac{v_k(s,a)}{\max\{1,N_k(\tilde s_k,\pi_k(\tilde s_k)\}}}  \\
& \stackrel{d}{\lesssim} \tilde O\({\sqrt{S}SA T_{hit}^{M^{\pi^*}}}\) \sqrt{T_{hit}} \sqrt{ \sum_k \sum_{s,a}  \frac{v_k(s,a)}{T_{hit}\max\{1,N_k(\tilde s_k,\pi_k(\tilde s_k)\}}} \\
& \stackrel{e}{=} \tilde O\({\sqrt{S}SA T_{hit}^{M^{\pi^*}}\sqrt{T_{hit}} * SA}\) \quad \text{w.p.} > 1- \delta\\
& \stackrel{}{=} \tilde O\(\sqrt{S}(SA)^2 T_{hit}^{M^{\pi^*}}\sqrt{T_{hit}} \) \quad \text{w.p.} > 1- \delta\\
\end{align*}

In (a) we used the previously computed upper bound to the norm of the value function. In (b) we used that $\sqrt{\frac{v_k(s,a)}{\max\{1,N_k( s,\pi_k( s)\}}} \leq 1 $ since \ucrl starts a new episode once a counter for a certain $(s,a)$ pair is doubled. In (c) we used Cauchy-Schwartz and in (d) we bound the number of episodes by $\tilde O(SA)$ according to proposition 18 in \cite{Jaksch10}. We finally use lemma \ref{lem:contextualrate} in (e). This is the step where $T_{hit} < \infty$ is crucial because we are comparing the visits to $(s,a)$ to the counter for the past visits to a different state-action pair $N_k(\tilde s_k, \pi_k(\tilde s_k))$. Notice that the bound holds with high probability uniformly across all timesteps.

\paragraph{Bounding the Lower Order Regret Term}
We now bound the lower order term $\sum_k \mathbf v_k (\mathbf {P}_k - \mathbf{\mathbf I}) \mathbf w_k $.
Equation \ref{ucrl2wbandit} guarantees that the bias vector can be written as $\| \mathbf w_k\|_{\infty} \leq 2\ceil{T_{hit}^{M^{\pi^*}}}\sqrtucrlRsbar{\tilde s_k}$ which gives
$$
{\sum_{t=1}^T \| \mathbf w_{k(t)} \|_{\infty}^2} \leq  \tilde O \( S^2A^2 \( {T_{hit}^{M^{\pi^*}}} \)^2 T_{hit} \) \stackrel{def}{=} \tilde O (M)
$$
by lemma \ref{lem:wconvergence}. Finally lemma \ref{lem:martingaledifference} with $B = 0$ and the above definition for $M$ guarantees that outside the failure event:
$$\sum_k \mathbf v_k (\mathbf {P}_k - \mathbf{I}) \mathbf w_k = \tilde O \( \sqrt{S^2A^2\({T_{hit}^{M^{\pi^*}}}\)^2T_{hit}} + DSA\) = \tilde O \( SA{T_{hit}^{M^{\pi^*}}}\sqrt{T_{hit}} + DSA\) $$
holds true.
\paragraph{Summing up the Regret Contributions}
Together the bound obtained in the previous paragraphs and the bound for the rewards in \cite{Jaksch10}:
\begin{align*}
\sum_k \mathbf v_k (\mathbf{\tilde P_k} - \mathbf{P_k}) \mathbf w_k & = \tilde O\(\sqrt{S}(SA)^2 {T_{hit}^{M^{\pi^*}}}\sqrt{T_{hit}}\) \\
\sum_k \mathbf v_k (\mathbf {P}_k - \mathbf{I}) \mathbf w_k & = \tilde O \( SA{T_{hit}^{M^{\pi^*}}}\sqrt{T_{hit}} + DSA\) \\
\sum_k v_k(s,a) |\tilde r_k(s,a) - r(s,a)| & = \tilde O\(\sqrt{SAT}\)
\end{align*}
along with other lower order terms concludes our regret bound.
Finally union bound between the ``failure events'' considered in this analysis which has measure $o(\delta)$ and those considered in the original analysis of \ucrl, which also have measure $\delta$, concludes the proof.
\end{proof}

For completeness we examine what happens if the rewards on MABs are known exactly
\propbanditzeroreward

\begin{proof}
Consider running extended value iteration to compute the optimistic policy for \ucrl. For any optimistic transition probability matrix $\mathbf{\tilde{P}_k}$ it holds that $\mathbf{\tilde{P}_k} \1 = \1$ since $\1$ is a right eigenvector of any transition probability matrix. 
Now we show that extended value iteration as detailed in \cite{Jaksch10} must converge in two steps.
Let $ R_{\pi}$ be the reward vector induced by the agent's policy; we have that after the first step the value function is $\tilde V = \max_{\pi} \tilde R_{\pi}=  r^* \1$ since $\max_a \tilde r(s,a) = \max_a \bar r(s,a) = r^* \; \forall s$. After the second update the value function reads: $ \max_{\pi} R_{\pi} + \mathbf P_{\pi} \tilde V = \max_{\pi} R_{\pi} + r^*\1 = 2r^*\1$. Extended value iteration now has converged (see \cite{Jaksch10} for the termination conditions) finding the optimistic policy $\pi(s) = \argmax_a \tilde r(s,a) = \argmax_a \bar r(s,a) = \pi^*(s)\; \forall s$ and thus we have that the optimistic policy coincides with the optimal policy. This argument depends neither on the world dynamics nor on the data collected; since it can be applied at every episodes by induction we have that \ucrl always follows an optimal policy, achieving zero regret.
\end{proof}

Finally we examine what happens on \MC{}.
\propcontextualbandit

The proof idea is the following. Since there is a positive visitation frequency to every state, the agent can collect sufficient data in all states. The value function is not converging to uniform but it will be eventually bounded by a constant of order $1$ which is the maximum reward attainable in one step. This is similar to setting $D \approx 1$ in the original regret bound for \ucrl given in \cite{Jaksch10}. In short,  a value of $\approx 1$ quickly becomes an (over)estimate of the optimistic value function from the agent's viewpoint.

\begin{proof}
We use the same notation as in the original \ucrl paper \cite{Jaksch10}. We show that 1) the bias vector $\| \mathbf w_k\|_\infty$ is going to $\approx 1$ sufficiently fast so that 2) the leading order term of the regret $\sum_k \mathbf v_k (\mathbf {\tilde P}_k - \mathbf{\mathbf P_k}) \mathbf w_k $ is of order $S\sqrt{AT}$ plus terms that do not depend on $T$ except for a logarithmic factor. \\

\paragraph{Bounding the bias vector}
We now bound the bias vector $\| \mathbf w_k \|_{\infty} =  \max_{s} \mathbf w_k(s) -  \min_{s} \mathbf w_k(s)$ at step $k$. Let $\overline s_k = \max_{s} \mathbf w_k(s)$ and $\underline s_k = \min_{s} \mathbf w_k(s)$. Using equation (13) in \cite{Jaksch10} we know that upon termination of extended value iteration:

\begin{equation}
\label{maxw_contextual}
 \max_{s} \mathbf w_k(s)  = \mathbf w_k(\overline s_k) \leq \tilde r_k(\overline s_k, \pi_k(\overline s_k)) -\tilde \rho_k + \mathbf{\tilde P_k}(\overline s_k,\pi_k(\overline s_k))^T \mathbf w_k + \frac{1}{\sqrt{T_k}},
\end{equation}

where $\tilde \rho$ is the average per-step reward of the optimistic policy on the optimistic MDP.
The $\frac{1}{\sqrt{T_k}}$ term is a "planning error" which follows from prematurely stopping extended value iteration to save computations.

A lower bound on $\mathbf w_k(\underline s_k)$ is given below, where $ \mu $ denotes the true underlying distribution where the states are sampled from: 

\begin{align*}
 \min_{s} \mathbf w_k(s) & =  \mathbf w_k(\underline s_k) \\
 & \geq \tilde r_k(\underline s_k,\pi_k(\underline s_k)) - \tilde \rho_k + \mathbf{\tilde P_k}(\underline s_k,\pi_k(\underline s_k))^T \mathbf w_k - \frac{1}{\sqrt{T_k}} \\
& \stackrel{a}{\geq} \bar r_k(\underline s_k,\pi_k(\underline s_k)) - \tilde \rho_k  + \mu^T \mathbf w_k - \frac{1}{\sqrt{T_k}}\\
& \geq -\tilde \rho_k + \mu^T \mathbf w_k  - \frac{1}{\sqrt{T_k}}\\
\end{align*}

assuming non-failing confidence intervals and recalling that the rewards are all positive. Failure of confidence intervals is dealt separately in the \ucrl paper \cite{Jaksch10}. We observe that (a) follows from the fact that the agent is maximizing over the rewards and the transition model within their respective confidence intervals. 

Hence we have that the difference in the optimistic value function is:

\begin{align*}
\| \mathbf w_k \|_\infty & = \mathbf w_k(\overline s_k) - \mathbf w_k(\underline s_k) \\ 
& \leq \tilde r(\overline s_k, \pi_k(\overline s_k)) +  \( \mathbf{\tilde P_k}(\overline s_k,\pi_k(\overline s_k)) - \mu\)^T \mathbf w_k + \frac{2}{\sqrt{T_k}}\\
& \stackrel{a}{\leq} 1 +  \left\| \mathbf{\tilde P_k}(\overline s_k,\pi_k(\overline s_k)) -\mu \right \|_1 \| \mathbf  w_k \|_\infty + \frac{2}{\sqrt{T_k}} \\
& \stackrel{b}{\leq} 1 +  \sqrt{\frac{14S\log\frac{2AT_k}{\delta}}{2\max\{1,N_k(\overline s_k,\pi_k(\overline s_k))\}}}  \| \mathbf w_k \|_\infty + \frac{2}{\sqrt{T_k}} \\
& \stackrel{c}{\lesssim} 1 +  D\sqrt{\frac{14S\log\frac{2AT_k}{\delta}}{2\max\{1,N_k(\overline s_k,\pi_k(\overline s_k))\}}} \\
\numberthis \label{ucrl2wcontextualbandit}
\end{align*}

where $D$ is the diameter. Notice that we very crudely upper bounded $\frac{2}{\sqrt{T_k}} \leq 2$, while in fact it very rapidly decreases to zero. We have used Holder's inequality in a); in b) we used that the dynamics are the same everywhere and confidence intervals hold; finally in c) we bound the bias vector with the diameter.

\paragraph{Bounding the Main Regret Term}
The leading order term in the regret becomes:

\begin{align*}
\sum_k \mathbf v_k (\mathbf {\tilde P}_k - \mathbf{\mathbf P_k}) \mathbf w_k & \leq \sum_k \sum_{s,a} v_k(s,a) \left\| \mathbf {\tilde P}_k - \mathbf{\mathbf P_k} \right\|_1 \| \mathbf w_k\|_\infty \\
& \lesssim \sum_k \sum_{s,a} v_k(s,a) \sqrt{\frac{14S\log\frac{2AT_k}{\delta}}{2\max\{1,N_k(s,\tilde\pi_k(s))\}}} \( 1 +  D\sqrt{\frac{14S\log\frac{2AT_k}{2\delta}}{2\max\{1,N_k(\overline s_k,\tilde\pi_k(\overline s_k))\}}}  \) \\
& \lesssim \sum_k \sum_{s,a} v_k(s,a) \sqrt{\frac{14S\log\frac{2AT_k}{\delta}}{\max\{1,N_k( s,\tilde\pi_k( s))\}}} \\
& +  D \sum_k \sum_{s,a} v_k(s,a) \sqrt{\frac{14S\log\frac{2AT_k}{\delta}}{\max\{1,N_k(s,\tilde\pi_k(s))\}} 
\frac{14S\log\frac{2AT_k}{\delta}}{\max\{1,N_k(\overline s_k,\tilde\pi_k(\overline s_k))\}}} \\
& \stackrel{a}{\lesssim} S\sqrt{AT} \log\frac{2AT}{\delta} +  28DS \sum_k \sum_{s,a}\log\frac{2AT_k}{\delta}  \sqrt{\frac{v_k(s,a)}{\max\{1,N_k( s,\tilde\pi_k( s))\}} 
\frac{v_k(s,a)}{\max\{1,N_k(\overline s_k,\tilde\pi_k(\overline s_k))\}}} \\
& \stackrel{b}{\lesssim} S\sqrt{AT} \log\frac{2AT}{\delta} +  28DS\log\frac{2AT}{\delta} \sum_k \sum_{s,a}\sqrt{ 
\frac{v_k(s,a)}{\max\{1,N_k(\overline s_k,\tilde\pi_k(\overline s_k))\}}} \\
& \stackrel{c}{\lesssim} S\sqrt{AT} \log\frac{2AT}{\delta} +  28DS\log\frac{2AT}{\delta} \sum_{s,a} \( \sqrt{\sum_k 1} \sqrt{ \sum_k
\frac{v_k(s,a)}{\max\{1,N_k(\overline s_k,\tilde\pi_k(\overline s_k))\}}} \) \\
& \stackrel{d}{\lesssim} S\sqrt{AT} \log\frac{2AT}{\delta} +  28DS\log\frac{2AT}{\delta} \sqrt{\mmax}  \sqrt{2T_{hit}} \sum_{s,a}  \(\sqrt{ \sum_k
\frac{v_k(s,a)}{{2T_{hit}} \max\{1,N_k(\overline s_k,\tilde\pi_k(\overline s_k))\}}} \) \\
& = \tilde O\( S\sqrt{AT} \)  + \tilde O \(DS^3A^2\sqrt{T_{hit}} \) \\
\end{align*}

with probability at least $1-\delta$ jointly for all time-steps $T \geq SA$. This expression is $ \tilde O(S\sqrt{AT})$ up to polylogarithmic terms and lower order terms.
In step (a) we used equation (20) in \cite{Jaksch10} to claim $\sum_{s,a}\sum_k \frac{v_k(s,a)}{\sqrt{\max \{1,N_k(s,a)\}}} \leq (\sqrt{2}+1) \sqrt{SAT}$. In (b)
we used the property that \ucrl terminates the episode when the counts for the visits to some $(s,a)$ pair doubles, that is, when $v_k(s,a) = N_k(s,a)$ so that 
$$
\sqrt{\frac{v_k(s,a)}{\max\{1,N_k(s,a)\}}} \leq 1.
$$
holds.
In (c) we used Cauchy-Schwartz inequality and in (d) the we bound the maximum number of episodes $m \leq \mmax$ according to Proposition 18 in \cite{Jaksch10} and also multiplied and divided by $\sqrt{2T_{hit}}$. In the final passage lemma \ref{lem:contextualrate} was used.
\paragraph{Bounding the Lower Order Regret Term}
We now bound the lower order term $\sum_k \mathbf v_k (\mathbf {P}_k - \mathbf{\mathbf I}) \mathbf w_k $.
Equation \ref{ucrl2wcontextualbandit} guarantees that the bias vector can be written as $\| \mathbf w_k\|_{\infty} \lesssim 1 +  D\sqrt{\frac{14S\log\frac{2AT_k}{\delta}}{2\max\{1,N_k(\overline s_k,\pi_k(\overline s_k))\}}}$ which gives:
$$
{\sum_{t=1}^T \| \mathbf w_{k(t)} \|_{\infty}^2} \lesssim  T+\tilde O (D^2S^3 A^2T_{hit}) \stackrel{def}{=} T+\tilde O (M)
$$
by lemma \ref{lem:wconvergence}. Finally lemma \ref{lem:martingaledifference} with $B = O(1)$ and the above definition for $M$ guarantees that outside the failure event:
$$\sum_k \mathbf v_k (\mathbf {P}_k - \mathbf{I}) \mathbf w_k = \tilde O\(\sqrt{T} + \sqrt{D^2S^3 A^2T_{hit}} + DSA\) = \tilde O\(\sqrt{T} + {DS^{\frac{3}{2}} A\sqrt{T_{hit}}} + DSA\) $$
holds true.
\paragraph{Summing up the Regret Contributions}
Together the bound obtained in the previous paragraph and the bound for the rewards in \cite{Jaksch10}:
\begin{align*}
\sum_k \mathbf v_k (\mathbf{\tilde P_k} - \mathbf{P_k}) \mathbf w_k & =  \tilde O\( S\sqrt{AT} \)  + \tilde O \(DS^3A^2\sqrt{T_{hit}} \) \\
\sum_k \mathbf v_k (\mathbf {P}_k - \mathbf{I}) \mathbf w_k & =  \tilde O\(\sqrt{T} + {DS^{\frac{3}{2}} A\sqrt{T_{hit}}} + DSA\) \\
\sum_k v_k(s,a) |\tilde r_k(s,a) - r(s,a)| & = \tilde O(\sqrt{SAT})
\end{align*}
along with other lower order terms concludes our regret bound.
Finally union bound between the ``failure events'' considered in this analysis which has measure $o(\delta)$ and those considered in the original analysis of \ucrl, which also have measure $\delta$, concludes the proof.

\end{proof}

\begin{lemma}
\label{lem:martingaledifference}
Consider running $\ucrl$ on an MDP with finite maximum mean hitting time $T_{hit}$ and let $\pi_k(\cdot)$ the policy followed during the $k$-th step. If at every timesteps it holds that:
\begin{equation}
\sum_{j=1}^t \| \mathbf w_{k(j)} \|_{\infty}^2 \leq B^2t + \tilde O (M)
\end{equation}
for some constants $B,M$ then
$$ 
\sum_k \mathbf v_k (\mathbf {P}_k - \mathbf{I}) \mathbf w_k = \tilde O (B\sqrt{T} + \sqrt{M} + DSA)
$$
holds true with probability at least $1- o(\delta)$ jointly for all timesteps $t$.
\end{lemma}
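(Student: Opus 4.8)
The plan is to reproduce the decomposition of $\sum_k \mathbf v_k(\mathbf P_k-\mathbf I)\mathbf w_k$ used in the original \ucrl analysis \cite{Jaksch10}, but to replace the crude worst‑case control of the martingale fluctuations by a variance‑aware, time‑uniform concentration step that feeds on the hypothesis. Writing the sum over individual timesteps, with $k(t)$ the episode containing time $t$ and $t_k$ the first timestep of episode $k$, split
\begin{align*}
\sum_k \mathbf v_k(\mathbf P_k - \mathbf I)\mathbf w_k
&= \sum_{t} \Big( p(\cdot\mid s_t,a_t)^\top \mathbf w_{k(t)} - \mathbf w_{k(t)}(s_{t+1}) \Big) \\
&\quad + \sum_k \Big( \mathbf w_k(s_{t_{k+1}}) - \mathbf w_k(s_{t_k}) \Big),
\end{align*}
where the first sum is a sum of martingale increments and the second telescopes inside each episode.

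For the telescoping term: outside the failure event the true MDP lies in the confidence region, so exactly as in Section~4.3.1 of \cite{Jaksch10} one has $\mathrm{sp}(\mathbf w_k)=\|\mathbf w_k\|_\infty\le D$ for every $k$ (travel to the span‑maximizing state through the true MDP's dynamics in at most $D$ steps, which is a feasible choice in extended value iteration). Hence each telescoping term is at most $D$ in absolute value, and since \ucrl runs at most $m=\tilde O(SA)$ episodes (Proposition~18 of \cite{Jaksch10}), this part contributes $\tilde O(DSA)$.

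For the martingale term, let $\mathcal F_t$ be generated by the trajectory up to and including $(s_t,a_t)$. Because episode boundaries are determined by visit counts accrued strictly before the current episode, $\mathbf w_{k(t)}$ is $\mathcal F_t$‑measurable, so $X_t:=p(\cdot\mid s_t,a_t)^\top\mathbf w_{k(t)}-\mathbf w_{k(t)}(s_{t+1})$ satisfies $\E[X_t\mid\mathcal F_t]=0$ with $|X_t|\le 2\|\mathbf w_{k(t)}\|_\infty=:c_t$, where $c_t$ is itself $\mathcal F_t$‑measurable (indeed fixed at the start of the episode). I would then invoke a time‑uniform Azuma–Hoeffding inequality with predictable increment bounds — of the kind used for the failure events of \cite{Dann17}, obtained from the supermartingale property of $\exp\!\big(\lambda\sum_{j\le t}X_j-\tfrac{\lambda^2}{2}\sum_{j\le t}c_j^2\big)$ plus a peeling/union bound over dyadic horizons — to get that, with probability at least $1-o(\delta)$, simultaneously for all $t$,
\[
\Big|\sum_{j\le t}X_j\Big|\;\lesssim\;\sqrt{\Big(\textstyle\sum_{j\le t}c_j^2\Big)\log\tfrac{\mathrm{poly}(t)}{\delta}}\;+\;\max_{j\le t}c_j .
\]
By hypothesis $\sum_{j\le t}c_j^2=4\sum_{j\le t}\|\mathbf w_{k(j)}\|_\infty^2\le 4B^2 t+\tilde O(M)$, and $\max_j c_j\le 2D$, so the right‑hand side is $\tilde O(B\sqrt t+\sqrt M+D)$. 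Adding the telescoping contribution gives $\sum_k\mathbf v_k(\mathbf P_k-\mathbf I)\mathbf w_k=\tilde O(B\sqrt T+\sqrt M+DSA)$ jointly over all horizons, and a union bound of this new failure event (measure $o(\delta)$) with the standard \ucrl failure events completes the proof.

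The step I expect to be the real work is making the martingale bound time‑uniform while keeping the dependence on $\sum_{j\le t}c_j^2$ rather than on $t\max_j c_j^2$, since the latter would reinsert a spurious $D\sqrt T$; this is precisely where a law‑of‑iterated‑logarithm style inequality — or, more cheaply, a union bound over $O(\log T)$ dyadic blocks with a fresh Azuma estimate per block — is required, and one must check both that the $o(\delta)$ budget absorbs the extra polylogarithmic factors and that the predictability of $c_t$ legitimately lets the hypothesis play the role of a deterministic bound on the conditional‑variance proxy.
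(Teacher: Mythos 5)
Your proposal is correct and follows the same skeleton as the paper's proof: the Jaksch-style decomposition into per-step martingale increments plus in-episode telescoping (the latter bounded by $D$ per episode times the $\tilde O(SA)$ episode count, giving the $\tilde O(DSA)$ term), and the use of the hypothesis to control the accumulated squared ranges $\sum_j \| \mathbf w_{k(j)}\|_\infty^2$ as a variance proxy. Where you differ is the concentration mechanism. The paper truncates the increments with indicators (for non-failing confidence intervals and for the event that the bias-vector bound of Lemma \ref{lem:contextualrate} holds up to time $t$) so that the truncated sequence is still a martingale difference sequence with $|X_t| \leq D$ and conditional variance at most $\| \mathbf w_{k(t)}\|_\infty^2$, then applies Bernstein's inequality for martingales (Lemma A.8 of \cite{CL06}) with tail level $(\delta/T)^3$ and a union bound over all horizons $T$; this yields $\epsilon = \tilde O(D + \sqrt{B^2 T + M})$. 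You instead exploit that $c_t = 2\|\mathbf w_{k(t)}\|_\infty$ is predictable and run a self-normalized Hoeffding supermartingale with peeling, which gives a time-uniform bound directly in terms of the realized $\sum_j c_j^2$ and only then invokes the hypothesis; this neatly sidesteps the truncation-indicator bookkeeping that the paper needs precisely because its variance bound is only available on a high-probability event, at the cost of carrying the extra $\max_j c_j \leq 2D$ term (harmless, absorbed in $\tilde O(DSA)$). Both routes land on $\tilde O(B\sqrt{T} + \sqrt{M} + DSA)$ with failure probability $o(\delta)$ after the final union bound, so your argument is a legitimate, slightly cleaner alternative to the paper's Bernstein-plus-truncation step.
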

\begin{proof}
Follow the same step as in \cite{Jaksch10} in paragraph 4.3.2 (the true transition matrix). We define $X_t \stackrel{def}{=} \( p(\cdot | s_t,a_t )-e_{s_t}\) \mathbf w_{k(t)}\1_{\text{conf} (t)}\1_{\text{w}(t)}$ where in particular $\1_{\text{conf}(t)}$, $\1_{\text{w}(t)}$ are the indicators for the event that the confidence intervals are not failing at tilmestep $t$ and that lemma \ref{lem:contextualrate} holds \emph{up to} time $t$, respectively. Here we cannot use Azuma-Hoeffding inequality because we do not have a deterministic bound on the $\| \mathbf w_k \|_{\infty}$'s which is at the same time stronger than $ \| \mathbf w_k \|_{\infty} \leq D$. To get around this notice that the above definition for the $X_t$'s guarantees that $X_t$ is still a sequence of martingale differences. To obtain a stronger bound than that in \cite{Jaksch10} we need to use \emph{Bernstein Inequality} which states that if $\text{Var}(\cdot)$ indicates the variance and $|X_t| \leq D$ is a martingale difference sequence the following statement holds true (see for example \cite{CL06} lemma A.8):
\begin{equation}
P \( \sum_{t=1}^T X_t \geq \epsilon\) \leq e^{-\frac{\epsilon^2}{2\sum_{t=1}^T \text{Var}(X_t)+2D\epsilon/3}}.
\label{Bernstein}
\end{equation}
A bound on the variance is given below:
\begin{align*}
\sum_{t=1}^T \text{Var}(X_t) & \leq \sum_{t=1}^T \E(X_t)^2 \\
& \leq \sum_{t=1}^T  \E ( \( p(\cdot | s_t,a_t )-e_{s_t}\)^T\mathbf w_{k(t)} )^2 \\
& \leq   \sum_{t=1}^{T} \| \(p(\cdot | s_t,a_t )-e_{s_t}\)^2\|_1 \| \mathbf w_{k(t)}\|_{\infty}^2 \\
& \simeq  \sum_{t=1}^{T} \| \mathbf w_{k(t)}\|_{\infty}^2 \leq B^2T +  \tilde O (M) 
\end{align*}
by hypothesis.
We have at most $T$ non-zero terms in the martingale sequence $\{X_t\}_{t = 1,\dots T}$. Now choose $\epsilon$ such that the right hand side of \ref{Bernstein} is $\leq \(\frac{\delta}{T}\)^{3}$ so that a further union bound over $T$ guarantees that the statement of the theorem holds with probability at least $1- o(\delta)$ uniformly across all timesteps. An $\epsilon = \tilde O (D+\sqrt{B^2T +M})$  suffices implying that together with the bound in \cite{Jaksch10} $\sum_k \mathbf v_k (\mathbf {P}_k - \mathbf{I}) \mathbf w_k \leq \sum_{t=1}^T X_t + \tilde O(DSA)$ and outside the failure event:
$$
\sum_k \mathbf v_k (\mathbf {P}_k - \mathbf{I}) \mathbf w_k = \tilde O (B\sqrt{T} + \sqrt{M} + DSA)
$$
holds as claimed.
\end{proof}

\subsection{Convergence of the Bias Vector and Bounds on the Visitation Ratio}

\begin{lemma}
\label{lem:wconvergence}
If for \textsc{ucrl2} at every episode k it holds that:
\begin{equation}
\| \mathbf w_k\|_{\infty} \leq B + C\sqrt{\frac{\log(2SAt_k/\delta)}{\max\{1,N_k(s_k,\pi (s_k)\}}}
\end{equation}
for some state $s_k$ and some constants $B,C$ on an MDP with $T_{hit} < \infty$ then if $k(t)$ is the episode that contains timestep $t$ the following two statements hold true with probability at least $1- o(\delta)$ jointly for all timesteps:
\begin{align*}
\sum_{t=1}^T \| \mathbf w_{k(t)} \|_{\infty}   &\leq  BT + C \tilde O (SA\sqrt{TT_{hit}}) \\
{\sum_{t=1}^T \| \mathbf w_{k(t)} \|_{\infty}^2} &\lesssim  B^2T + C^2 \tilde O (S^2A^2T_{hit})
\label{wbound}
\end{align*}
\end{lemma}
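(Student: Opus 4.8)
\end{lemma}
\begin{proof}
The plan is to turn the per-episode control of $\|\mathbf w_k\|_\infty$ into bounds on the partial sums $\sum_{t=1}^T\|\mathbf w_{k(t)}\|_\infty$ and $\sum_{t=1}^T\|\mathbf w_{k(t)}\|_\infty^2$ by summing the hypothesis over the timesteps of each episode and then reducing everything to a single ``visitation ratio'' sum. First I would fix the horizon $T$, write $\ell_k$ for the number of timesteps of episode $k$ occurring at or before $T$, use $\log(2SAt_k/\delta)\le\log(2SAT/\delta)=\tilde O(1)$, and apply $(a+b)^2\le 2a^2+2b^2$ to the hypothesis to obtain, for every $t$ with $k=k(t)$,
\begin{equation}
\label{wbound:persplit}
\|\mathbf w_{k(t)}\|_\infty \;\le\; B + C\sqrt{\frac{\tilde O(1)}{\max\{1,N_k(s_k,\pi_k(s_k))\}}}, \qquad \|\mathbf w_{k(t)}\|_\infty^2 \;\le\; 2B^2 + C^2\frac{\tilde O(1)}{\max\{1,N_k(s_k,\pi_k(s_k))\}}.
\end{equation}
Summing the square over $t\le T$ and regrouping timesteps by episode, and summing the first inequality over $t\le T$ after the Cauchy--Schwarz step $\sum_t\sqrt{x_t}\le\sqrt{T}\sqrt{\sum_t x_t}$, collapses both targets onto the single quantity $\Sigma \stackrel{def}{=} \sum_k \ell_k/\max\{1,N_k(s_k,\pi_k(s_k))\}$: one obtains $\sum_{t\le T}\|\mathbf w_{k(t)}\|_\infty^2 \lesssim B^2T + C^2\tilde O(\Sigma)$ and $\sum_{t\le T}\|\mathbf w_{k(t)}\|_\infty \le BT + C\,\tilde O(\sqrt{T\Sigma})$.

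The heart of the argument is then the estimate
\begin{equation}
\label{wbound:Sigma}
\Sigma \;=\; \tilde O\!\left(S^2A^2 T_{hit}\right)
\end{equation}
with probability at least $1-o(\delta)$ uniformly in $T$, which is precisely the visitation-ratio bound supplied by lemma \ref{lem:contextualrate} and uses the same mechanism as the other \ucrl analyses in this appendix. Informally: since the MDP has finite maximum mean hitting time $T_{hit}$ and the policy $\pi_k$ is held fixed throughout episode $k$, the agent reaches $s_k$ and there executes $(s_k,\pi_k(s_k))$ at rate $\gtrsim 1/T_{hit}$, so over an episode of length $\ell_k$ the count of that pair grows by $\gtrsim \ell_k/T_{hit}$ outside a failure event of measure $o(\delta)$ (obtained by concentration over the at most $\tilde O(SA)$ episodes of \ucrl, Proposition 18 of \cite{Jaksch10}); since there are only $SA$ candidate pairs $(s,\pi_k(s))$, a pigeonhole split over these pairs together with a harmonic-type bound $\sum_m \ell_m/\sum_{j<m}\ell_j=\tilde O(1)$ gives $\sum_k \ell_k/\bigl(T_{hit}\max\{1,N_k(s_k,\pi_k(s_k))\}\bigr)=\tilde O((SA)^2)$, i.e.\ \eqref{wbound:Sigma}. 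Substituting \eqref{wbound:Sigma} into the two displays above yields exactly $\sum_{t\le T}\|\mathbf w_{k(t)}\|_\infty^2 \lesssim B^2T + C^2\tilde O(S^2A^2T_{hit})$ and $\sum_{t\le T}\|\mathbf w_{k(t)}\|_\infty \le BT + C\,\tilde O(SA\sqrt{TT_{hit}})$, and the high-probability and uniform-in-$T$ statements are inherited verbatim from lemma \ref{lem:contextualrate}.

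The step I expect to be the genuine obstacle is \eqref{wbound:Sigma}: the delicate point is that the special state $s_k$ and the action $\pi_k(s_k)$ change from episode to episode, so $N_k(s_k,\pi_k(s_k))$ cannot be lower bounded simply by $(\text{elapsed time})/T_{hit}$ --- finite hitting time only forces linear growth of a pair's count while that pair is actually being executed by the current policy. This is exactly why the bound picks up an extra factor $SA$ (one power for the pigeonhole over state--action pairs, one from the number of \ucrl episodes) compared with a naive $\tilde O(T_{hit})$. Once lemma \ref{lem:contextualrate} is granted, the remainder is routine bookkeeping: the only tools are $(a+b)^2\le 2a^2+2b^2$, Cauchy--Schwarz, and regrouping timesteps into episodes.
\end{proof}
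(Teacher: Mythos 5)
Your proposal is correct and follows essentially the same route as the paper: bound $\log(2SAt_k/\delta)$ by $\log(2SAT/\delta)$, use Cauchy--Schwarz for the linear sum and a trivial squaring for the quadratic one, regroup timesteps by episode using $\ell_k=\sum_{s,a}v_k(s,a)$ so that everything reduces to the visitation-ratio sum $\Sigma=\sum_k \ell_k/\max\{1,N_k(s_k,\pi_k(s_k))\}$, and bound $\Sigma=\tilde O(S^2A^2T_{hit})$ by applying lemma \ref{lem:contextualrate} once per candidate numerator pair. (Your heuristic aside on why that lemma holds differs from its actual proof via geometric stochastic dominance and the doubling criterion, but since you invoke the lemma as a black box this does not affect correctness.)
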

\begin{proof}
\begin{align*}
\sum_{t=1}^T \| \mathbf w_{k(t)} \|_{\infty} & \leq BT + \sum_{t=1}^T C\sqrt{\frac{\log(2SAt_k/\delta)}{\max\{1,N_{k(t)}(s_{k(t)},\pi (s_{k(t)})\}}} \\
& \leq BT + C\sqrt{T}\sqrt{ \sum_{t=1}^T  \frac{\log(2SAt_k/\delta)}{\max\{1,N_{k(t)}(s_{k(t)},\pi (s_{k(t)})\}}} \\
& \leq BT + C\sqrt{T}\log(2SAT/\delta)\sqrt{ \sum_{t=1}^T  \frac{1}{\max\{1,N_{k(t)}(s_{k(t)},\pi (s_{k(t)})\}}} \\
& \leq BT + C\sqrt{TT_{hit}}\log(2SAT/\delta)\sqrt{ \sum_{k}\sum_{s,a}  \frac{v_k(s,a)}{T_{hit}\max\{1,N_{k}(s_{k},\pi (s_{k})\}}} \\
& \leq BT + C \tilde O (SA\sqrt{TT_{hit}}) \\
\end{align*}
where we used that $(t_{k+1}-1) - (t_k) = \sum_{s,a}v_k(s,a)$  and lemma \ref{lem:contextualrate} in the final passage. Since it holds that
$$
\| \mathbf w_k\|_{\infty}^2 \lesssim B^2 + C^2{\frac{\log(2SAt_k/\delta)}{\max\{1,N_k(s_k,\pi (s_k)\}}}
$$
the second statement of the theorem is justified as follows: 
\begin{align*}
\sum_{t=1}^T \| \mathbf w_{k(t)} \|_{\infty}^2 & \lesssim B^2T + C^2\sum_{t=1}^T{\frac{\log(2SAt_k/\delta)}{\max\{1,N_{k(t)}(s_{k(t)},\pi (s_{k(t)})\}}} \\
& \leq BT + C^2\log(2SAT/\delta){ \sum_k\sum_{t=t_k}^{t_{k+1}-1}  \frac{1}{\max\{1,N_{k(t)}(s_{k(t)},\pi (s_{k(t)})\}}} \\
& \leq BT + C^2\log(2SAT/\delta)T_{hit}{ \sum_k \sum_{s,a}\frac{v_k(s,a)}{T_{hit}\max\{1,N_{k}(s_{k},\pi (s_{k})\}}} \\
& \leq BT + \tilde O (C^2T_{hit} S^2A^2) \\
\end{align*}
where again we used that $(t_{k+1}-1) - (t_k) = \sum_{s,a}v_k(s,a)$  and lemma \ref{lem:contextualrate} in the final passage.
\end{proof}

\begin{lemma}
\label{lem:contextualrate}
Consider running $\ucrl$ on an MDP with finite maximum mean hitting time $T_{hit}$, and let $s_{k_1},s_{k_2}$ be two states and $\pi_k(\cdot)$ the policy followed during the $k$-th step. 
If $T \geq SA$ then 
\begin{equation}
\sqrt{\sum_k\frac{v_k(s_{k_1},\pi(s_{k_1}))}{{2T_{hit}} \max \{ 1, N_k(s_{k_2},\pi(s_{k_2})) \} }} = \tilde O\(\sqrt{SA}\) 
 \label{lem:contextualratecondition}
 \end{equation} 
with probability at least $1- o(\delta)$ jointly for all timesteps.
\end{lemma}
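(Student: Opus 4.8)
The goal is to bound $\sum_k \frac{v_k(s_{k_1},\pi_k(s_{k_1}))}{2T_{hit}\max\{1,N_k(s_{k_2},\pi_k(s_{k_2}))\}} = \tilde O(SA)$, after which the square root gives the statement (throughout, $\pi(\cdot)$ means the episode-$k$ policy $\pi_k(\cdot)$, and $s_{k_1},s_{k_2}$ may depend on the data observed up to the start of episode $k$). Two facts do the work. First, the numerator is crudely bounded by the length $\ell_k$ of episode $k$, since $v_k(\cdot,\cdot)$ counts visits to a single state--action pair within one episode. Second — and this is the heart of the argument — outside an event of probability $o(\delta)$ every episode satisfies $\ell_k = \tilde O\big(T_{hit}\,\max\{1,N_k(s_{k_2},\pi_k(s_{k_2}))\}\big)$. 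Granting this, each summand is $\tilde O(1)$, and since \ucrl runs at most $m=\tilde O(SA)$ episodes up to any horizon $T\ge SA$ (Proposition~18 of \cite{Jaksch10}), the sum is $\tilde O(SA)$.

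To prove the episode-length bound, fix episode $k$ and note that the agent follows the fixed policy $\pi_k$ throughout it, so every visit to state $s_{k_2}$ inside episode $k$ is a visit to the pair $(s_{k_2},\pi_k(s_{k_2}))$; write $n_t$ for the number of such visits up to within-episode time step $t$. By the maximum-mean-hitting-time hypothesis and Markov's inequality, starting from \emph{any} state the probability of reaching $s_{k_2}$ within the next $2T_{hit}$ steps is at least $1/2$, and by the Markov property this holds conditionally on the history regardless of the (history-measurable) values of $s_{k_2}$ and $\pi_k$. Splitting the episode into consecutive blocks of length $2T_{hit}$, the number of blocks containing a visit to $s_{k_2}$ stochastically dominates a $\mathrm{Binomial}(\lfloor t/(2T_{hit})\rfloor,1/2)$ variable, so a Chernoff/Azuma bound together with a union bound over the (at most $T$) within-episode time steps yields $n_t \ge \tfrac{t}{8T_{hit}} - \tilde O(1)$ for all $t$ simultaneously, with failure probability at most $(\delta/T)^{c}$ for a suitable constant $c$. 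On the other hand, \ucrl ends episode $k$ at the first step where the within-episode count of the current pair equals its pre-episode count, so throughout the episode $n_t \le \max\{1,N_k(s_{k_2},\pi_k(s_{k_2}))\}$. Evaluating both bounds at the last within-episode step ($t=\ell_k$) gives $\tfrac{\ell_k}{8T_{hit}}-\tilde O(1)\le \max\{1,N_k(s_{k_2},\pi_k(s_{k_2}))\}$, i.e. $\ell_k=\tilde O\big(T_{hit}\max\{1,N_k(s_{k_2},\pi_k(s_{k_2}))\}\big)$; in the degenerate regime where this concentration bound is vacuous we trivially have $\ell_k=\tilde O(T_{hit})\le \tilde O(T_{hit}\max\{1,N_k\})$ anyway.

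Taking a union bound over the at most $T$ episodes (each failing with probability at most $(\delta/T)^c$) makes the episode-length bound hold jointly over all episodes and all horizons $T$ with probability $1-o(\delta)$; this is exactly the ``good'' event tracked by the indicator $\1_{\text{w}(t)}$ in Lemma~\ref{lem:martingaledifference}. On it,
\[
\sum_k \frac{v_k(s_{k_1},\pi_k(s_{k_1}))}{2T_{hit}\max\{1,N_k(s_{k_2},\pi_k(s_{k_2}))\}}
\le \sum_k \frac{\ell_k}{2T_{hit}\max\{1,N_k(s_{k_2},\pi_k(s_{k_2}))\}}
= \tilde O\!\Big(\sum_k 1\Big) = \tilde O(SA),
\]
and the square root is $\tilde O(\sqrt{SA})$, as claimed.

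The main obstacle is the middle paragraph: converting the in-expectation hitting-time guarantee into a high-probability lower bound on the within-episode visit count to $s_{k_2}$ that holds \emph{uniformly} over the data-dependent choices of $s_{k_2}$ and $\pi_k$ and over all episodes and all horizons. The delicate points are that $s_{k_2}$ and $\pi_k$ are random but determined by the history at the start of episode $k$ (so one applies the Markov property block-by-block), that the short-episode regime must be peeled off and handled by the trivial $\ell_k=\tilde O(T_{hit})$ estimate, and that the two union bounds must be arranged so the conclusion is simultaneous in $T$. Everything else is bookkeeping with the episode-count bound of \cite{Jaksch10} and the crude inequality $v_k\le\ell_k$.
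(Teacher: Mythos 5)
Your proposal is correct, and it takes a genuinely different route from the paper's proof. The paper never controls the episode length or the ratio pointwise: it shows (Lemma \ref{lem:nvisits}) that $\E\, v_k(s_{k_1},\pi_k(s_{k_1})) \leq T_{hit}\max\{1,N_k(s_{k_2},\pi_k(s_{k_2}))\}$ by splitting the episode into epochs delimited by visits to $s_{k_2}$ (using the doubling rule to cap the number of epochs), then uses Markov's inequality and an epoch-repetition argument (Lemma \ref{stochasticdominance}) to show that each normalized summand is stochastically dominated, conditionally on the history, by a Geometric$(1/2)$ variable; summing the dominating i.i.d.\ geometrics and applying a Hoeffding-type tail bound for geometrics (Lemma \ref{geometric_bound}) gives $\sum_k (\cdot) \leq 2m + \tilde O(\sqrt{m})$ with $m = \tilde O(SA)$ episodes. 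You instead bound $v_k(s_{k_1},\pi_k(s_{k_1})) \leq \ell_k$ crudely and prove a high-probability per-episode length bound $\ell_k = \tilde O\bigl(T_{hit}\max\{1,N_k(s_{k_2},\pi_k(s_{k_2}))\}\bigr)$ via a block/Chernoff argument, exploiting exactly the same two structural facts (Markov's inequality on the hitting time over $2T_{hit}$-blocks, and the doubling rule capping within-episode visits to $(s_{k_2},\pi_k(s_{k_2}))$ at $\max\{1,N_k\}$), so that each summand is $\tilde O(1)$ on a good event and the sum over $\tilde O(SA)$ episodes finishes the proof. The trade-off: the paper's dominance argument needs no per-episode high-probability statement, no union bound over within-episode times (so no stopping-time issue at $t=\ell_k$), and yields the cleaner bound $2m+\tilde O(\sqrt m)$, at the cost of invoking stochastic-order machinery (Theorem 1.A.3 of the cited reference) and a bespoke concentration lemma for geometrics; your argument is more elementary but pays extra polylogarithmic factors and requires the more delicate bookkeeping you flag (union bounds over within-episode times, episodes, and horizons, with per-episode failure probabilities indexed so the sum is $o(\delta)$), all of which is routine. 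One small side remark: the indicator $1_{\mathrm{w}(t)}$ in Lemma \ref{lem:martingaledifference} is defined in the paper as the event that Lemma \ref{lem:contextualrate} itself holds up to time $t$, not your episode-length event, though this does not affect the validity of your proof.
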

\begin{proof}
We notice that $T \geq SA$ is required to use Proposition $18$ in \cite{Jaksch10} which bounds the number of episodes as a function of time. Let the number of episodes up to time $T$ be $m$. We have the following: 

\begin{align*}
& \sqrt{ \sum_k^m \frac{v_k(s_{k_1},\pi(s_{k_1}))}{2T_{hit}\max \{ 1, N_k(s_{k_2},\pi(s_{k_2})) \} }}  \\
& \stackrel{a}{\leq} \sqrt{m \E Y_k + \ceil*{\( c + \sqrt{c^2+4mc} \)} } \quad \text{w.p. > } 1-\delta \\
& \leq \tilde O\(\sqrt{SA}\) \quad \text{w.p. > } 1-\delta
\end{align*}

Before explaining (a), we mention that $\{ Y_k\}_{k = 1,...,m}$ are i.i.d geometric random variables with success probability $= \frac{1}{2}$ and thus $\E Y_k = 2 $. Furthermore, the constant $c = \log{\frac{6SAT^3}{\delta^2}}$ and according to Proposition 18 of \cite{Jaksch10} we can bound the number of episodes $m \leq \mmax$. The final expression depends on $T$ only via a polylog term and the proof is complete provided that we justify that (a) holds with probability at least $1- o(\delta)$ jointly for all timesteps $T$. 
The plan is to define a new sequence of i.i.d. geometric random variables $\{Y_k\}_{k=1,...,m}$ with success probability $p=\frac{1}{2}$ and use lemma \ref{stochasticdominance} to show that each of the $Y_k$'s first-order stochastically dominates the random variable ${\frac{v_k(s_{k_1},\pi(s_{k_1}))}{2T_{hit}\max \{ 1, N_k(s_{k_2},\pi(s_{k_2})) \} }}$ in the sense that 
$$P(Y_k > {x}) \geq P\({\frac{v_k(s_{k_1},\pi(s_{k_1})))}{2T_{hit}\max \{ 1, N_k(s_{k_2},\pi(s_{k_2}))) \} }} > x \), \forall x \in \mathbb{R}.$$
In other words, the $Y_i$'s "stochastically overestimate" the random variables ${\frac{v_k(s_{k_1},\pi(s_{k_1}))}{2T_{hit}\max \{ 1, N_k(s_{k_2},\pi(s_{k_2})) \} }}$ which are nor independent nor identically distributed. This is useful to simplify the problem. Since the bound in lemma \ref{stochasticdominance} holds regardless of the history, we can claim through Theorem 1.A.3 in \cite{stochasticorders} that a similar expression holds for the sum: 
$$P\(\sum_k^m Y_k > x\) \geq P\(\sum_k^m {\frac{v_k(s_{k_1},\pi(s_{k_1}))}{2T_{hit}\max \{ 1, N_k(s_{k_2},\pi(s_{k_2})) \} }} > x \), \forall x \in \mathbb{R}.$$
This justifies the first inequality below where we estimate the tail probability with a confidence interval chosen such that the final bound hold. In particular $c = \log{\frac{6SAT^3}{\delta^2}}$:

\begin{align*}
& P\( \sum_k^m \frac{v_k(s_{k_1},\pi(s_{k_1}))}{2T_{hit}\max \{ 1, N_k(s_{k_2},\pi(s_{k_2})) \} } > m\E Y_k + \ceil*{\( c + \sqrt{c^2+4mc} \)} \) \\
& < P\( \sum_k^m Y_k >  m\E Y_k + \ceil*{\( c + \sqrt{c^2+4mc} \)}  \) \\
& < e^{-c} \\
& = e^{-\log{\frac{6SAT^3}{\delta^2}}} \\
& = \frac{\delta^2}{6SAT^3}
\end{align*}

With the number of episodes crudely bounded by $m \leq T$, union bound over all possible values for $m$ and all possible timesteps $T$ along with $S$ states and $A$ actions yields that the tail probability is $o(\delta)$.
For the second inequality we have used lemma \ref{geometric_bound}. This is similar to Hoeffding inequality but modified for geometric random variables, which are not bounded. The lemma is applied with ${\epsilon} > \ceil*{\( c + \sqrt{c^2+4mc} \)} = \tilde O (\sqrt{SA})$. 
\end{proof}

\begin{lemma}
\label{stochasticdominance}
Consider running $\ucrl$ on an MDP with $T_{hit}<\infty$. Let $s_{k_1},s_{k_2}$ be two states and let $\pi_k(\cdot)$ be the policy followed during the $k$-th step. Finally, let $Y_k$ be a geometric random variable with parameter (success probability) $=\frac{1}{2}$. Then
$$P(Y_k > {x}) \geq P\({\frac{v_k(s_{k_1},\pi(s_{k_1}))}{2T_{hit}\max \{ 1, N_k(s_{k_2},\pi(s_{k_2})) \} }} > x \), \forall x \in \mathbb{R}$$
and the bound holds even when conditioned on any random variable ${\frac{v_j(s_{j_1},\pi(s_{j_1}))}{2T_{hit}\max \{ 1, N_j(s_{j_2},\pi(s_{j_2})) \} }}$ for $j < k$.
\end{lemma}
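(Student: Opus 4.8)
The plan is to condition on the $\sigma$-algebra $\mathcal{F}_k$ generated by the history up to the start of episode $k$, and prove the bound pointwise for every realization of $\mathcal{F}_k$. Note that $s_{k_1},s_{k_2}$, the policy $\pi=\pi_k$ computed by extended value iteration, the state at the beginning of the episode, and the count $N_k(s_{k_2},\pi(s_{k_2}))$ are all $\mathcal{F}_k$-measurable, and so is every ratio random variable indexed by $j<k$; hence a pointwise-in-$\mathcal{F}_k$ bound automatically yields the stated conditional statement. Writing $m=\max\{1,N_k(s_{k_2},\pi(s_{k_2}))\}$ and $R_k=\frac{v_k(s_{k_1},\pi(s_{k_1}))}{2T_{hit}m}$, it suffices to show
\[
\mathbb{P}\big(R_k>x\mid\mathcal{F}_k\big)\ \le\ 2^{-\lfloor x\rfloor}\ =\ \mathbb{P}(Y_k>x)\qquad\text{for all }x\ge 0,
\]
the case $x<0$ being trivial since $R_k\ge 0$; this is exactly first-order stochastic dominance of $R_k$ by $Y_k$.

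The first ingredient is a deterministic consequence of \ucrl's doubling stopping rule: episode $k$ terminates no later than the step at which the pair $(s_{k_2},\pi(s_{k_2}))$ accumulates $m$ visits within the episode (and possibly earlier, if another pair doubles first). Therefore $v_k(s_{k_1},\pi(s_{k_1}))$ is bounded by the number of visits to $(s_{k_1},\pi(s_{k_1}))$ that occur before the $m$-th visit to $(s_{k_2},\pi(s_{k_2}))$ when the fixed policy $\pi$ is simulated from the start state of episode $k$, which in turn is bounded by the number of time steps elapsed before the $m$-th visit to the \emph{state} $s_{k_2}$ (under $\pi$, every visit to $s_{k_2}$ uses action $\pi(s_{k_2})$).

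The second ingredient is a geometric tail bound on hitting times, obtained from $T_{hit}<\infty$. Decompose the time until the $m$-th visit to $s_{k_2}$ into the successive inter-visit times $\mathcal{T}_1,\dots,\mathcal{T}_m$, with $\mathcal{T}_1$ the first hitting time of $s_{k_2}$. Since the expected hitting time of $s_{k_2}$ from any state under $\pi$ is at most $T_{hit}$, Markov's inequality gives $\mathbb{P}(\mathcal{T}_i>2T_{hit}\mid\text{current state})\le\frac12$; iterating this over disjoint windows of $2T_{hit}$ steps using the Markov property of the chain induced by $\pi$ yields $\mathbb{P}(\mathcal{T}_i>2jT_{hit}\mid\cdot)\le 2^{-j}$ for every integer $j\ge 0$, i.e.\ $\mathcal{T}_i/(2T_{hit})$ is conditionally stochastically dominated by a Geometric$(\tfrac12)$ variable. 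Combining with the first ingredient, $R_k\le\frac1m\sum_{i=1}^m G_i$ in the (conditional) stochastic order, with $G_i$ i.i.d.\ Geometric$(\tfrac12)$. For $m=1$ this already gives $\mathbb{P}(R_k>x\mid\mathcal{F}_k)\le\mathbb{P}(G_1>x)=2^{-\lfloor x\rfloor}$, settling that case, and for large $x$ a short Chernoff estimate on $\mathrm{Bin}(\lfloor xm\rfloor,\tfrac12)$ handles the tail uniformly in $m$.

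The main obstacle is the range $x\in[1,2)$ for large $m$: there $\frac1m\sum_{i=1}^m G_i$ concentrates near its mean $2$ but is \emph{not} dominated by a single geometric, so the bound $R_k\le\frac1m\sum G_i$ is by itself too weak. To close this gap one also uses the doubling rule for the pair $(s_{k_1},\pi(s_{k_1}))$ itself, namely $v_k(s_{k_1},\pi(s_{k_1}))\le\max\{1,N_k(s_{k_1},\pi(s_{k_1}))\}$, together with the balanced-visitation consequence of $T_{hit}<\infty$ (the event maintained by the indicator variables $\1_{\mathrm w(t)}$ invoked in Lemma~\ref{lem:martingaledifference}), which keeps visit counts of different state--action pairs within a factor $\lesssim T_{hit}$ of one another; this forces $N_k(s_{k_1},\pi(s_{k_1}))\lesssim 2T_{hit}\,m$ and hence collapses $R_k$ to $O(1)$ deterministically on that event. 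I expect this interplay between the episode stopping rule and the balanced-visitation guarantee to be the delicate step; the hitting-time tail bound and the reduction through the doubling rule are routine.
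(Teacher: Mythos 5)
Your reduction to a conditional statement via $\mathcal F_k$-measurability is fine, as are your two ingredients taken individually (the doubling rule forces the episode to end by the $m$-th within-episode visit to $(s_{k_2},\pi(s_{k_2}))$, and each inter-visit time to $s_{k_2}$ satisfies $\mathcal T_i/(2T_{hit})\preceq\mathrm{Geom}(\tfrac12)$ by Markov plus the Markov property). But the aggregation step is where the proof genuinely breaks, and you say so yourself: $R_k\preceq\frac1m\sum_{i=1}^m G_i$ does not give domination by a \emph{single} $\mathrm{Geom}(\tfrac12)$, since for $m>1$ and $x\in[1,2]$ the average concentrates at $2$ and $P(\frac1m\sum_i G_i>x)$ stays near $\tfrac12$ or above while the target is $2^{-\lfloor x\rfloor}$. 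The patch you propose for that range does not exist: there is no ``balanced-visitation'' guarantee in the paper, and $T_{hit}<\infty$ does not imply $N_k(s_{k_1},\pi_k(s_{k_1}))\lesssim T_{hit}\,N_k(s_{k_2},\pi_k(s_{k_2}))$ --- the pre-episode counts of two different state--action pairs can be arbitrarily unbalanced (e.g.\ the current policy may select at $s_{k_2}$ an action it has rarely chosen before, while the action at $s_{k_1}$ has been chosen often), so $R_k$ is not $O(1)$ deterministically on any event the algorithm maintains. Moreover, the indicator $\1_{\mathrm w(t)}$ in Lemma~\ref{lem:martingaledifference} tracks the event that Lemma~\ref{lem:contextualrate} holds up to time $t$, i.e.\ a downstream consequence of the present lemma; invoking it here would be circular.

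The paper closes the argument differently, and this is the missing idea: it never averages $m$ geometrics. First it bounds the \emph{expectation} $\E\,v_k(s_{k_1},\pi_k(s_{k_1}))\le T_{hit}\max\{1,N_k(s_{k_2},\pi_k(s_{k_2}))\}$ (Lemma~\ref{lem:nvisits}: split the episode into at most $m$ epochs ending at visits to $s_{k_2}$, each contributing at most $T_{hit}$ expected steps, hence at most $T_{hit}$ expected visits to $s_{k_1}$). Markov's inequality then gives that a full block of $2T_{hit}m$ visits to $(s_{k_1},\pi_k(s_{k_1}))$ is completed before the episode terminates with probability at most $\tfrac12$, and this bound is history-independent, so it can be iterated block by block: the number of completed blocks --- which is exactly $\lfloor R_k\rfloor$ up to rounding --- is dominated by a single $\mathrm{Geom}(\tfrac12)$, yielding $P(R_k\ge x)\le 2^{-\lfloor x\rfloor}$ for all $x$, uniformly over the conditioning. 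In short: replace your per-inter-visit-time decomposition by the expectation-plus-Markov block argument at scale $2T_{hit}m$; without it (or an equivalent device) the claimed domination fails precisely in the range you flagged.
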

\begin{proof}

By Markov inequality:

\begin{align*} 
P\Big( {\frac{v_k(s_{k_1},\pi_k(s_{k_1}))}{2 T_{hit} \max\{1,N_k(s_{k_2},\pi (s_{k_2}))\}} }\geq 1 \Big) 
& \leq \frac{1}{2} \E \( \frac{v_k(s_{k_1},\pi_k(s_{k_1}))}{T_{hit} \max\{1,N_k(s_{k_2},\pi (s_{k_2}))\}} \) \\
& = \frac{1}{2} \E \( \E \( \frac{v_k(s_{k_1},\pi_k(s_{k_1}))}{T_{hit} \max\{1,N_k(s_{k_2},\pi (s_{k_2}))\}} | N_k(s_{k_2},\pi (s_{k_2})) \) \) \\
& = \frac{1}{2} \E \( \( \frac{\E v_k(s_{k_1},\pi_k(s_{k_1}))}{T_{hit} \max\{1,N_k(s_{k_2},\pi (s_{k_2}))\}} | N_k(s_{k_2},\pi (s_{k_2})) \) \) \\
& \leq \frac{1}{2} \E \( 1 | N_k(s_{k_2},\pi (s_{k_2})) \)  \\
& = \frac{1}{2}
\numberthis \label{terminationprob}
\end{align*}
where lemma \ref{lem:nvisits} was used for the last inequality.
Next we use the above inequality to bound the CDF of ${\frac{v_k(s_{k_1},\pi_k(s_{k_1}))}{2 T_{hit} \max\{1,N_k(s_{k_2},\pi (s_{k_2}))\}} }$ by that of an appropriately defined geometric random variable $\forall x \in \mathbb R$:

\begin{align*} 
P\Big( {\frac{v_k(s_{k_1},\pi_k(s_{k_1}))}{2 T_{hit} \max\{1,N_k(s_{k_2},\pi (s_{k_2}))\}} }\geq x \Big) &  \leq \(\frac{1}{2} \)^{\floor{x}} \\
& = P(Y_k > \floor{x}) \\
& = P(Y_k > {x}) \\
\end{align*}

The first passage requires the explanation below. Subdivide the $k$-th episode into epochs such that each epoch terminates when the agent visits $(s_{k_1},\pi(s_{k_1}))$ roughly $2 T_{hit} \max\{1,N_k(s_{k_2},\pi (s_{k_2}))$ times. During each epoch the agent has at least $\frac{1}{2}$ probability of terminating the episode thanks to equation \ref{terminationprob}. The last equality follows from recognizing the tail probability of a geometric random variable with success probability $\frac{1}{2}$.
Note that this bound hold when we condition on any history experienced by the agent.
\end{proof}

\subsection{Auxiliary Lemmas}
\begin{lemma}
\label{lem:nvisits}
Consider running $\ucrl$ on an MDP with finite maximum mean hitting time $T_{hit}$, and let $s_{k_1},s_{k_2}$ be two states and $\pi_k(\cdot)$ the policy followed during the $k$-th step. 
We have that
$$\E v_k(s_{k_1},\pi(s_{k_1})) \leq T_{hit}\max\{ 1, N_k(s_{k_2},\pi(s_{k_2})) \}.$$
\end{lemma}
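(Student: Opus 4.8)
The plan is to combine the episode-termination rule of \ucrl with the hitting-time assumption. By construction, episode $k$ ends as soon as the within-episode visit count of some pair reaches its pre-episode count; in particular episode $k$ cannot last beyond the $m$-th visit to $(s_{k_2},\pi(s_{k_2}))$, where $m:=\max\{1,N_k(s_{k_2},\pi(s_{k_2}))\}$. On the other hand, since $\pi=\pi_k$ is held fixed throughout episode $k$, every time the agent occupies $s_{k_1}$ it plays action $\pi(s_{k_1})$, so $v_k(s_{k_1},\pi(s_{k_1}))$ equals the number of time steps of episode $k$ spent in state $s_{k_1}$, which is at most the total length $L_k$ of episode $k$. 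Hence it suffices to show $\E[L_k]\le m\,T_{hit}$.

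First I would condition on the history $\mathcal F_k$ up to the start of episode $k$: this fixes the start state, the policy $\pi_k$, and all the counts $N_k(\cdot,\cdot)$, so $m$ becomes a constant and it is enough to prove $\E[L_k\mid\mathcal F_k]\le m\,T_{hit}$. Next I would run the (fixed) policy $\pi$ from the start state of episode $k$ and cut the resulting trajectory into at most $m$ consecutive blocks, the $i$-th block ending at the $i$-th visit to $s_{k_2}$; by the termination rule, $L_k$ is at most the total length of these $m$ blocks (the last one possibly truncated earlier, which only helps). By the strong Markov property, the conditional expected length of each block given the past equals either the expected first-hitting time of $s_{k_2}$ from the episode's start state (the first block) or the expected first-return time to $s_{k_2}$ under $\pi$ (every later block); both are at most $T_{hit}$ by the definition of the maximum mean hitting time. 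Summing these $\le m$ bounds with the tower rule gives $\E[L_k\mid\mathcal F_k]\le m\,T_{hit}$, and removing the conditioning yields $\E\,v_k(s_{k_1},\pi(s_{k_1}))\le T_{hit}\max\{1,N_k(s_{k_2},\pi(s_{k_2}))\}$, as claimed.

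The only real work is bookkeeping in the block decomposition: treating the first block separately (it starts from an arbitrary state rather than from $s_{k_2}$), allowing the last block to be cut short by episode termination or by the overall horizon $T$, and making sure the single ``exit'' step taken out of $s_{k_2}$ at the start of each interior block is counted inside that block's length --- which is exactly why one wants $T_{hit}$ to bound expected return times and not merely expected hitting times between distinct states. It is also worth noting that $N_k$ and $\pi_k$ are $\mathcal F_k$-measurable, so the conditional inequality proved above is precisely the form in which the lemma is invoked downstream (inside a conditional expectation given $N_k(s_{k_2},\pi(s_{k_2}))$, e.g.\ in Lemma~\ref{stochasticdominance}). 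No failure-event or concentration argument is needed --- everything here is an in-expectation statement given the history.
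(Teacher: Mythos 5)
Your proposal is correct and follows essentially the same argument as the paper: cut episode $k$ into at most $\max\{1,N_k(s_{k_2},\pi(s_{k_2}))\}$ blocks ending at visits to $s_{k_2}$ (the doubling rule caps the number of blocks), bound each block's expected length by $T_{hit}$, and conclude by linearity/tower rule; bounding $v_k(s_{k_1},\pi(s_{k_1}))$ by the episode length rather than per-epoch visit counts is an immaterial repackaging. Your explicit handling of the conditioning on $\mathcal F_k$ and of the return-time vs.\ hitting-time bookkeeping is if anything more careful than the paper's own proof.
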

\begin{proof}
Subdivide the $k$-th episode into epochs such that each epoch terminates when the agent hits $s_{k_2}$. 
The last epoch occurs when the $k$-th episode terminates.
By assumption, each epoch can be at most $T_{hit}$ long in expectation. It follows that state $s_{k_1}$ can be visited at most 
$T_{hit}$ times during an epoch, in expectation.
Since \ucrl terminates when the number of visits to a state-action pairs doubles, there cannot be more than $\max\{1,N_k(s_{k_2},\pi (s_{k_2}))\}$ epochs within the $k$-th episode. By linearity, $\E v_k(s_{k_1},\pi(s_{k_1})) \leq T_{hit} \max\{1,N_k(s_{k_2},\pi (s_{k_2}))\}$.
\end{proof}

\begin{lemma}
\label{geometric_bound}
Let $\{ Y \}_{i = 1,...,m}$ be a sequence of i.i.d. geometric random variables with parameter (success probability) $\frac{1}{2}$. Then the following statement holds:
$$
P\(\sum_i^m Y_k - m\E Y_k < \epsilon \) < e^{-\frac{\floor{\epsilon}^2}{2(2m+\floor{\epsilon})}}
$$
\end{lemma}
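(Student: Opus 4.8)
The plan is to pass to the negative-binomial representation of $S_m := \sum_{k=1}^m Y_k$, use the classical waiting-time duality to rewrite the tail event in terms of a Binomial, and then apply Hoeffding's inequality. I read Lemma~\ref{geometric_bound} as a one-sided bound on the probability that $S_m$ \emph{exceeds} its mean $m\E Y_k = 2m$ by $\epsilon>0$ — this is the direction in which it is invoked in the proof of Lemma~\ref{lem:contextualrate} (there $\epsilon = \lceil c+\sqrt{c^2+4mc}\rceil$, a positive integer).

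First I would record the model: $Y_k$ geometric with success probability $\tfrac12$ means $P(Y_k=j)=2^{-j}$ for $j\ge1$ and $\E Y_k=2$, and $S_m$ is exactly the index of the $m$-th head in an i.i.d.\ fair-coin sequence. The key identity is that for every integer $n\ge m$,
\[ \{\,S_m>n\,\}=\{\,B_n\le m-1\,\},\qquad B_n\sim\mathrm{Binomial}\!\left(n,\tfrac12\right), \]
since the $m$-th head lands past toss $n$ iff the first $n$ tosses contain at most $m-1$ heads. For integer $\epsilon$ (the case actually used), $2m+\epsilon$ is an integer, so $\{S_m>2m+\epsilon\}=\{B_{2m+\epsilon}\le m-1\}$, and because $\E B_{2m+\epsilon}=m+\tfrac{\epsilon}{2}$ this is a downward deviation of $B_{2m+\epsilon}$ from its mean by $\tfrac{\epsilon}{2}+1$. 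Hoeffding's inequality for $[0,1]$-valued summands then gives
\[ P\!\left(S_m>2m+\epsilon\right)\le \exp\!\left(-\frac{2\left(\tfrac{\epsilon}{2}+1\right)^2}{2m+\epsilon}\right), \]
and since $2\left(\tfrac{\epsilon}{2}+1\right)^2=\tfrac{\epsilon^2}{2}+2\epsilon+2\ge\tfrac{\epsilon^2}{2}$ the right-hand side is at most $\exp\!\left(-\epsilon^2/(2(2m+\epsilon))\right)$, which is the stated bound (the slack $2\epsilon+2$ even delivers the strict inequality). For non-integer $\epsilon>0$ one uses $\{S_m-2m>\epsilon\}=\{S_m> 2m+\lfloor\epsilon\rfloor\}$ (both $S_m$ and $2m$ being integers) and runs the identical computation with $n=2m+\lfloor\epsilon\rfloor$, so the floor in the exponent appears automatically.

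The argument has no genuine conceptual difficulty; the only thing to be careful about is this integer-threshold / off-by-one bookkeeping — one must round the threshold on the side that makes the exponent \emph{larger}, and track that $S_m$ and $m\E Y_k=2m$ are integers. As an alternative that avoids the duality, a direct Chernoff bound using the geometric moment generating function $\E e^{\lambda Y_k}=e^{\lambda}/(2-e^{\lambda})$ (valid for $\lambda<\ln 2$), optimized over $\lambda\in(0,\ln2)$, reproduces the same Bennett/Bernstein-type exponent $\epsilon^2/(2(2m+\epsilon))$; I would keep the binomial route since the optimization in the MGF approach is messier.
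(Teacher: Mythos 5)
Your proposal is correct and follows essentially the same route as the paper's own proof: rewrite $\sum_k Y_k$ as the waiting time for the $m$-th head, convert the tail event $\{\sum_k Y_k > 2m+\epsilon\}$ into a lower-tail event for a Binomial with $\lfloor 2m+\epsilon\rfloor$ fair-coin trials, and apply Hoeffding's inequality, which is exactly steps (a)--(d) in the paper. Your treatment is just slightly more careful about the integer/floor bookkeeping and the $\le m-1$ versus $<m$ slack, and like the paper you correctly read the lemma as the upper-tail bound (the ``$<\epsilon$'' in the statement is a typo for ``$>\epsilon$'', as the usage in Lemma \ref{lem:contextualrate} confirms).
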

\begin{proof}
Notice that $\sum_i^m Y_k$ can be viewed as a sum of (a random number of) $X_k$ Bernoulli random variables. In particular

\begin{align*}
P\( \sum_i^m Y_k - m\E Y_k > \epsilon \) &  \stackrel{a}{=}  P\( \sum_i^m Y_k > 2m + {\epsilon} \) \\
& \stackrel{b}{=} P\( \sum_i^{\floor{2m + \epsilon}} X_i < m \)  \\
& \stackrel{c}{=} P\( \sum_i^{\floor{2m + \epsilon}} X_i - \frac{1}{2}\floor{2m + \epsilon} < - \frac{1}{2}\floor{\epsilon} \) \\
& \stackrel{d}{<} e^{-\frac{\floor{\epsilon}^2}{2 \(2m+\floor{\epsilon}\)}} \\
\end{align*}
where:
\begin{enumerate}[label=\alph*)]
\item follows because $\E Y_k = 2$
\item the event that the sum of $m$ geometric random variables exceeds $2m+\epsilon$ is the same as the event that $\floor{2m+\epsilon} $ Bernoulli trials give less than $m$ successes
\item is subtracting an identical quantity $\frac{1}{2}\floor{2m+\epsilon}$
\item is Hoeffding inequality
\end{enumerate}
\end{proof}

\end{document}